\documentclass[10pt]{article}

\usepackage{amsmath,amsthm,verbatim,amssymb,amsfonts,amscd, graphicx, enumitem}
\usepackage{graphics}
\usepackage{centernot}
\usepackage{authblk}
\usepackage[T1]{fontenc}
\theoremstyle{plain}
\newtheorem{theorem}{Theorem}
\newtheorem{corollary}{Corollary}
\newtheorem{lemma}{Lemma}
\newtheorem*{remark}{Remark}
\newtheorem{proposition}{Proposition}

\newtheorem{definition}{Definition}

\usepackage[colorlinks,linkcolor=blue,citecolor=blue]{hyperref}

\usepackage[bottom]{footmisc}
\usepackage{caption}
\usepackage{subcaption}
\usepackage{helvet}  
\usepackage{courier}  
\usepackage{url}  
\usepackage{graphicx}  
\usepackage{multirow}
\usepackage{amsthm}
\usepackage{color}
\usepackage{MnSymbol}
\usepackage{makecell}
\usepackage{arydshln}
\usepackage{amsmath}
\usepackage[dvipsnames]{xcolor}
\usepackage{caption} 
\usepackage{natbib}
\usepackage{bbm}

\usepackage{textcomp}
\usepackage{wrapfig}
\usepackage{algorithm}
\usepackage{algorithmic}

\usepackage{csquotes}

\newcommand{\E}{\mathbb{E}}

\newcommand{\Tr}{{\rm Tr}}

\title{\textbf{Does SGD Seek Flatness or Sharpness? \\ 
  An Exactly Solvable Model}}
\author{Yizhou Xu$^1$, Pierfrancesco Beneventano$^2$, Isaac Chuang$^2$, Liu Ziyin$^{2,3}$\\
$^1$\textit{École Polytechnique Fédérale de Lausanne}\\
$^2$\textit{Massachusetts Institute of Technology}\\
$^3$\textit{NTT Research}}
\begin{document}

\maketitle

\begin{abstract}
A large body of theory and empirical work hypothesizes a connection between the flatness of a neural network's loss landscape during training and its performance. However, there have been conceptually opposite pieces of evidence regarding when SGD prefers flatter or sharper solutions during training. In this work, we partially but causally clarify the flatness-seeking behavior of SGD by identifying and exactly solving an analytically solvable model that exhibits both flattening and sharpening behavior during training. In this model, the SGD training has no \textit{a priori} preference for flatness, but only a preference for minimal gradient fluctuations. This leads to the insight that, at least within this model, it is data distribution that uniquely determines the sharpness at convergence, and that a flat minimum is preferred if and only if the noise in the labels is isotropic across all output dimensions. When the noise in the labels is anisotropic, the model instead prefers sharpness and can converge to an arbitrarily sharp solution, depending on the imbalance in the noise in the labels spectrum. We reproduce this key insight in controlled settings with different model architectures such as MLP, RNN, and transformers.

\end{abstract}

\section{Introduction}

The flatness and sharpness of the loss landscape have become a major object of study for understanding the optimization and generalization of neural networks \citep{hochreiter1997flat, keskar2016large}, not to mention various algorithms based on these two mechanisms \citep{foret2020sharpness}. For example, flatness is one of the most common hypotheses for why neural networks generalize: even when many parameter settings fit the training data perfectly, solutions that lie in flatter regions of the loss landscape are often argued to be more stable and to generalize better. This (rather questionable) intuition has motivated a large literature connecting SGD’s step size and stochasticity to an implicit preference for flatter minima, typically operationalized through Hessian-based sharpness measures \cite{keskar2016large,wu2018sgd,xie2020diffusion,wu2023implicit}. At the same time, an apparently contradictory phenomenon has become increasingly prominent in modern training: sharpness often increases over the course of optimization, and eventually reaches the edge of stability (EoS), where the curvature is large, and training is only marginally stable \cite{cohen2021gradient,andreyev2024edge}. This coexistence of the two tendencies ``SGD seeks flatness” and ``SGD seeks sharpness” is what we will call the sharpness ``paradox" -- of course, this is not a paradox in the literal sense, but a reflection of our lack of understanding about whether SGD has an intrinsic tendency towards learning flatter or sharper solutions. Resolving this ``paradox" is quite difficult, especially because of the existence of many confounding variables: sharpness trajectories are confounded by initialization (a flatter start will lead to sharpening and vice versa) and by run-to-run variability.

In this work, we give an exact solution of the sharpness to a mathematical model under the so-called ``minimal-fluctuation" constraint, which allows us to strip away these confounders and identify the precise cause of the flattening and sharpening within this model. In this model, the global minima form a highly degenerate manifold containing solutions with a wide range of sharpness values  -- including arbitrarily sharp ones  -- yet the learning dynamics of SGD selects a solution whose sharpness is unique. This lets us answer the implicit-bias question in its cleanest form: when many equally good minima exist, which curvature does SGD pick, and what factors determine that choice? Our main contributions are:
\begin{enumerate}[noitemsep,topsep=0pt, parsep=0pt,partopsep=0pt, leftmargin=13pt]
    \item In the regime where SGD minimizes fluctuations, we identify an exactly solvable model of sharpness / flatness;
    \item We precisely identify the causes of flatness-seeking behavior of SGD in this model:
    \begin{enumerate}[noitemsep,topsep=0pt, parsep=0pt,partopsep=0pt]
        \item When the label uncertainty is isotropic, the training always prefers the flattest of all solutions;
        \item Otherwise, the label uncertainty spectrum $\Sigma_\epsilon$ determines the converged sharpness, which is inversely proportional to the condition number of $\Sigma_\epsilon$.
    \end{enumerate}
\end{enumerate}
These predictions are tested in nonlinear models and realistic data distributions, showing that the messages carry beyond the analytical model we solved. 
The starting point of our theory is the recent theoretical advancement that shows that SGD training has a intrinsic preference for solutions that have a small gradient fluctuation \cite{roberts_sgd_2021,smith2021origin, beneventano2023trajectories,ziyin2025neural}, and our empirical success of in explaining real neural networks lends further support to the minimal-fluctuation theory. In the minimal-fluctuation theory, the crucial point -- and the source of the paradox -- is that fluctuation and sharpness are not the same object. They align only under special conditions, so minimizing fluctuation does not generally imply minimizing sharpness. The exact model we solve is a deep linear network trained on a linear teacher with noisy labels. In this model, the SGD-selected sharpness can be written explicitly in terms of the input covariance, the teacher map, the depth, and the label-noise\footnote{To clarify, throughout our work, we use the word ``label noise" in the broad sense. It could mean a dynamically injected noise or could be an inherent static noise in the labels such that the model cannot perfectly interpolate the data.} covariance across output dimensions. 

The key consequence of the theory is to reframe the sharpness debate in data-geometric terms: whether SGD appears flatness-seeking or sharpening with respect to GD is situation-dependent, and the controlling knob is the anisotropy of effective noise on the labels. 

\begin{wrapfigure}{r}{0.4\linewidth}
    \centering
    \includegraphics[width=\linewidth]{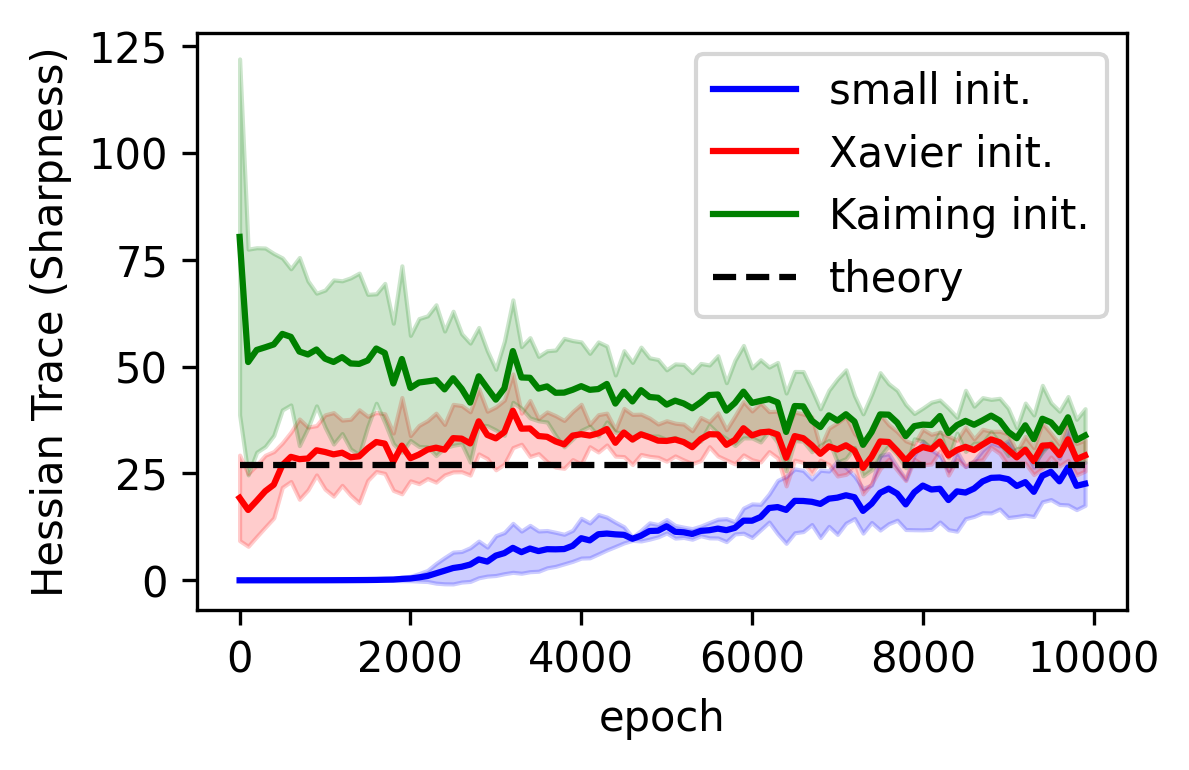}
    \caption{\small We reproduce the experiment from \cite{ziyin2024parameter}, where a deep matrix factorization problem trained with SGD converges to the same sharpness, even if the initial sharpness depends strongly on the initialization scale. The theoretical line is the prediction of Theorem \ref{theo:main}.}
    \label{fig:initialization}
\end{wrapfigure}

\section{Sharp or flat?}
\label{sec:sharp_or_flat}
\paragraph{Sharpness.} A long-standing consensus in deep learning theory suggests that the implicit bias of Stochastic Gradient Descent (SGD) drives the model toward flat minima. This line of thought \cite{hochreiter1997flat,keskar2016large,foret2020sharpness} argues that flat regions are more robust to parameter perturbations, which translates to superior generalization. A vast body of theoretical work seeks to explain such an implicit bias from the perspectives of gradient noise and dynamical stability \cite{zhu2018anisotropic,wu2018sgd,wu2022alignment,wu2023implicit,mulayoff_exact_2024}. However, recent research into the discrete dynamics of neural network training reveals that SGD does not always settle into the flattest possible regions. Instead, it often converges to regions that are significantly ``sharp", a phenomenon known as the progressive sharpening \cite{xing_walk_2018,jastrzebski_relation_2019,jastrzebski_break-even_2020,cohen2021gradient,andreyev2024edge}. 
Thus, the preference of SGD regarding the sharpness remains a paradox.

\paragraph{Arbitrariness of initialization.} However, due to the large number of confounding variables, it is difficult to say and understand when a flatter minimum or a sharper minimum is preferred. The first confounding variable is the arbitrariness of the solution. It is common for us to measure the evolution of sharpness over training steps and argue that the model prefers a sharper or flatter solution by looking at how sharpness changes during training. However, this perspective is biased by the arbitrariness of the initialization scheme we use. Imagine a landscape where the global minimum has a unique, nonzero level of sharpness, but initializations can have arbitrary levels of sharpness. Then, conditioning on the training trajectories that take the model to the global minimum, a flatter initialization implies that the training must, on average, sharpen the landscape, while a sharper initialization must, on average, flatten. In this case, it is difficult to tell whether the learning dynamics favor flatness or sharpness, as this largely depends on the initialization's arbitrariness. The second confounding variable is the potential randomness of the outcome. It is also fully possible that every initialization (or sampling process) leads to a different local minimum with different values of sharpness, and it becomes difficult to compare across settings. Thus, ideally, the ideal theoretical model should remove this arbitrariness. Identifying a theoretical model where the models only converge to unique levels of sharpness is another great step towards understanding the flatness-seeking behavior. See Figure~\ref{fig:initialization} for a demonstration of this effect. 



\paragraph{Intrinsic sharpness.} Therefore, one can imagine the following scenario being the best for understanding the flatness-seeking behavior of learning algorithms: (a) construct a loss landscape with a degenerate global minima manifold, containing solutions with different levels of sharpness; (b) condition on the learning trajectories reaching a global minimum, and study how the learning dynamics determine the average level of sharpness. In this scenario, there are clearly two contributions to the sharpness of the learned solution: (a) the intrinsic sharpness required to learn the problem: for any given task and any given architecture, the minimal level of sharpness required to learn it should be nonzero; (b) the learning-algorithm-biased sharpness: when there is a manifold of solutions with different levels of sharpness values, which one among these is chosen by SGD. Conceptually, we can write (using $T$ to denote some metric of sharpness):
\begin{equation}
\label{eq:curvature_split}
    T = T_{\rm intrinsic} + \Delta_{\rm learning}.
\end{equation}
The first term is present in any learning algorithm that can reach a global minimum, and so whatever effect that only changes the first term cannot be argued to be a ``preference" of the learning algorithm. The second term is purely determined by the preference learning algorithm and the choice of its hyperparameters. What affects the second term can thus be called the ``implicit flatness bias." Of course, it is equally important to study what determines the first term, which we will also answer in our model.



\paragraph{Novel SGD-specific sharpening.}
Prior work on {progressive sharpening} and the {edge-of(-stochastic)-stability} documents that curvature can increase during training, but the underlying mechanism is fundamentally \emph{landscape-driven}: analogous sharpening is observed for GD and Gradient Flow (GF) as well, and interventional studies attribute the effect to landscape slopes pointing towards sharper areas, rather than to the SGD-specific stochasticity \citep{jastrzebski_break-even_2020,cohen2021gradient,cohen2022adaptive,andreyev2024edge}. 
Our mechanism is qualitatively different. 
By contrast, SGD \emph{does} induce a selection rule: it approximately minimizes the entropic objective~\eqref{eq: free energy}, and therefore selects minimizers of the gradient-fluctuation functional $S(\theta)$ over the minimizer manifold (Lemma~\ref{lemma: minimal fluctuation}). 
{GF/GD are insensitive to the label-noise covariance} $\Sigma_\epsilon$ (they depend on label {first moments}, not on second moments), while {SGD is not}.
In our solvable model, the minibatch-induced term selects among a {degenerate manifold of global minima} by minimizing gradient fluctuations, and this selection depends explicitly on the {anisotropy} of $\Sigma_\epsilon$.
As a consequence, SGD can move {along the minimizer manifold} toward solutions that are {strictly sharper}---and can be arbitrarily sharp---even though much flatter global minima coexist.
This is not ``sharpening on the way to convergence'', but a {noise-conditioned implicit bias at convergence}, see also Fig. \ref{fig:GD}.

In our model, we will find that $\Delta_{\rm learning}$ is the SGD-specific contribution arising from fluctuation minimization (vanishing in the isotropic-noise case and scaling with $\kappa(\Sigma_\epsilon)$ in our model; Cor.~\ref{cor1}). 
Consistent with this interpretation, a direct intervention that switches from SGD to full-batch GD arrests the sharpening drift (Fig.~\ref{fig:GD}). 
Overall, this \textit{challenges, if not overturns, the common heuristic} ``SGD seeks flatness'' narrative: in our model, SGD primarily seeks {minimal fluctuations}, and flatness emerges only in the contrived regimes where fluctuation and sharpness are aligned.



\section{An exactly solvable model}
In this section, we first motivate and formally define the problem setting, and we then present the main theorem, whose proof is given in the appendix. Readers wishing to learn more about the intuition may start by reading the general theoretical framework outlined in Section~\ref{sec: general theory}. We will discuss the various implications of the theory in the next main section. 

\paragraph{Notation:} We use $\theta$ to denote the set of all trainable parameters. When used without a subscript, $\nabla$ denotes the gradient with respect to $\theta$. $\lambda_{\max}(\Sigma)$ denotes the maximal eigenvalue of $\Sigma$. $\nabla^2 z$ denotes the Hessian matrix of a scalar function $z$. Throughout this work, we use $\E$ to denote the average over the training set $\mathcal{D}_{\rm train}$, which is not necessarily a finite set. $I_d$ is the identity operator for $\mathbb{R}^d$, and $d$ is often omitted when the context is clear. Other notations are introduced in the context. 

\subsection{Model and loss function}

We study a $D$-layer deep linear network. Specifically, the model is defined as
\begin{equation}
    f_\theta(x) = W_D\cdots W_1x.
\end{equation}
where $\theta:=\{W_k\}_{k=1}^D$ denotes the collection of all the weight matrices, and $x$ is the input data. 
The model is trained with the MSE loss on a regression task, where the data-wise stochastic loss is
\begin{equation}
\ell(x,y,\theta)=\frac{1}{2}||y-W_D\cdots W_1x||^2,
\end{equation}
where $y$ is the label associated with $x$. 
In practice, we train this model with SGD, but the training dynamics of SGD on this model are difficult to solve analytically, and we thus adopt an alternative approach, which can be seen as answering the question of ``if the training proceeds with SGD, then what kind of objective function is the algorithm actually minimizing?" This alternative problem, in contrast, can be exactly solved to its full generality. This alternative objective function will be discussed in Section~\ref{sec: minimal fluctuation}.

The labels are generated by a linear teacher $V$ with noise $\epsilon$
\begin{equation}
y=Vx+\epsilon\in\mathbb{R}^{d_y},    
\end{equation}
where $\mathbb{E}[x]=0$ and $\mathbb{E}[\epsilon]= 0$. 

We define the covariances $\Sigma_x := \mathbb{E}_x[xx^T]$ and $\Sigma_\epsilon := \mathbb{E}_\epsilon[\epsilon\epsilon^T]$. Note that all input, label, $V$, and noise are allowed to have arbitrary dimensions as long as their dimensions are mutually consistent. Under this data generation model, the global minima of the empirical loss $L$ satisfy:
\begin{equation}
    W_D...W_1 x = V x
\end{equation}
for all $x$ in the training set.

As a key part of the analysis, note that the model has many symmetries (namely, loss-invariant directions), which we call the matrix rescaling symmetry. For an arbitrary invertible matrix $A$, and an arbitrary layer index $i$, we have
\begin{align}
    f(W_i, W_{i-1}) &:= W_D\cdots W_iW_{i-1}\cdots W_1x\\
    &= f(W_iA,A^{-1} W_{i-1}).
\end{align}
This also implies that there are infinitely many global minima for the deep linear network. 
As we will show in the next section, this implies that a global minimum for $L$ exists with arbitrarily high sharpness.

\subsection{Sharpness}
For most of the manuscript, we use the following metric of sharpness:
\begin{definition}
The sharpness $T(\theta):=\Tr\E[\nabla^2\ell(x,y,\theta)]$.
\end{definition}
$T$ has been used in many prior works as a metric of sharpness and is often found to better represent the stability of the learning dynamics \cite{li2021happens,wu2023implicit,ziyin2024parameter}, which is one of the primary reasons for studying the sharpness of the landscape. It also offers an upper and lower bound to the largest eigenvalue $\lambda_{\max}$ of the Hessian:
\begin{equation}
    T / d \leq  \lambda_{\max} \leq T.
\end{equation}
Thus, $T(\theta)$ serves as a reliable proxy for the stability of the solution. 
In the asymptotic limit where $T \to \infty$, $ \lambda_{\max}$ must also diverge.

The necessity of studying SGD's implicit bias is highlighted by the fact that the loss landscape contains paths to arbitrarily sharp minima. The following lemma shows that for the class of symmetries called exponential symmetries \cite{ziyin2024parameter}, we can always find a global minimum with infinite sharpness:
\begin{lemma}
\label{lemma:sharpness}
Assume that $A$ is a symmetric matrix and that for any $x,y,\theta$, $\ell(x,y,e^{\lambda A}\theta)=\ell(x,y,\theta)$. Moreover, assume that $A\mathbb{E}_{x,y}\nabla^2\ell(x,y,\theta)\neq0$. Then, $\limsup_{|\lambda|\to+\infty}|T(e^{\lambda A}\theta)|=+\infty$.
\end{lemma}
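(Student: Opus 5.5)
Set $H(\theta):=\E_{x,y}\nabla^2\ell(x,y,\theta)$, so that $T(\theta)=\Tr H(\theta)$. The plan is to use the symmetry to write $H$ along the orbit $\theta_\lambda:=e^{\lambda A}\theta$ in closed form, and then show that the trace of that closed form cannot stay bounded.

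\textbf{Step 1: transformation rule for the Hessian.} Put $R_\lambda:=e^{\lambda A}$. Because $A$ is symmetric, $R_\lambda$ is symmetric and invertible, with $R_\lambda^{-1}=e^{-\lambda A}$. Differentiating the identity $\ell(x,y,R_\lambda\theta)=\ell(x,y,\theta)$ twice in $\theta$ gives
\[
R_\lambda^{T}\,\nabla^2\ell(x,y,R_\lambda\theta)\,R_\lambda=\nabla^2\ell(x,y,\theta).
\]
Averaging over the data and solving for the Hessian at the transformed point yields
\[
H(\theta_\lambda)=e^{-\lambda A}H(\theta)e^{-\lambda A},
\]
and therefore $T(\theta_\lambda)=\Tr\big(e^{-2\lambda A}H(\theta)\big)$.

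\textbf{Step 2: diagonalize $A$.} Write $A=\sum_i a_i u_iu_i^T$ with an orthonormal eigenbasis. Then
\[
T(\theta_\lambda)=\sum_i e^{-2\lambda a_i}\,h_{ii},\qquad h_{ii}:=u_i^TH(\theta)u_i .
\]
Grouping by distinct eigenvalues $a$, this is $g(\lambda)=\sum_a c_a e^{-2\lambda a}$, where $c_a=\Tr(P_aH)$ and $P_a$ is the projector onto the eigenspace of $a$. Any term with $a\neq0$ and $c_a\neq0$ forces $|g|\to\infty$ as $\lambda\to+\infty$ or as $\lambda\to-\infty$. To see this, take the most negative eigenvalue $a$ with $c_a\neq0$: it dominates as $\lambda\to+\infty$. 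Since a finite sum of distinct exponentials with nonzero coefficients is dominated by its fastest-growing term, $|g|$ diverges. The remaining task is to show that some nonzero $a$ has $c_a\neq0$.

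\textbf{Step 3 (main obstacle): obtaining $c_a\neq0$ from $AH\neq0$.} The hypothesis $AH\neq0$ means $H$ is not supported entirely on $\ker A$. In particular, $P_aH\neq0$ for some $a\neq0$, but a priori the trace $\Tr(P_aH)$ could still vanish. I would close this gap with positive semidefiniteness. The symmetry relation differentiated once in $\lambda$ at the orbit gives $\nabla\ell^TA\theta=0$. Differentiating again suggests that $H$ restricted to these directions is Gram-like: for the MSE loss of a model with $f(R_\lambda\theta)=f(\theta)$, the Hessian at a global minimum is $\E[J^TJ]\succeq0$. For $H\succeq0$, $P_aH\neq0$ implies $P_aHP_a\neq0$, because $HP_a=0$ would follow from $\Tr(P_aHP_a)=0$. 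Hence $c_a=\Tr(P_aHP_a)>0$. Then $g(\lambda)\ge c_ae^{-2\lambda a}\to\infty$ in the appropriate direction of $\lambda$, as every term of $g$ is nonnegative.

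If positive semidefiniteness is not available at a general $\theta$, I would fall back on the fact that $H(\theta_\lambda)=e^{-\lambda A}He^{-\lambda A}$ itself has unbounded norm. Indeed, the block $P_aHP_b\neq0$ for some pair with $a+b\neq0$ scales like $e^{-\lambda(a+b)}$, so the Hessian norm diverges even when cancellations keep the trace bounded. I would then argue divergence of $T$ for the lim sup under the paper's standing PSD setting, noting that this is exactly where the assumption is used.
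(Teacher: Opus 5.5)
Your Steps 1 and 2 essentially reproduce the paper's proof. The paper also derives $H(e^{\lambda A}\theta)=e^{-\lambda A}H(\theta)e^{-\lambda A}$ and expands $T(e^{\lambda A}\theta)=\sum_i e^{-2\lambda\mu_i}\,n_i^THn_i$. It then simply asserts that $AH\neq0$ yields an index with $\mu_i\neq0$ and $n_i^THn_i\neq0$ whose term dominates. You correctly single this out as the crux. However, your Step 3 does not close it, and no argument from the stated hypotheses can, because the lemma is false as written.

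Here is a counterexample. Take scalar $x,y$ with $\E[xy]\neq0$, $\theta=(u,w)$, $\ell=\tfrac12(y-uwx)^2$, $A=\mathrm{diag}(1,-1)$, and the point $\theta=0$. Then $H(0)=-\E[xy]\,(e_1e_2^T+e_2e_1^T)$, with $e_1,e_2$ the standard basis, so $AH(0)\neq0$. Yet $0$ is fixed by $e^{\lambda A}$, so $T(e^{\lambda A}0)=\Tr H(0)=0$ for every $\lambda$.

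This example also shows where each part of your Step 3 breaks:
\begin{itemize}
\item The invariance does not make $H$ ``Gram-like'' or positive semidefinite. Here $H(0)$ is indefinite, since the origin is a saddle of this two-layer linear network.
\item Your fallback fails as well. The only nonzero block is $P_1HP_{-1}$, which has $a+b=0$, so not even $\|H(e^{\lambda A}\theta)\|$ grows.
\end{itemize}

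The missing ingredient is an extra hypothesis, $H(\theta)\succeq0$, which you should state explicitly rather than try to derive from the symmetry. The natural source is not the MSE/Gram structure, which needs the residual term $\E[e_j\nabla^2 f_j]$ to vanish. It is the second-order necessary condition: if $\theta$ is a local (e.g.\ global) minimizer of $\E\ell$, then $H(\theta)=\nabla^2\E\ell(\theta)\succeq0$. That is exactly the situation in which the paper invokes the lemma (``a global minimum with infinite sharpness'').

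With that hypothesis added, your argument is complete and more careful than the paper's:
\begin{itemize}
\item $AH\neq0$ gives some $a\neq0$ with $P_aH\neq0$.
\item Positivity then gives $c_a=\Tr(P_aHP_a)>0$.
\item Since every $c_b\geq0$, $T(e^{\lambda A}\theta)\geq c_ae^{-2\lambda a}$, which tends to $+\infty$ as $\lambda\to+\infty$ if $a<0$ and as $\lambda\to-\infty$ if $a>0$.
\end{itemize}
This is a genuine limit, not just a $\limsup$. Nonnegativity also rules out cancellation among eigenvectors sharing an eigenvalue, which the paper's `dominant term' sentence overlooks when $H$ is indefinite.
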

The matrix rescaling symmetry is a type of exponential symmetry as well and obeys Lemma~\ref{lemma:sharpness}. That symmetry implies the existence of sharp solutions is first identified and proved in \cite{Dinh_SharpMinima} for the case of simple rescaling symmetries, and lemma~\ref{lemma:sharpness} can be seen as the most general form of this result.

As an example of its meaning, consider its application to the deep linear network with $D=2$. For $\ell(x,y,\theta)=\frac{1}{2}||y-W_2W_1x||^2$, we have
\begin{equation}
T(W_2,W_1)=||W_2||_F^2\Tr\Sigma_x+d_y\text{Tr}(W_1^T W_1 \Sigma_x),
\end{equation}
In this case we have $\ell(x,y,W_2,W_1)=\ell(x,y,\lambda W_2,\lambda^{-1}W_1)$ but
\begin{equation}
\lim_{|\lambda|\to+\infty}T(\lambda W_2,\lambda^{-1}W_1)=+\infty
\end{equation}
for any $W_1,W_2$.

As the main theorem~\ref{theo:main} below will prove, at the global minima, the deep linear network will (1) have a unique minimal sharpness value $T_{\min}$ that only depends on the problem and will (2) converge to a unique sharpness value $T\geq T_{\min}$, this means that the actually learned level of sharpness can be decomposed into two contributions:
\begin{equation}
    T = T_{\min} + \Delta_{\rm SGD},
\end{equation}
where $\Delta_{\rm SGD}\geq 0$, $T_{\min}$ is the minimal level of sharpness that is required in order to fit the data distribution (given the architecture) and needs to be obeyed by any learning algorithm that is capable of reaching the global minimum, and $\Delta_{\rm SGD}$ is the level of sharpness due to the implicit bias of SGD (namely, due to the fluctuation minimization effect that we will introduce in the next section).

The more standard quantity $\lambda_{\max}$ is also of interest because it directly relates to the recent popular problem of edge of stability (EoS). In general, while our theory allows one to compute $\lambda_{\max}$ directly, its functional form is so complicated that we do not present it explicitly\footnote{Note that this is not a problem of our theory, but a problem of generally computing the largest eigenvalue of any matrix.}. That being said, we will discuss in Section~\ref{sec: insights} a few cases where $\lambda_{\max}$ can indeed be written in easily interpretable form.

\begin{figure*}[t!]
    \centering
    \includegraphics[width=0.3\linewidth]{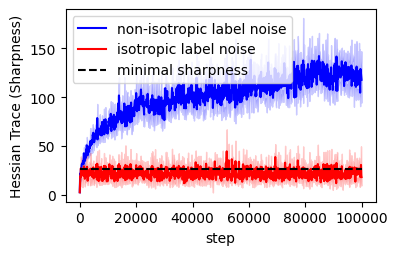}
    \includegraphics[width=0.3\linewidth]{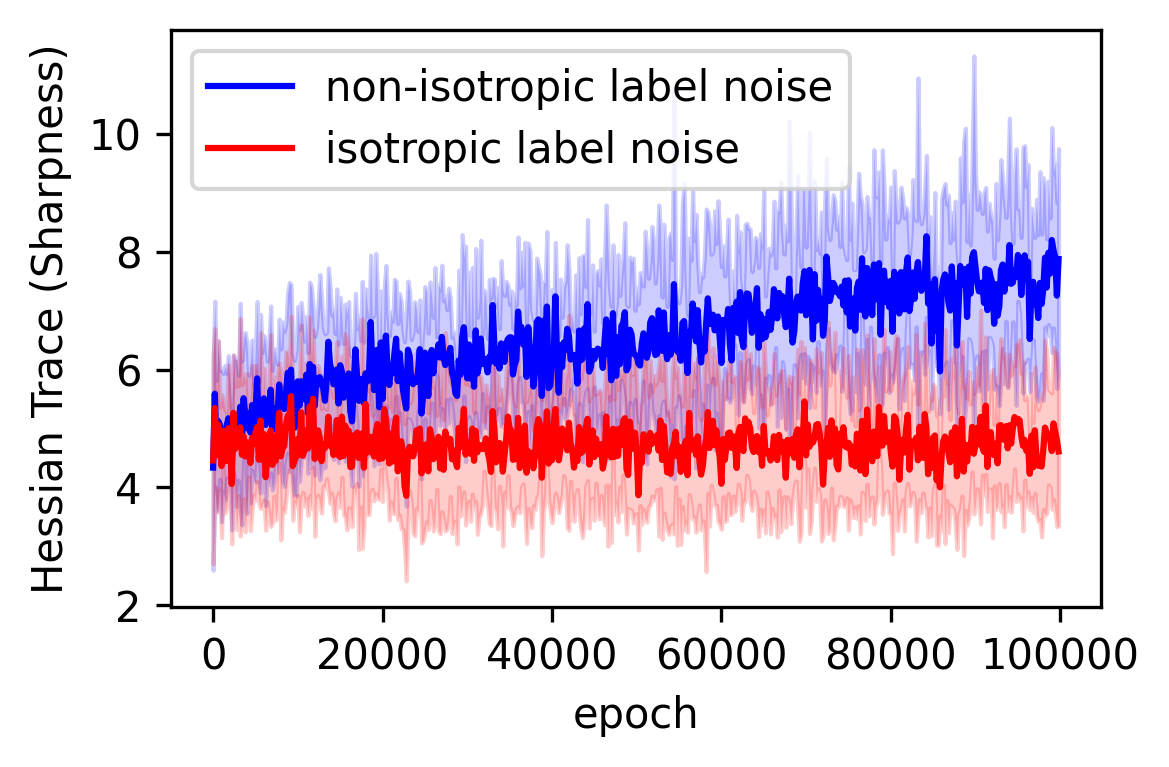}
    \includegraphics[width=0.3\linewidth]{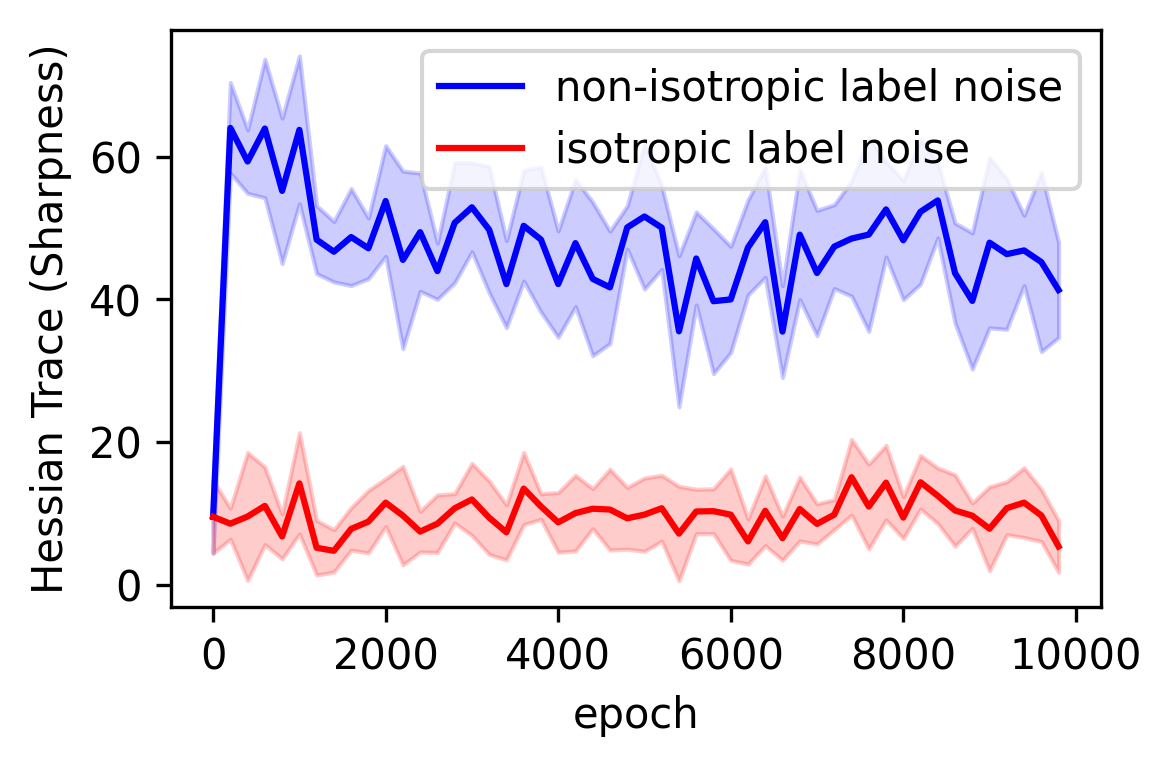}
    \caption{\small Non-isotropic noise in the labels leads to progressive sharpening. \textbf{Left}: linear networks, with the minimal sharpness predicted by \eqref{eq:minT}. \textbf{Middle}: Two layer ReLU networks trained under a teacher-student setting. \textbf{Right}: Two layer ReLU networks trained on MNIST. Each line is averaged over five trials shown with the standard error. See Appendix \ref{app:exp} for experiment details and more experiments.}
    \label{fig:label_noise}
\end{figure*}
\begin{figure*}[t!]
    \centering
    \includegraphics[width=0.27\linewidth]{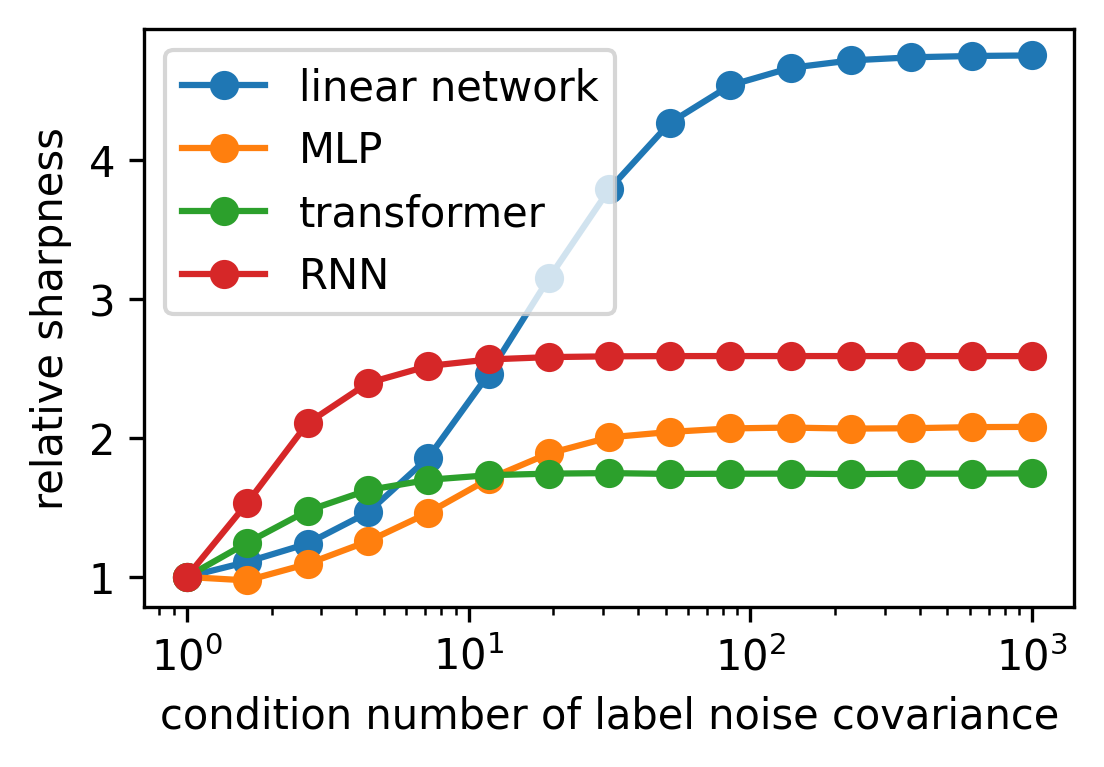}
    \includegraphics[width=0.28\linewidth]{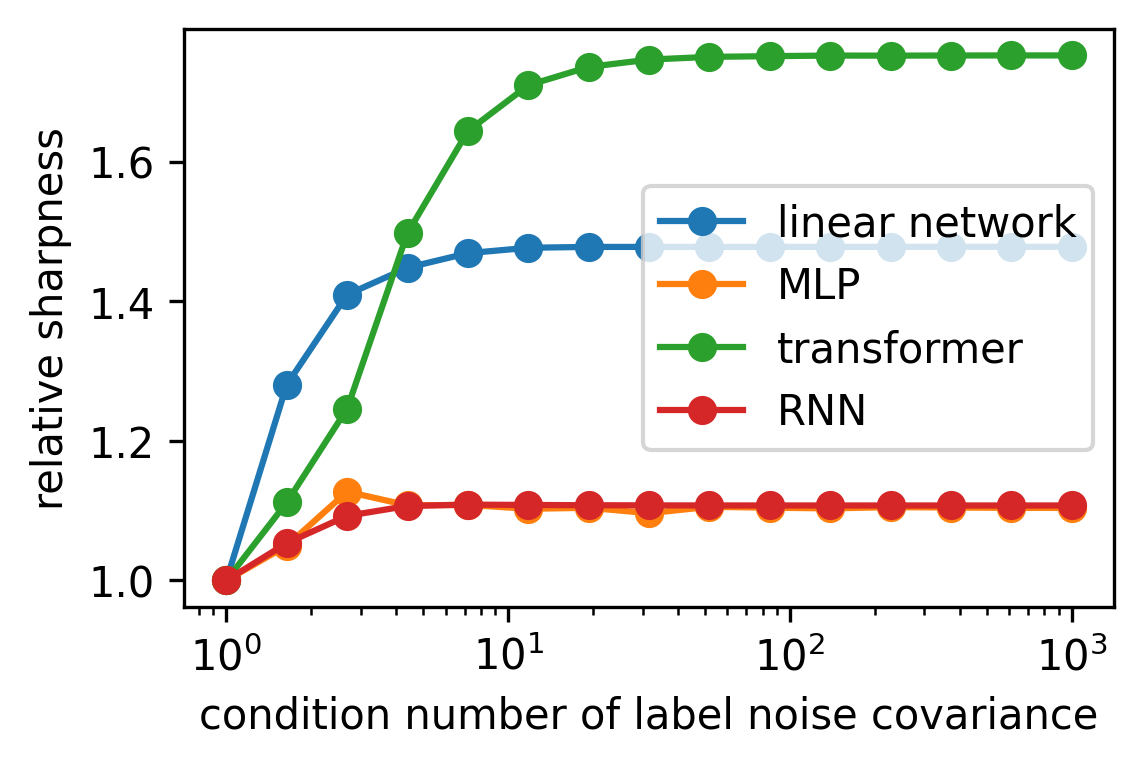}
    \caption{\small Across all four models, and for both regression and classification tasks, we observe that the learning process yields a sharper solution as the noise in the labels becomes increasingly non-isotropic. We define relative sharpness as the ratio of the Hessian trace under non-isotropic noise to that under isotropic noise. \textbf{Left}: MSE loss. \textbf{Right}: Cross entropy loss.}
    \label{fig:sharpness_condition_number}
\end{figure*}

\subsection{Minimal Fluctuation Ansatz}\label{sec: minimal fluctuation}

A line of works has shown that training with SGD at a finite learning rate and minibatch sampling is equivalent to solving the following perturbed optimization problem, which we refer to as the entropic loss \cite{roberts_sgd_2021,smith2021origin,beneventano2023trajectories,ziyin2025neural}:
\begin{align}
\label{eq: free energy}
F_{\eta}(\theta)&=\underbrace{\mathbb{E}_{x,y}\ell(x,y,\theta)}_{\text{learning, symmetry}} +\underbrace{\frac{\eta}{4}\mathbb{E}_\mathcal{B}||\mathbb{E}_{(x,y)\in\mathcal{B}}  \nabla \ell(x,y,\theta)||^2 }_{\text{due to discretization error, noise, $:=S(\theta)$}},
\end{align}
where $\mathcal{B}$ denotes a mini-batch and $\eta$ is the learning rate. 
Using this modified entropic loss allows us to characterize the implicit bias of SGD via the following lemma.
\begin{lemma}[Minimal Fluctuation Lemma, informal]\label{lemma: minimal fluctuation}
In the limit $\eta \to 0^+$, the global minima of $F_{\eta}(\theta)$ are 
\begin{equation}
    \arg \min_{\theta \in \mathcal{M}} S(\theta),
\end{equation}
where $M = \{ \theta \mid \mathbb{E}_{x,y} \ell(x,y, \theta) = \inf_{u} \mathbb{E}_{x,y} \ell(x,y, u) \}$ is the the global minima manifold.

\end{lemma}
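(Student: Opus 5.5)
We prove the statement as a $\Gamma$-convergence–type selection result. Write $L(\theta)=\mathbb{E}_{x,y}\ell(x,y,\theta)$, $L^*=\inf_u L(u)$, and $F_\eta=L+\eta S$, absorbing the factor $1/4$ into $S$. We make the informal statement precise under the following assumptions:
\begin{itemize}
\item $S\ge 0$ and $S$ is continuous;
\item $\mathcal{M}$ is nonempty and closed;
\item $S$ attains its minimum $S^*$ on $\mathcal{M}$;
\item the sublevel sets of $F_\eta$ are contained in a fixed compact set for all small $\eta$, or, more weakly, minimizers $\theta_\eta$ of $F_\eta$ stay in a bounded region.
\end{itemize}
The claim we then prove is: every accumulation point of minimizers $\theta_\eta$ of $F_\eta$ as $\eta\to0^+$ lies in $\arg\min_{\mathcal{M}} S$, and every point of $\arg\min_{\mathcal{M}}S$ nearly minimizes $F_\eta$ up to $o(\eta)$.

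The first step is an upper bound. Pick any $\theta^*\in\arg\min_{\mathcal{M}}S$. For a minimizer $\theta_\eta$,
\begin{equation*}
L(\theta_\eta)+\eta S(\theta_\eta)\le F_\eta(\theta^*)=L^*+\eta S^*.
\end{equation*}
Since $L(\theta_\eta)\ge L^*$, this yields two consequences:
\begin{itemize}
\item $0\le L(\theta_\eta)-L^*\le \eta\,(S^*-S(\theta_\eta))\le \eta S^*$, so $L(\theta_\eta)\to L^*$;
\item $S(\theta_\eta)\le S^*$ for every $\eta$.
\end{itemize}

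The second step is passage to the limit. By the boundedness assumption, extract a subsequence $\theta_{\eta_k}\to\bar\theta$. Continuity of $L$ gives $L(\bar\theta)=L^*$, so $\bar\theta\in\mathcal{M}$. Continuity of $S$ then gives $S(\bar\theta)\le S^*$, and since $S^*$ is the minimum of $S$ on $\mathcal{M}$, we get $S(\bar\theta)=S^*$, i.e. $\bar\theta\in\arg\min_{\mathcal{M}}S$. For the converse direction, the same inequality shows
\begin{equation*}
\inf F_\eta = L^*+\eta S^*+o(\eta).
\end{equation*}
Indeed, $\inf F_\eta\le L^*+\eta S^*$ trivially, and $\inf F_\eta\ge L^*+\eta\inf_{\theta_\eta} S(\theta_\eta)$, where $\liminf S(\theta_\eta)\ge S^*$ by the limit argument above. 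Hence points of $\arg\min_{\mathcal{M}}S$ are $o(\eta)$-optimal for $F_\eta$, which is the sense in which the global minima of $F_\eta$ converge to $\arg\min_{\mathcal{M}}S$.

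The main obstacle is compactness. In the deep linear network, $\mathcal{M}$ is unbounded because of the rescaling symmetry, so $\theta_\eta$ could a priori escape to infinity. To handle this, we use that $S$ grows along the noncompact symmetry directions: with label noise, the per-sample gradient has variance proportional to the squared weight norms, similar to $T$ in the $D=2$ computation. This makes $S$ coercive on $\mathcal{M}$ modulo the compact part of the symmetry group, so the bound $S(\theta_\eta)\le S^*$ confines $\theta_\eta$ to a bounded set. In the general informal lemma, we simply state coercivity of $F_\eta$, or boundedness of its minimizers, as a hypothesis.
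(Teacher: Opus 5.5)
Your proof is correct and is essentially the paper's argument: you test the optimality of $\theta_\eta$ against a point of $\mathcal{M}$ to get both $S(\theta_\eta)\le S(y)$ and $0\le L(\theta_\eta)-L^*\le \eta\,(S(y)-S(\theta_\eta))$, then pass to a limit point by continuity. The differences are minor: the paper tests against an arbitrary $y\in\mathcal{M}$, so attainment of $\min_{\mathcal{M}}S$ is a conclusion rather than a hypothesis, and it assumes coercivity of $L$ instead of bounded minimizers, an assumption that, as you correctly note, fails for the deep linear network because of the rescaling symmetry.
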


Since modern neural networks are typically over-parameterized, the loss landscape $\mathbb{E}_{x,y} \ell(x,y, \theta)$ contains a manifold of global minima. The lemma suggests that SGD does not treat these minima equally but implicitly prefers those with minimal fluctuation $S$. Notably, this bias vanishes for full-batch gradient descent (GD), as the fluctuation term disappears at any global optimum.

A key mechanism of the proof is that different symmetry directions along the matrix rescaling symmetry actually have different levels of fluctuation (often also corresponding to different levels of sharpness, as shown by Lemma~\ref{lemma:sharpness} ) -- thus, the preference of the learning algorithm for the minimal-fluctuation solution causes the learning algorithm to also moves towards a certain level of sharpness.


\subsection{Main Result}\label{sec: main result}

Let $V' := \sqrt{\Sigma_\epsilon} V \sqrt{\Sigma_x}$. Let its SVD decomposition be given by $V' = L S R$. We assume the network width $d \geq \text{rank}(V')$\footnote{$d$ is defined as the the smallest dimension of the output and input dimensions of $W_i$ for all $i \in [D]$. When $d<\text{rank}(V')$, it learns the largest $d$ eigenvalues, and the result is similar.}. Under the SGD limit $\eta \to 0^+$, we obtain the following closed-form result for the sharpness:
\begin{theorem}
For all global minimum of \eqref{eq: free energy}, the spectrum of the Hessian is identical (determined by $\Sigma_x,\Sigma_\epsilon$ and $V$), and the sharpness is given by
{\small
\begin{equation}
\begin{aligned}
&T(\theta)=(D-1)\left(\Tr S\right)^{(D-2)/D}(\Tr\Sigma_\epsilon\Tr\Sigma_x)^{1/D}\Tr[\Sigma_\epsilon^{-1}LSL^T]+d_y\left(\Tr S\right)^{2(D-1)/D}\frac{(\Tr\Sigma_x)^{1/D}}{(\Tr\Sigma_\epsilon)^{(D-1)/D}}.
\end{aligned}
\label{eq:sharpness}
\end{equation}}
In addition, the minimal sharpness of the global minimum of $\mathbb{E}_x\ell(x,\theta)$ is given by
\begin{equation}
T_{\min} := \min_\theta T(\theta)=D(\Tr\hat{S})^{2(D-1)/D}(d_y\Tr\Sigma_x)^{1/D},
\label{eq:minT}
\end{equation}
where $\hat{S}$ is the singular values of $V\sqrt{\Sigma_x}$.
\label{theo:main}
\end{theorem}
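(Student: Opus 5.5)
The plan is to reduce the problem, via Lemma~\ref{lemma: minimal fluctuation}, to a constrained matrix optimization on the global-minimum manifold. I would then solve that optimization in closed form and substitute the minimizers into a formula for $T$.

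\textbf{Step 1: compute $S$ and $T$ on the global-minimum manifold.} At a global minimum the residual is $y-W_D\cdots W_1x=\epsilon$. Writing $A_i:=W_D\cdots W_{i+1}$ and $B_i:=W_{i-1}\cdots W_1$, the per-sample gradient is $\nabla_{W_i}\ell=-A_i^T\epsilon x^TB_i^T$. Its mean vanishes on $\mathcal{M}$, so $S$ equals, up to a batch-size prefactor, the per-sample second moment. Assuming $\epsilon$ is independent of $x$, this gives
\begin{equation*}
S(\theta)\propto\sum_{i=1}^D \|\Sigma_\epsilon^{1/2}A_i\|_F^2\,\|B_i\Sigma_x^{1/2}\|_F^2 .
\end{equation*}
In the same way, $\E\nabla^2\ell$ at a global minimum is a Gauss--Newton term. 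The second-order term multiplies the residual $\epsilon$ and averages to zero. The trace of the Gauss--Newton term is
\begin{equation*}
T(\theta)=\sum_i\|A_i\|_F^2\,\|B_i\Sigma_x^{1/2}\|_F^2 .
\end{equation*}
The Hessian spectrum itself depends on $\theta$ only through these layer products. So once I show the minimizers are unique up to orthogonal transformations of the hidden layers, the claim of an identical spectrum follows. Note that $T$ is exactly the $S$-functional with $\Sigma_\epsilon$ replaced by $I$. This identity will give $T_{\min}$ for free at the end.

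\textbf{Step 2: change variables and solve the constrained problem.} Set $\tilde W_D=\Sigma_\epsilon^{1/2}W_D$ and $\tilde W_1=W_1\Sigma_x^{1/2}$. The constraint becomes $\tilde W_D\cdots\tilde W_1=V'=LSR$, and $S$ becomes $\sum_i\|\tilde A_i\|_F^2\|\tilde B_i\|_F^2$. I first exploit the scalar rescalings $W_i\to cW_i$, $W_{i-1}\to c^{-1}W_{i-1}$, which preserve the product. Stationarity under these forces all $D$ summands to be equal. I then lower-bound each summand using a Schatten/H\"older-type inequality for products of matrices, $\|P_1\cdots P_k\|_{*}\le\prod\|P_j\|_F$ (with appropriate exponents), combined with AM--GM across layers. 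The aim is a bound in terms of the nuclear norm $\Tr S$ of $V'$. Equality should force an aligned, balanced configuration: $\tilde W_i=U_{i+1}\Lambda_iU_i^T$ with $U_{D+1}=L$ and $U_1=R^T$, and singular values determined by $S$ together with the normalizations $\Tr\Sigma_\epsilon$ and $\Tr\Sigma_x$. This is why those traces appear to the powers $1/D$ and $(D-1)/D$. The orthogonal freedom in the hidden $U_i$ is exactly the residual symmetry that leaves the spectrum invariant.

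\textbf{Step 3: substitute back.} I would undo the change of variables, $W_D=\Sigma_\epsilon^{-1/2}\tilde W_D$, and evaluate $T$. The $i=D$ term, where $A_D=I$ gives $\|A_D\|_F^2=d_y$, yields the second term of \eqref{eq:sharpness}. The $D-1$ remaining terms each involve $\|A_i\|_F^2$ with the factor $\Sigma_\epsilon^{-1/2}$ reinstated, producing $\Tr[\Sigma_\epsilon^{-1}LSL^T]$. For $T_{\min}$, since $T$ has the same structure as $S$ with $\Sigma_\epsilon=I$ and constraint matrix $V\sqrt{\Sigma_x}$, the Step~2 bound applied with $\Sigma_\epsilon=I$ (so $\Tr\Sigma_\epsilon=d_y$) gives \eqref{eq:minT} directly.

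\textbf{Main obstacle.} The hard step is Step~2: proving a \emph{global} lower bound for this nonconvex sum of products of Frobenius norms, and characterizing the equality case. The first endpoint term is asymmetric, since $\tilde A_1$ is weighted by $\Sigma_\epsilon$ while $\tilde B_1=I$ contributes only $\Tr\Sigma_x$ after the change of variables. This asymmetry is handled by the scalar-rescaling balance argument. I would first try the two-layer case, where the bound reduces to $\|P\|_F\|Q\|_F\ge\|PQ\|_*$ with an explicit equality case, and then extend to general $D$ by induction on depth.
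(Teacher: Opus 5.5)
Your proposal is correct in outline, but it organizes the argument differently from the paper. The paper never derives SGD's selected minimizers itself. It imports their form from Lemma~\ref{lemma:deep_linear} (a restatement of Theorem~12 of \cite{ziyin2025neural}) and substitutes it into $T=\sum_k\Tr[A_k^TA_k]\Tr[B_k\Sigma_xB_k^T]$. It proves only \eqref{eq:minT} directly, via Lemma~\ref{lemma:min_sharpness}.

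You observe that after $\tilde W_D=\Sigma_\epsilon^{1/2}W_D$, $\tilde W_1=W_1\Sigma_x^{1/2}$, the fluctuation functional on the minimum manifold is literally the functional of Lemma~\ref{lemma:min_sharpness}, with $d_y\to\Tr\Sigma_\epsilon$ and $V\sqrt{\Sigma_x}\to V'$. This lets a single lemma do both jobs:
\begin{enumerate}[noitemsep,topsep=0pt]
\item it gives the SGD solution, since its parameters reproduce \eqref{eq:solution} exactly under this substitution, and hence \eqref{eq:sharpness};
\item it gives $T_{\min}$ upon setting $\Sigma_\epsilon=I$.
\end{enumerate}
Your route makes the proof self-contained and explains why the two lemmas carry identical exponents. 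It also makes the isotropic case transparent, since for $\Sigma_\epsilon\propto I$ the two functionals coincide up to a constant. The paper's route is shorter, but it outsources the hardest part, namely the equality-case and alignment analysis for the fluctuation. Your scalar-rescaling argument is correct: $W_i\to cW_i$, $W_{i-1}\to c^{-1}W_{i-1}$ multiplies the $i$-th summand by $c^{-2}$ and the $(i-1)$-th by $c^{2}$, and leaves the others fixed. Your spectrum-invariance step matches the paper's Step~3.

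One piece of Step~2 must change. Lower-bounding each summand separately cannot produce $\Tr S$. The $i$-th summand only controls the product with $\tilde W_i$ deleted, and that product is not fixed by the constraint. No induction on depth is needed either. Instead, apply AM--GM to the $D$ summands and then re-pair. With the conventions $\|\tilde A_D\|_F^2=\Tr\Sigma_\epsilon$ and $\|\tilde B_1\|_F^2=\Tr\Sigma_x$,
\[
\prod_{k=1}^{D}\|\tilde A_k\|_F^2\|\tilde B_k\|_F^2=\Tr\Sigma_\epsilon\,\Tr\Sigma_x\prod_{k=1}^{D-1}\|\tilde A_k\|_F^2\|\tilde B_{k+1}\|_F^2 .
\]
Since $\tilde A_k\tilde B_{k+1}=V'$ for every $k$, applying $\|P\|_F\|Q\|_F\ge\|PQ\|_*$ to each pair gives the fluctuation lower bound $D(\Tr\Sigma_\epsilon\Tr\Sigma_x)^{1/D}(\Tr S)^{2(D-1)/D}$, up to the batch-size prefactor. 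This is exactly the paper's argument in Lemma~\ref{lemma:min_sharpness}.

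Equality requires your equal-summand condition together with $\tilde A_k^T\tilde A_k\propto\tilde B_{k+1}\tilde B_{k+1}^T$ for $k=1,\dots,D-1$. From these proportionality relations you must extract the aligned form: the middle layers have all singular values equal, and the end layers have singular values proportional to $\sqrt{S}$. The endpoint asymmetry you worry about does not actually arise. After the change of variables, both boundary summands are a trace constant times the squared Frobenius norm of a product of $D-1$ tilde matrices.
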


Theorem \ref{theo:main} reveals a striking property: although there exist infinitely many global minima with varying (and even infinite) sharpness, SGD consistently selects a solution with a \textbf{unique} sharpness level. This suggests that the final sharpness is independent of initialization, provided the model converges to a global optimum. Moreover, the first term of \eqref{eq:sharpness} suggests that $D-1$ layers contribute to the sharpness in the same way, while the last layer (the second term of \eqref{eq:sharpness}) contributes differently. At the minimal sharpness solution \eqref{eq:minT}, however, all layers contribute in the same way. We will analyze the implications of this theorem in detail in Section~\ref{sec: insights}.

\section{Insights}\label{sec: insights}
A great merit of exactly solvable models is that every effect and phenomenon within the model can be precisely and causally understood. This section is thus devoted to naming and understanding the key observations and insights of Theorem \ref{theo:main}. We then present experiments on a diverse set of model architectures to demonstrate the qualitative relevance of the prediction. The proofs of the theoretical results are also presented in the appendix.

\paragraph{Noise Imbalance as a Source of Sharpness.} 

\begin{corollary}
Suppose $\Sigma_\epsilon$ and $V\sqrt{\Sigma_x}$ have full rank. Let the condition number of $\Sigma_\epsilon$ be $\kappa(\Sigma_\epsilon):=\frac{\lambda_{\max}(\Sigma_\epsilon)}{\lambda_{\min}(\Sigma_\epsilon)}$. Then,
\begin{equation}
c_1D\sqrt{\kappa(\Sigma_\epsilon)}\leq T(\theta)\leq c_2D\kappa(\Sigma_\epsilon)
\end{equation}
at the global minimum of \eqref{eq: free energy}, where $c_1,c_2$ are constants depending on $V\sqrt{\Sigma_x}$ and $d_x,d_y$.
\label{cor1}
\end{corollary}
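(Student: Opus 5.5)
The plan is to work directly from the closed form \eqref{eq:sharpness} of Theorem~\ref{theo:main} and bound each factor in terms of the eigenvalues of $\Sigma_\epsilon$. The first step is a scale invariance. Under $\Sigma_\epsilon\mapsto s\Sigma_\epsilon$ we have $V'\mapsto s^{1/2}V'$, so $S\mapsto s^{1/2}S$ while $L$ is unchanged. The exponents of $s$ then cancel in both terms:
\begin{itemize}
\item in the first term, $\tfrac{D-2}{2D}+\tfrac1D-\tfrac12=0$;
\item in the second term, $\tfrac{D-1}{D}-\tfrac{D-1}{D}=0$.
\end{itemize}
Hence $T$ depends on $\Sigma_\epsilon$ only up to scale, and we may normalize $\lambda_{\min}(\Sigma_\epsilon)=1$ and $\lambda_{\max}(\Sigma_\epsilon)=\kappa:=\kappa(\Sigma_\epsilon)$. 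We assume $D\ge 2$ and use $D-1\ge D/2$. For $D=1$ the first term vanishes and $T=d_y\Tr\Sigma_x$ is a constant, so that case would be handled separately or excluded.

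Write $M:=V\sqrt{\Sigma_x}$ and $LSL^T=(V'V'^T)^{1/2}=(\sqrt{\Sigma_\epsilon}MM^T\sqrt{\Sigma_\epsilon})^{1/2}$. Under the full-rank assumption, let $m>0$ be the smallest eigenvalue of $MM^T$. The elementary estimates I would establish are:
\begin{enumerate}
\item $\Tr\hat S\le \Tr S\le\sqrt\kappa\,\Tr\hat S$, since $\sigma_i(\sqrt{\Sigma_\epsilon}M)\in[\sigma_i(M),\sqrt\kappa\,\sigma_i(M)]$.
\item $\kappa\le\Tr\Sigma_\epsilon\le d_y\kappa$.
\item $\Tr[\Sigma_\epsilon^{-1}LSL^T]\le \Tr S$, because $\Sigma_\epsilon^{-1}\preceq I$ and $LSL^T\succeq0$.
\item $\sqrt{\Sigma_\epsilon}MM^T\sqrt{\Sigma_\epsilon}\succeq m\Sigma_\epsilon$. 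By operator monotonicity of the square root, $LSL^T\succeq\sqrt m\,\Sigma_\epsilon^{1/2}$.
\item Consequently $\Tr S=\Tr LSL^T\ge\sqrt m\Tr\Sigma_\epsilon^{1/2}\ge\sqrt{m\kappa}$, and $\Tr[\Sigma_\epsilon^{-1}LSL^T]\ge\sqrt m\Tr\Sigma_\epsilon^{-1/2}\ge\sqrt m\,\kappa^{-1/2}\cdot d_y^{0}$. More usefully, this trace is at least $\sqrt m$ times the contribution of the smallest eigenvalue, so it is bounded below by a positive constant, e.g.\ $\sqrt m$ (the eigenvalue-$1$ direction contributes $1$).
\end{enumerate}

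For the upper bound, I would insert estimates 1--3 into \eqref{eq:sharpness}. The first term is at most
\[
(D-1)(\sqrt\kappa\Tr\hat S)^{\frac{D-2}{D}}(d_y\kappa\Tr\Sigma_x)^{\frac1D}\sqrt\kappa\Tr\hat S .
\]
Its power of $\kappa$ is $\tfrac{D-2}{2D}+\tfrac1D+\tfrac12=1$. The second term is at most
\[
d_y(\sqrt\kappa\Tr\hat S)^{\frac{2(D-1)}{D}}(\Tr\Sigma_x)^{\frac1D}\kappa^{-\frac{D-1}{D}},
\]
which is $O(1)\le O(\kappa)$. For the lower bound, I would drop the nonnegative second term. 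Using $\Tr S\ge\sqrt{m\kappa}$, $\Tr\Sigma_\epsilon\ge\kappa$ and $\Tr[\Sigma_\epsilon^{-1}LSL^T]\ge\sqrt m$, the first term is at least $\tfrac D2(\sqrt{m\kappa})^{\frac{D-2}{D}}(\kappa\Tr\Sigma_x)^{\frac1D}\sqrt m$. Its power of $\kappa$ is $\tfrac{D-2}{2D}+\tfrac1D=\tfrac12$.

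The main obstacle is making the prefactors into constants $c_1,c_2$ that depend only on $V\sqrt{\Sigma_x}$, $d_x$ and $d_y$, and not on $D$. Quantities such as $(\Tr\hat S)^{2(D-1)/D}$, $m^{(D-2)/(2D)}$ and $(\Tr\Sigma_x)^{1/D}$ carry $D$-dependent exponents, but all of these exponents lie in $[0,2]$. I would therefore bound each via $\min(1,a^2)\le a^p\le\max(1,a^2)$ for $p\in[0,2]$, which yields $D$-uniform constants. The genuinely non-routine step is the lower bound on $\Tr S$, which needs the operator-monotonicity argument in estimate 4 together with full rank of $MM^T$. The naive bound $\Tr S\ge\Tr\hat S$ would lose the $\sqrt\kappa$ and give only a constant lower bound, comparable to $T_{\min}$.
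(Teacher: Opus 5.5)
Your proof is correct for $D\ge 2$. Its skeleton matches the paper's: insert the closed form \eqref{eq:sharpness}, bound each trace factor by extreme eigenvalues of $\Sigma_\epsilon$, and drop the nonnegative second term for the lower bound. Your upper bound is essentially the paper's. The normalization $\lambda_{\min}(\Sigma_\epsilon)=1$, justified by the scale invariance you verify, just makes the exponent bookkeeping transparent. Your device $\min(1,a^2)\le a^p\le\max(1,a^2)$ gives $D$-uniform constants cleanly.

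The genuine difference is the lower bound on $\Tr S$. The paper uses only $\Tr S\ge\lambda_{\min}(\sqrt{\Sigma_\epsilon})\|V\sqrt{\Sigma_x}\|_*$, which in your normalization is $\Tr S\ge\Tr\hat S$. Inserted into the first term, this yields growth only like $\kappa^{1/D}$. The paper's displayed chain reaches $\sqrt{\kappa}$ only by replacing $\lambda_{\min}(\Sigma_\epsilon)^{(D-2)/(2D)}$ with $\lambda_{\max}(\Sigma_\epsilon)^{(D-2)/(2D)}$ inside a lower bound, which is not justified.

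Your estimate~4 supplies exactly the missing ingredient: $LSL^T=(\sqrt{\Sigma_\epsilon}MM^T\sqrt{\Sigma_\epsilon})^{1/2}\succeq\sqrt{m}\,\Sigma_\epsilon^{1/2}$, hence $\Tr S\ge\sqrt{m\kappa}$. An equally short alternative is $\Tr S\ge\sigma_{\max}(\sqrt{\Sigma_\epsilon}M)\ge\|M^T\sqrt{\Sigma_\epsilon}u\|\ge\sqrt{m\,\lambda_{\max}(\Sigma_\epsilon)}$, with $u$ a unit top eigenvector of $\Sigma_\epsilon$. So your route actually proves the stated $\sqrt{\kappa}$ lower bound, where the paper's, as written, does not. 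One small correction to your closing remark: the naive bound loses the $\sqrt{\kappa}$ but still gives $\kappa^{1/D}$, not merely a constant.

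You should make explicit three hypotheses you use silently. Each is needed for the statement itself, not just for your argument.
\begin{enumerate}
\item $m=\lambda_{\min}(MM^T)>0$ means $M=V\sqrt{\Sigma_x}$ has full \emph{row} rank, so $d_y\le d_x$. ``Full rank'' in the weaker sense is not enough. Take $d_x=1$, $d_y=2$, $\Sigma_x=1$, $V=(1,0)^T$, $\Sigma_\epsilon=\mathrm{diag}(1,\kappa)$. Then $T=(D-1)(1+\kappa)^{1/D}+2(1+\kappa)^{-(D-1)/D}$, which for $D\ge 3$ is not bounded below by $c_1D\sqrt{\kappa}$.
\item Excluding $D=1$ is right, since then $T=d_y\Tr\Sigma_x$ does not depend on $\kappa$.
\item Your constants contain $\Tr\Sigma_x$, which is not a function of $V\sqrt{\Sigma_x}$. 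This is unavoidable: at fixed $M$ and $\Sigma_\epsilon$ the formula scales exactly as $(\Tr\Sigma_x)^{1/D}$ (take $\Sigma_x=tI$, $V=M/\sqrt{t}$). So $c_1,c_2$ must be allowed to depend on $\Sigma_x$ as well. The paper's proof obscures this by dropping that factor from the first term and bounding it by $d_x$ in the second.
\end{enumerate}
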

Corollary \ref{cor1} suggests that the sharpness mainly depends on the imbalance of noise in the labels, and does not depend much on the depth. As an illustrative special case, we can choose $\Sigma_x=I$, $V=I$ ($d_x=d_y=d$), which gives $V'=\sqrt{\Sigma_\epsilon}=LSL^T$, and thus
\begin{equation}
\begin{aligned}
T(\theta)&=(D-1)(\Tr[\Sigma_\epsilon^{1/2}])^{(D-2)/D}(d\Tr[\Sigma_\epsilon])^{1/D}\Tr[\Sigma_\epsilon^{-1/2}]+d^{(D+1)/D}\Tr[\Sigma_\epsilon^{1/2}]^{2(D-1)/D}\Tr[\Sigma_\epsilon]^{-(D-1)/D},
\end{aligned}
\end{equation}
which can be arbitrarily large. On the other hand, we have $\min T(\theta)=Dd^{(D+2)/D}$, which does not depend on $\Sigma_\epsilon$. Thus, the imbalance of the noise spectrum can lead to arbitrarily high sharpness.  If we further choose $\Sigma_\epsilon=\text{diag}(1,\sigma^2)$, we have
\begin{equation}
\begin{aligned}
T(\theta)&=(D-1)2^{1/D}(1+\sigma)^{(D-2)/D}(1+\sigma^2)^{1/D}(1+1/\sigma)+2^{(D+1)/D}(1+\sigma)^{2(D-1)/D}(1+\sigma^2)^{-(D-1)/D}.
\end{aligned}
\end{equation}
One can check that it is symmetric in terms of $\sigma\to\sigma^{-1}$ and has a unique minimum at $\sigma=1$. We approximately have $T_{\min}\approx D$ and
\begin{equation}
\frac{T(\theta)}{T_{\min}}\approx \max(\sigma,1/\sigma)
\end{equation}
for large $D$ and very imbalanced noise in the labels. Thus, the gap between the sharpness of SGD and the minimal sharpness is approximately proportional to the imbalance of the noise in the labels and independent of the depth, as demonstrated by Corollary \ref{cor1}.

This could provide a theoretical explanation for the sharpening behavior observed in transformers. In token prediction, "stop words" (e.g., "the") have very low conditional entropy, while content words (nouns/verbs) have high entropy. This massive imbalance in token-level noise naturally drives learning toward sharper solutions. For the same reason, when there is class imbalance, the noise in the labels also tends to be imbalanced, and this will sharpen the solution. See Appendix \ref{app:power-law} for an example of the power-law covariance.

\paragraph{Recovery of Minimal Sharpness under Isotropic Noise.} 
\begin{corollary}
The global minimum is the flattest if and only if $\Sigma_\epsilon=I$.
\label{cor2}
\end{corollary}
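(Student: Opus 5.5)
The plan is to compare the fluctuation functional $S(\theta)$ and the sharpness $T(\theta)$ directly on the global-minimum manifold $\mathcal{M}$, rather than manipulating \eqref{eq:sharpness} and \eqref{eq:minT} blindly. Throughout, ``$\Sigma_\epsilon=I$'' should be read as $\Sigma_\epsilon\propto I$. I expect \eqref{eq:sharpness} to be invariant under $\Sigma_\epsilon\to c\Sigma_\epsilon$, and I would check this first, since the statement can only hold up to scale if so.

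\textbf{Structure on $\mathcal{M}$.} At a global minimum the residual is $y-f_\theta(x)=\epsilon$. Write $P_i:=W_D\cdots W_{i+1}$ and $Q_i:=W_{i-1}\cdots W_1$. The per-sample gradient is then $\nabla_{W_i}\ell=-P_i^T\epsilon x^TQ_i^T$. Assuming $\epsilon$ is independent of $x$, the fluctuation term becomes (up to the batch-size factor)
\[
S(\theta)\propto\sum_{i=1}^D \Tr(P_i^T\Sigma_\epsilon P_i)\,\Tr(Q_i\Sigma_xQ_i^T).
\]
For the Hessian, $f_\theta$ is linear in each $W_i$. So the diagonal blocks contain no residual-times-second-derivative term, and the trace reduces to the Gauss--Newton part:
\[
T(\theta)=\sum_{i=1}^D \Tr(P_i^TP_i)\,\Tr(Q_i\Sigma_xQ_i^T).
\]
This matches the $D=2$ formula given earlier, which I would use as a sanity check.

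\textbf{``If'' direction.} If $\Sigma_\epsilon=cI$, then $S=c\,T$ on $\mathcal{M}$. By Lemma~\ref{lemma: minimal fluctuation}, the SGD-selected set $\arg\min_{\mathcal{M}}S$ therefore coincides with $\arg\min_{\mathcal{M}}T$, and the selected sharpness equals $T_{\min}$. As a consistency check I would plug $\Sigma_\epsilon=I$ into \eqref{eq:sharpness}, where $S=\hat S$ and $LSL^T$ has trace $\Tr\hat S$. The two terms should then combine to $D(\Tr\hat S)^{2(D-1)/D}(d_y\Tr\Sigma_x)^{1/D}$.

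\textbf{``Only if'' direction.} This is the main obstacle. Here I would use first-order (balance) conditions. Moving along the rescaling symmetry $W_i\to W_iA$, $W_{i-1}\to A^{-1}W_{i-1}$ and differentiating at $A=I$ characterizes the minimizers of any functional of this product form.
\begin{itemize}
\item Minimizers of $T$ satisfy a balance relation that involves $P_i^TP_i$. At the last layer this is a relation between $W_D^TW_D$ and the input-side Gram matrices.
\item Minimizers of $S$ satisfy the same relation with $W_D^TW_D$ replaced by $W_D^T\Sigma_\epsilon W_D$.
\end{itemize}
Suppose the SGD solution also minimized $T$. Then both relations would hold at the same point, which would force $W_D^T\Sigma_\epsilon W_D\propto W_D^TW_D$ in the relevant sense. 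Since $W_D\cdots W_1$ must reproduce $V$ and $V\sqrt{\Sigma_x}$ has full rank (as in Corollary~\ref{cor1}), the range of $W_D$ spans the output space, and the relation implies $\Sigma_\epsilon\propto I$.

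Two technical points need care. First, the scalar prefactors in the balance conditions involve traces, so I need to track their proportionality constants carefully. Second, I must rule out the case where the SGD minimizer merely has sharpness equal to $T_{\min}$ without being a stationary point of $T$. A minimizer of $T$ on $\mathcal{M}$ is automatically stationary, so this case does not arise.

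If this route becomes messy, the fallback is to work directly with \eqref{eq:sharpness} versus \eqref{eq:minT}. I would bound $\Tr[\Sigma_\epsilon^{-1}LSL^T]$ and $\Tr S$ using Cauchy--Schwarz and weighted AM--GM, and then identify the equality case as $\Sigma_\epsilon\propto I$.
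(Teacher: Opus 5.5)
Your proposal is correct, and it takes a different route from the paper. The paper compares closed forms: setting $\Sigma_\epsilon=I$ in the SGD solution of Lemma~\ref{lemma:deep_linear} reproduces the minimal-sharpness family of Lemma~\ref{lemma:min_sharpness}, and this is checked by substituting into \eqref{eq:sharpness}. The converse is only asserted there via ``direct comparison''.

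You instead work with the functionals themselves. For the ``if'' direction you use $S=cT$ on $\mathcal{M}$ when $\Sigma_\epsilon=cI$, so no closed-form minimizer is needed. For the ``only if'' direction you compare the last-layer stationarity conditions along $W_D\to W_DA$, $W_{D-1}\to A^{-1}W_{D-1}$. Set $\alpha=\Tr(Q_{D-1}\Sigma_xQ_{D-1}^T)>0$ and $M=Q_D\Sigma_xQ_D^T$. The two conditions are $\alpha W_D^TW_D=d_yM$ and $\alpha W_D^T\Sigma_\epsilon W_D=(\Tr\Sigma_\epsilon)M$, hence $W_D^T(\Sigma_\epsilon-\tfrac{\Tr\Sigma_\epsilon}{d_y}I)W_D=0$. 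This supplies the converse the paper leaves implicit.

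Your reading $\Sigma_\epsilon\propto I$ is in fact forced, because \eqref{eq:sharpness} is invariant under $\Sigma_\epsilon\to c\Sigma_\epsilon$. What the paper's route buys is the explicit value of $T$ at the SGD solution, and hence the size of the gap $\Delta_{\rm SGD}$. Yours identifies only the equality case, but does so more cleanly, and it makes visible exactly which hypothesis is needed.

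That hypothesis needs one correction. You need $\operatorname{rank}(V\sqrt{\Sigma_x})=d_y$, not just ``full rank as in Corollary~\ref{cor1}'', because only then is $W_D$ onto $\mathbb{R}^{d_y}$. When $d_x<d_y$ the conclusion genuinely fails. Take $D=2$, $d_x=1$, $d_y=2$, $\Sigma_x=1$, $V=e_1$, and let $\Sigma_\epsilon$ have unit diagonal and off-diagonal entry $\rho$ with $0<|\rho|<1$. Write $W_2=e_1w^T/\|w\|^2+N$ with $Nw=0$. Then, up to the constant prefactor, $T=\|w\|^{-2}+\|N\|_F^2+2\|w\|^2$ and $S=\|w\|^{-2}+\Tr(N^T\Sigma_\epsilon N)+2\|w\|^2$. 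These have the same minimizers on $\mathcal{M}$, yet $\Sigma_\epsilon\not\propto I$, even though both $\Sigma_\epsilon$ and $V\sqrt{\Sigma_x}$ are full rank. Your argument correctly yields only that $\Sigma_\epsilon$ acts as $\tfrac{\Tr\Sigma_\epsilon}{d_y}I$ on $\operatorname{range}(W_D)=\operatorname{range}(V\sqrt{\Sigma_x})$. With the full-row-rank hypothesis made explicit, your proof is complete.
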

The reason behind Corollary \ref{cor2} is that for deep linear networks with $\Sigma_\epsilon=I$, the sharpness is exactly proportional to the fluctuation (the second order derivative in \eqref{eq:Hessian} disappears for linear networks), and thus minimizing fluctuation is equivalent to minimizing sharpness.

\paragraph{Largest Eigenvalue Has a Similar Property} 
\begin{corollary}
Assume that $VV^T$ commutes with $\Sigma_\epsilon$ and $V^TV$ commutes with $\Sigma_x$. Then,
\begin{equation}
c_1D\sqrt{\kappa(\Sigma_\epsilon)}\leq \lambda_{\max}(\mathbb{E}[H])\leq c_2D\kappa(\Sigma_\epsilon),
\end{equation}
where $H$ is the Hessian at the global minimum of \eqref{eq: free energy} and $c_1,c_2$ are constants depending on $V$, $\Sigma_x$ and $d_y$.
\label{cor4}
\end{corollary}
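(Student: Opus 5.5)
Our plan is to use the explicit form of the minimal-fluctuation solutions obtained in the proof of Theorem~\ref{theo:main}, and to bound the top eigenvalue of $\mathbb{E}[H]$ between $T/(\text{number of nonzero directions})$ and $T$. The first step is to simplify the solution structure under the commutation assumptions. If $VV^T$ commutes with $\Sigma_\epsilon$ and $V^TV$ commutes with $\Sigma_x$, then $V$ can be diagonalized compatibly: there are orthogonal $U,Q$ with $\Sigma_\epsilon=U\Lambda_\epsilon U^T$, $\Sigma_x=Q\Lambda_xQ^T$, and $V=U\Lambda_V Q^T$. Then $V'=\sqrt{\Sigma_\epsilon}V\sqrt{\Sigma_x}$ has SVD with $L=U$, $R=Q^T$, and $S=\Lambda_\epsilon^{1/2}\Lambda_V\Lambda_x^{1/2}$, possibly after reordering coordinates. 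In this basis the minimal-fluctuation solution is ``diagonal'': each layer $W_k$ acts mode-by-mode, so the problem decouples into $r=\mathrm{rank}(V')$ independent scalar chains. The Hessian $\mathbb{E}[H]=\mathbb{E}[J^TJ]$ at an interpolating minimum (the residual term vanishes up to noise, as used in the proof of Corollary~\ref{cor2}) is then block-structured by index pairs $(i,j)$. Only blocks coupling mode $i$ of the input side with mode $j$ of the output side are nonzero, and they can be written out explicitly.

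The second step is the upper bound. Since $\mathbb{E}[H]\succeq 0$, we have $\lambda_{\max}\le T$, and Corollary~\ref{cor1} gives $T\le c_2D\kappa(\Sigma_\epsilon)$. This step is immediate.

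The third step is the lower bound, which is the main work. We will exhibit a test vector $v$ with $v^T\mathbb{E}[H]v/\|v\|^2\ge c_1D\sqrt{\kappa}$. The natural choice is the direction that perturbs all $D$ layers simultaneously in the single scalar chain corresponding to the mode with $\lambda_{\min}(\Sigma_\epsilon)$, or alternatively the one with $\lambda_{\max}$. For that chain, the layer scalars $a_1,\dots,a_D$ satisfy the balancing relations from the proof of Theorem~\ref{theo:main}. These relations say the first $D-1$ layers have equal magnitude, proportional to a power of $s_i(\Tr\Sigma_\epsilon\Tr\Sigma_x)^{\cdot}$, while the last layer is reweighted by $\Sigma_\epsilon^{-1}$. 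Taking $v\propto(\partial f/\partial a_k)_k$, the Rayleigh quotient equals $\sum_k\|\partial f/\partial a_k\|^2$ times the input variance of that mode. This reproduces the per-mode summand of $T$ in \eqref{eq:sharpness}, namely the term $(D-1)(\Tr S)^{(D-2)/D}(\cdots)^{1/D}\lambda_i(\Sigma_\epsilon)^{-1}s_i$, restricted to one mode. With $s_i\propto\lambda_i(\Sigma_\epsilon)^{1/2}$, choosing $i$ to be the minimal-noise mode gives a factor $\lambda_{\min}^{-1/2}$. We expect to bound the prefactor $(\Tr\Sigma_\epsilon)^{1/D}(\Tr S)^{(D-2)/D}$ from below by a power of $\lambda_{\max}$, exactly as in the lower bound of Corollary~\ref{cor1}. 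Combining these yields $\gtrsim D\sqrt{\lambda_{\max}/\lambda_{\min}}$ up to constants depending on $V,\Sigma_x,d_y$.

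The main obstacle is this third step: getting the exponents to combine into exactly $\sqrt{\kappa}$, uniformly in $D$, and with constants independent of $\Sigma_\epsilon$'s scale. We intend to mimic the estimates in the proof of Corollary~\ref{cor1}, using scale invariance of $T$ under $\Sigma_\epsilon\to c\Sigma_\epsilon$ to normalize $\lambda_{\max}(\Sigma_\epsilon)=1$. If the single-chain test vector falls short, the fallback is $\lambda_{\max}\ge T/r'$, where $r'$ bounds the number of nonzero diagonal blocks; this count depends only on $d_y$ and $\mathrm{rank}(V)$, not on $D$ once the per-block counting is done per layer. We would then inherit the lower bound $c_1D\sqrt{\kappa}$ directly from Corollary~\ref{cor1}.
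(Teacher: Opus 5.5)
Your upper bound is fine, and simpler than the paper's. At the minimum the residual equals $\epsilon$, which is independent of $x$ with mean zero, so $\mathbb{E}[H]\succeq 0$ and $\lambda_{\max}\le T\le c_2D\kappa(\Sigma_\epsilon)$ by Corollary~\ref{cor1}.

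The lower-bound plan you lead with, however, does not work. Take the diagonal chain of mode $i$, i.e.\ the same mode on the output and input side. Its best Rayleigh quotient is not the $i$-th summand of the first term of \eqref{eq:sharpness}. Its middle-layer part is
\[
(D-2)\,(\Tr S)^{-2/D}(\Tr\Sigma_\epsilon\Tr\Sigma_x)^{1/D}\,\frac{s_i^2}{\lambda_i(\Sigma_\epsilon)}.
\]
The factor $(\Tr S)^{(D-2)/D}=(\Tr S)^{-2/D}\sum_j s_j$ in $T$ comes from $\Tr[B_k\Sigma_xB_k^T]$, i.e.\ from all input-side modes $j$. A diagonal chain keeps only $j=i$. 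In the aligned basis $s_i=\lambda_i(\Sigma_\epsilon)^{1/2}v_i\lambda_i(\Sigma_x)^{1/2}$, with $v_i$ the matching singular value of $V$. Hence $s_i^2/\lambda_i(\Sigma_\epsilon)=v_i^2\lambda_i(\Sigma_x)$ does not depend on the noise at all, and the factor $\lambda_i(\Sigma_\epsilon)^{-1/2}$ survives only in the single first-layer term. After normalizing $\lambda_{\max}(\Sigma_\epsilon)=1$, every diagonal chain gives $O(D+\sqrt{\kappa})$, not $D\sqrt{\kappa}$. For example, with $V=\Sigma_x=I$, $\Sigma_\epsilon=\mathrm{diag}(1,\sigma^2)$ and large $D$, the chains give roughly $D+\sigma/2$, whereas $\lambda_{\max}\approx D\sigma$.

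What the test-vector route needs is an off-diagonal direction $\Delta_k\propto u_k^{(i)}u_{k-1}^{(j)T}$. Here $u_k^{(i)}$ is the $i$-th column of $U_k$, with $u_D^{(i)}=l_i$ and $u_0^{(j)}=r_j$; choose $i$ to maximize $s_i/\lambda_i(\Sigma_\epsilon)$ and $j$ to maximize $s_j$. All layers then align along $l_ir_j^T$. Each middle layer contributes $(\Tr S)^{-2/D}(\Tr\Sigma_\epsilon\Tr\Sigma_x)^{1/D}\lambda_{\max}(\Lambda_\epsilon^{-1}S)\lambda_{\max}(S)$, which is at least a constant times $\sqrt{\kappa}$. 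This is the first term of \eqref{eq:max_eig2}, which the paper obtains from Theorem~\ref{theo:max_eig} and then bounds term by term. Your own $(i,j)$ block picture already contains the fix: each block is rank one, $\lambda_{\max}$ is the largest block norm, and the block producing $D\sqrt{\kappa}$ is off-diagonal.

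Your fallback, by contrast, is correct and already proves the lower bound. Justify it by rank, though, not by counting blocks ``per layer''. By \eqref{eq:Delta-H-Delta}, $\mathbb{E}[H]=\mathcal{J}^T\mathcal{J}$ with $\mathcal{J}:(\Delta_k)_k\mapsto\sum_kA_k\Delta_kB_k\Sigma_x^{1/2}\in\mathbb{R}^{d_y\times d_x}$. Hence $\mathrm{rank}\,\mathbb{E}[H]\le d_xd_y$ and $\lambda_{\max}(\mathbb{E}[H])\ge T/(d_xd_y)$, uniformly in depth and width. That is the whole point: dividing by the number of parameters would cost a factor growing with $D$. Also, the number of nonzero $(i,j)$ blocks can be as large as $r(d_x+d_y)-r^2$, not a function of $d_y$ and $\mathrm{rank}(V)$ alone, because layer $D$ reaches every output direction and layer $1$ every input direction. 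This is harmless, since $d_x$ is fixed by $V$. Combined with Corollary~\ref{cor1}, this gives $\lambda_{\max}\ge\frac{c_1}{d_xd_y}D\sqrt{\kappa}$ and completes the proof.

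Compared with the paper, this route is shorter and never uses the commutation hypothesis. It needs only the full-rank assumptions of Corollary~\ref{cor1}, which the paper's lower bound also uses implicitly through $\sigma_{\min}(V\sqrt{\Sigma_x})$. The price is a cruder constant and no closed form for $\lambda_{\max}$. It also lets you drop your first step, which commutation alone does not give when $V$ has repeated singular values. For $V=I$ both hypotheses hold for arbitrary $\Sigma_\epsilon,\Sigma_x$, which need not share eigenvectors, so $L$ need not diagonalize $\Sigma_\epsilon$; the paper's proof makes the same identification.
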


Corollary \ref{cor4} suggests that in a special case where all the matrices are aligned, the maximal eigenvalue of the Hessian at the global minimum of \eqref{eq: free energy} is mainly determined by the noise imbalance and not by the depth. We expect that this should be true in general. For an illustrative example, we can choose $V=\Sigma_x=I$, $\Sigma_\epsilon=\text{diag}(1,\sigma^2)$. Then, we can obtain
\begin{equation}
\begin{aligned}
&\lambda_{\max}(\mathbb{E}[H])=(D-2)2^{1/D}(1+\sigma)^{-2/D}(1+\sigma^2)^{1/D}\max(1,\sigma^{-1})\max(1,\sigma)\\&\qquad+2^{1/D}(1+\sigma)^{(D-2)/D}(1+\sigma^2)^{-(D-1)/D}\max(1,\sigma)+2^{-(D-1)/D}(1+\sigma)^{(D-2)/D}(1+\sigma^2)^{1/D}\max(1,\sigma^{-1}).
\end{aligned}
\end{equation}
according to Theorem \ref{theo:max_eig}. One can check that it is symmetric in terms of $\sigma\to\sigma^{-1}$ and reaches its minimum at $\sigma=1$, which is the maximal eigenvalue of the minimal sharpness solution. We also have
\begin{equation}
\lambda_{\max}(\mathbb{E}[H])\approx D\max(\sigma,\sigma^{-1})
\end{equation}
for large $D$ and $\sigma\gg1$ or $\sigma^{-1}\gg1$. Therefore, the maximal eigenvalue scales similarly to the trace of the Hessian and grows as the noise becomes imbalanced.

\begin{wrapfigure}{r}{0.4\linewidth}
\centering
    \includegraphics[width=\linewidth]{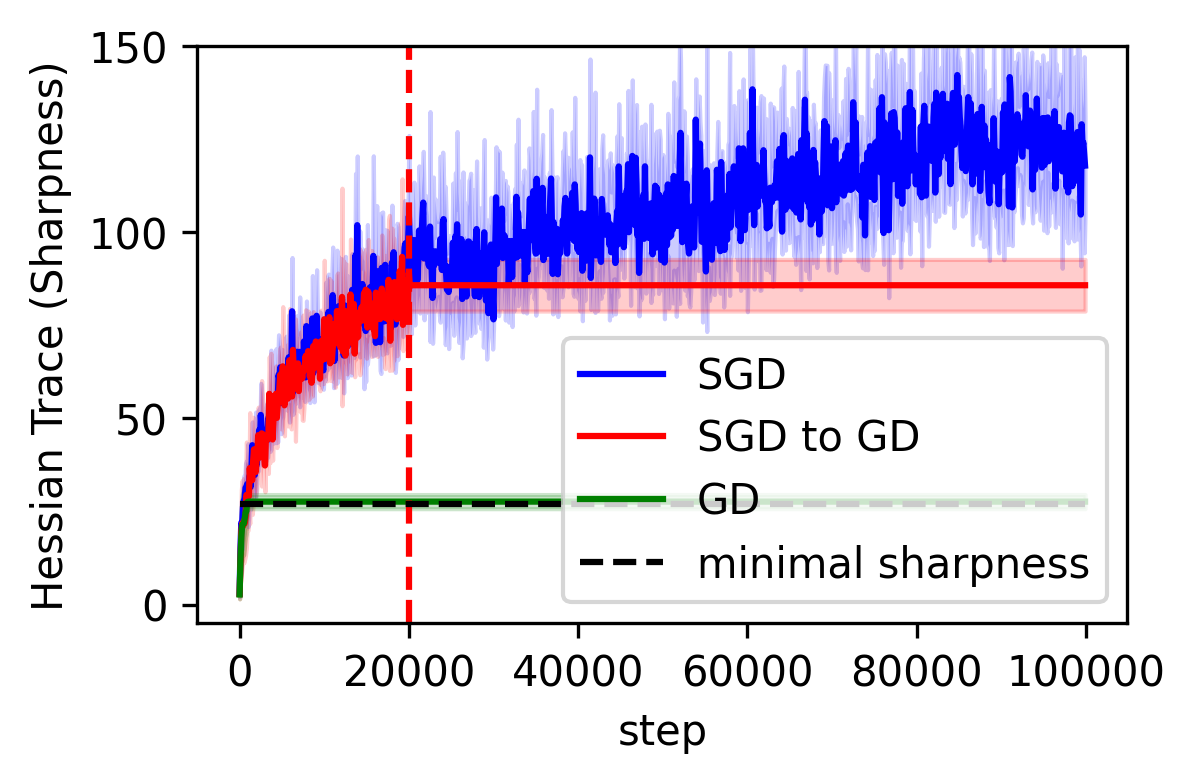}
    \caption{\small Progressive sharpening with non-isotropic noise in the labels is unique for SGD. The red dotted line denotes the time when we switch to GD.}
    \label{fig:GD}
\end{wrapfigure}

\paragraph{Progressive sharpening or flattening.} 
Theorem \ref{theo:main} suggests that SGD converges to a sharpness level that is independent of the initialization. Consequently, progressive sharpening (at least on average and throughout the trajectory) occurs if the initialization is sufficiently small, while progressive flattening occurs if it is large. In Proposition \ref{prop:initial_sharpness}, we characterize the initial sharpness and observe that it remains independent of the labels for standard initialization schemes. Comparing Proposition \ref{prop:initial_sharpness} with Theorem~\ref{theo:main} shows many mechanisms of progressive sharpening. Noting that the initial sharpness is independent of the label or noise distribution, while the final sharpness is label and noise-dependent, one finds a simple and straightforward mechanism for introducing flatening and sharpening: for a fixed initialization, rescaling the labels $y$ to be smaller causes a flattening, while rescaling $y$ to be larger can cause sharpening. Again, this showcases the key message: SGD intrinsically may not have a preference for sharpness, and it is the data distribution that changes, if not determines, the sharpness.

\paragraph{This is a \textit{new} mechanism of progressive sharpening.} The previous known mechanisms were about gradient flow and full batch gradient descent, as we mentioned in Section \ref{sec:sharp_or_flat}. The progressive sharpening that our theory discovers is not. Indeed, it is about noise covariance, and can not be seen in a deterministic training pipeline, as in the case of \citep{xing_walk_2018,jastrzebski_relation_2019,jastrzebski_break-even_2020,cohen2021gradient,cohen2022adaptive,andreyev2024edge}. In fact, the second term of \eqref{eq: free energy} is always zero at any local minimum for GD, and does not lead to progressive sharpening. This is demonstrated in Figure \ref{fig:GD}. Moreover, our progressive sharpening mechanism is not linked to the edge of stochastic stability and instability in general; it can be observed even with vanishing learning rates.



\paragraph{Experiments.} To empirically validate our theory, we perform experiments on both deep linear networks and several nonlinear architectures (MLP, RNN, and Transformer) across regression and classification tasks. These results, shown in Figures \ref{fig:label_noise} and \ref{fig:sharpness_condition_number}, demonstrate that the observed phenomena are not unique to linear models; specifically, imbalanced label noise induces sharper solutions as predicted by Corollary \ref{cor1}. Furthermore, the left side of Figure \ref{fig:label_noise} verifies Corollary \ref{cor2}, where linear networks converge to the minimal sharpness solution under isotropic label noise.

In Figure \ref{fig:sharpness_condition_number}, we observe that sharpness increases with the non-isotropy of label noise. The plateau at high condition numbers is a consequence of the finite-horizon effect, where highly imbalanced noise necessitates a very long convergence time to reach the sharpest minima. To isolate the role of stochasticity, Figure \ref{fig:GD} presents a direct intervention: when switching to GD in the linear network setting, progressive sharpening disappears. This identifies the mechanism as unique to SGD, aligning with \eqref{eq: free energy} where the second term vanishes for GD at local minima, exerting no implicit bias. Comprehensive details and additional experiments on eigenvalues, condition numbers, and Transformers can be found in Appendix \ref{app:exp}.

\section{General Theory}\label{sec: general theory}
As discussed, while many studies suggest SGD promotes flatness, recent observations of EoS behaviors indicate that SGD can also converge to remarkably "sharp" solutions. Our theory is that this difference in tendency arises from a subtle distinction between sharpness and fluctuation. In this section, we look at this problem from a general perspective and emphasize how our result leads to improved understanding here.

Consider the Mean Squared Error (MSE) loss\footnote{Similarly, one can extend the discussion to a generic loss function by looking at the imbalance in the spectrum of the label Hessian: $\nabla_{y}^2 \ell$.}
\begin{equation}
\ell(x,\theta):=\frac{1}{2}||f(x,\theta)-y(x)||^2.
\end{equation}
The Hessian is
\begin{equation}
\begin{aligned}
&H(\theta):=\mathbb{E}_x\nabla^2\ell(x,\theta)= \mathbb{E}_x\left(\nabla f(\theta,x)\nabla f(\theta,x)^T+ \sum_{j=1}^{d_y} [e(x,\theta)]_j \nabla^2_\theta [f(x, \theta)]_j \right),
\end{aligned}
\label{eq:Hessian}
\end{equation}
where $e(x,\theta):=y(x)-f(\theta,x)$ is the residual error. Near a global minimum, $e(x, \theta)$ becomes negligible, allowing us to approximate the sharpness (quantified by the trace of the Hessian) as
\begin{equation}
\Tr H(\theta)\approx\mathbb{E}_x||\nabla f(\theta,x)||^2,
\end{equation}
Such an approximation is well known in existing literature \cite{wu2023implicit,macdonald2023progressive}. In contrast, rather than the sharpness, recent works showed that SGD minimizes the following gradient fluctuation:
\begin{equation}
\mathbb{E}_x||\nabla\ell(x,\theta)||^2=\mathbb{E}_x||\nabla f(\theta,x)^Te(x,\theta)||^2,
\end{equation}
Assuming that the model is well-trained, the residual $e(x, \theta)$ primarily reflects noise in the labels.  If we treat this noise as a random variable with covariance $\Sigma_\epsilon$ independent of $x$, the fluctuation becomes:
\begin{equation}
\mathbb{E}_x||\nabla\ell(x,\theta)||^2=\mathbb{E}_x[\nabla f(x,\theta)^T\Sigma_\epsilon \nabla f(x,\theta)],
\label{eq:S}
\end{equation}
which is linked but not identical to the sharpness.
\begin{theorem}
Assuming that the residue $e(x,\theta):=y(x)-f(\theta,x)$ is a random variable with zero mean, variance $\Sigma_\epsilon$, and independent of the dataset. Then,
    \begin{equation}
        \lambda_{\min}(\Sigma_\epsilon)S\leq T \leq \lambda_{\max}(\Sigma_\epsilon)S,
    \end{equation}
where $T$ is the sharpness and $S$ is the gradient fluctuation.
\end{theorem}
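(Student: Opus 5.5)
The plan is to reduce both quantities to quadratic forms in the same object, the Jacobian $J(x,\theta):=\nabla_\theta f(x,\theta)\in\mathbb{R}^{d_y\times p}$, and then compare them with a Rayleigh-quotient (trace) inequality. I will carry it out in the order below. I should say at the outset that the direct computation appears to give the inequality with $T$ and $S$ in the opposite roles from the statement. That orientation question is the main obstacle, and it is where I expect the plan to succeed or fail.

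\emph{Step 1 (exact form of $T$).} Start from \eqref{eq:Hessian} with $y(x)=f(x,\theta)+e(x,\theta)$. By assumption, $e$ has zero mean and is independent of $x$, so the curvature term $\E_x\sum_j e_j\nabla^2_\theta f_j$ averages to zero after also taking expectation over $e$. Hence, without invoking the near-minimum approximation, we get
\begin{equation*}
T=\Tr H=\E_x\Tr\bigl(J^TJ\bigr)=\E_x\|J\|_F^2 .
\end{equation*}

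\emph{Step 2 (exact form of $S$).} We have $\nabla_\theta\ell=J^Te$, so $\|\nabla_\theta\ell\|^2=e^TJJ^Te$. Taking the expectation over $e$ with $\E[ee^T]=\Sigma_\epsilon$, and using independence from $x$, gives
\begin{equation*}
S=\E_x\Tr\bigl(J^T\Sigma_\epsilon J\bigr),
\end{equation*}
which is \eqref{eq:S}.

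\emph{Step 3 (comparison).} Write $J=[j_1,\dots,j_p]$ column-wise. Then
\begin{equation*}
\Tr(J^T\Sigma_\epsilon J)=\sum_k j_k^T\Sigma_\epsilon j_k .
\end{equation*}
Each summand satisfies
\begin{equation*}
\lambda_{\min}(\Sigma_\epsilon)\|j_k\|^2\le j_k^T\Sigma_\epsilon j_k\le \lambda_{\max}(\Sigma_\epsilon)\|j_k\|^2 .
\end{equation*}
Summing over $k$ and averaging over $x$ sandwiches $S$ between $\lambda_{\min}(\Sigma_\epsilon)\,T$ and $\lambda_{\max}(\Sigma_\epsilon)\,T$.

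\emph{Step 4 (matching the statement, the main obstacle).} Step 3 yields
\begin{equation*}
\lambda_{\min}(\Sigma_\epsilon)\,T\le S\le \lambda_{\max}(\Sigma_\epsilon)\,T .
\end{equation*}
The statement instead asserts $\lambda_{\min}(\Sigma_\epsilon)\,S\le T\le\lambda_{\max}(\Sigma_\epsilon)\,S$. These two agree only in special cases, for instance $\Sigma_\epsilon=cI$ with $c=1$. In general, a one-dimensional check with $J=1$ and $\Sigma_\epsilon=\sigma^2$ gives $T=1$ and $S=\sigma^2$, and the stated form then fails whenever $\sigma\neq1$.

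My first attempt would therefore be to identify the normalization under which the paper's wording holds. The natural candidate is to measure the fluctuation in the whitened metric, that is, to apply Step 3 to $\Sigma_\epsilon^{-1}$ in place of $\Sigma_\epsilon$. This corresponds to the residual-weighted gradient $\Sigma_\epsilon^{-1/2}e$, which produces exactly $\lambda_{\min}(\Sigma_\epsilon)S\le T\le\lambda_{\max}(\Sigma_\epsilon)S$. An equivalent route is to read the two bounds as $S/\lambda_{\max}\le T\le S/\lambda_{\min}$.

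If no such reading is intended by the authors' definitions, then Steps 1--3 establish the correctly oriented two-sided comparison, which is what the surrounding discussion (Corollary~\ref{cor2}, the isotropic case $\Sigma_\epsilon=I$ giving $S=T$) actually uses. The stated form would then hold only up to this swap of $S$ and $T$. All steps other than this orientation issue are routine.
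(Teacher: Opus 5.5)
Your Steps 1--3 are the paper's proof: the zero-mean, $x$-independent residual kills the $\sum_j e_j\nabla^2_\theta f_j$ term, giving $T=\E_x\|\nabla f\|^2$, and the fluctuation becomes $S=\Tr[\Sigma_\epsilon\E_x(\nabla f\,\nabla f^T)]$. The paper's ``combining them'' is exactly your eigenvalue sandwich, and it yields $\lambda_{\min}(\Sigma_\epsilon)T\le S\le\lambda_{\max}(\Sigma_\epsilon)T$.

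So your diagnosis is right: the displayed statement has $S$ and $T$ interchanged, and your scalar counterexample is valid. The correct fix is to state the result as $S/\lambda_{\max}(\Sigma_\epsilon)\le T\le S/\lambda_{\min}(\Sigma_\epsilon)$, not to redefine $S$.

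The only slip is the parenthetical in Step 4. The gradient built from the whitened residual $\Sigma_\epsilon^{-1/2}e$ has second moment exactly $T$, so the $\Sigma_\epsilon^{-1}$-weighted quantity $\E_x\Tr(J^T\Sigma_\epsilon^{-1}J)$ would come from $\Sigma_\epsilon^{-1}e$ instead. That remark is unnecessary anyway.
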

\begin{proof}
By \eqref{eq:Hessian} we have
\begin{equation}
T:=\Tr H(\theta)=\mathbb{E}_x||\nabla f(\theta,x)||^2,
\end{equation}
where we use the assumption that $e(x,\theta)$ is independent of $x$ and has zero mean. By \eqref{eq:S} we have
\begin{equation}
S:=\mathbb{E}_x||\nabla\ell(x,\theta)||^2=\Tr[\Sigma_\epsilon\mathbb{E}_x[\nabla f(x,\theta)\nabla f(x,\theta)^T]].
\end{equation}
The proof is complete by combining them.
\end{proof}

This inequality makes it precise when minimizing fluctuation is not the same as minimizing the flatness. Crucially, the fluctuation is only a good approximation of the sharpness when the noise in the labels is isotropic. When $\frac{\lambda_{\max}(\Sigma_\epsilon)}{\lambda_{\min}(\Sigma_\epsilon)}$ is large, the bound becomes very loose and minimizing fluctuation no longer leads to minimizing flatness. 


\section{Related Works}


\paragraph{Existing minimal models of sharpness and edge of stability.}
Following the initial empirical discovery of progressive sharpening and the Edge of Stability (EoS) effect in GD and SGD \citep{xing_walk_2018,jastrzebski_relation_2019,jastrzebski_break-even_2020,cohen2021gradient,andreyev2024edge,cohen_understanding_2024}, considerable effort has been devoted to analyzing these phenomena through minimal models. \cite{agarwala2022second} analytically demonstrated progressive sharpening in a quadratic model as a direct consequence of EoS. \cite{li_analyzing_2022} showed progressive sharpening on a shallow network with tanh-activation. \cite{damian2022self} attributed these effects to the influence of cubic terms in the Taylor expansion of the loss function. 
Our work departs from these studies by demonstrating that such phenomena can arise from the intrinsic stochasticity of SGD, rather than being solely a function of the step size.
Importantly, \cite{ahn2023learning} proved the existence of oscillations at EoS in two-layer ReLU networks and highlighted its benefits for generalization, while \cite{beneventano2025gradient,ghosh_learning_2025} argued that oscillations in linear models implicitly bias GD toward flatter solutions. Recent work investigated the convergence and other implicit regularization aspects of \textsc{EoS} regime, e.g., see
\citep{arora_understanding_2022, wang_large_2022,ahn_understanding_2022,zhu_understanding_2023,lyu_understanding_2023}. 

\paragraph{Deep linear networks.} The key object of our study is the deep linear network model (also known as deep matrix factorization). 
Deep linear networks serve as essential proxies for understanding the training dynamics of complex architectures. However, most existing literature focuses on their behavior under gradient descent or gradient flow \citep{baldi_neural_1989,saxe2013exact,gunasekar_implicit_2017,arora2018convergence, eftekhari2020training, bah2022learning,du2019width, gidel_implicit_2019,tarmoun_understanding_2021,xu2023linear, nguegnang2024convergence}, where some of them suggest that GD possesses an implicit bias toward flatter minima in parameter space in these models \citep{marion2024deep,beneventano2025gradient,ghosh_learning_2025}. While some studies have also characterized their loss landscapes—showing that all local minima are global \citep{kawaguchi2016deep, laurent2018deep}---the role of noise remains less explored. Although \cite{ziyin2022exact} analyzed the global minima of deep linear networks with weight decay, our analysis specifically addresses the implicit bias induced by the stochasticity of SGD. Other closely related works are \cite{pesme2021implicit,even2023s,beneventano_how_2024}, which analyze diagonal neural networks (multiple dynamics with 1-dimensional label covariances), and show that the stochasticity is beneficial. 

\paragraph{Effect of noise in the labels.} There are a lot of studies on the implicit bias of noise in the labels. It is shown that the noise in the labels has an implicit regularization effect that biases the model to minima of weaker noise \cite{blanc2020implicit,li2021happens,haochen2021shape} or flatter minimum
\cite{damian2021label}. They also show that such a regularization effect is beneficial for generalization \cite{vivien2022label,huang2025does}. 
Importantly, all these work allow only 1-dimensional output. 1-dimensional matrices always have condition number 1. Our work, in contrast, shows that the noisy labeling can also bias the model to sharper solutions, dependent on the condition number. 

\section{Conclusion}
In this work, we studied in depth how and when SGD training can have a flatness or sharpness-seeking behavior. We provide an exact analytical solution for the dynamics of deep linear networks under the minimal gradient fluctuation constraint. In this model, we demonstrate that the converged sharpness is unique and independent of initialization. A key insight is that noise in the labels imbalance is a primary driver of the sharpening phenomenon -- a conclusion further validated through experiments on realistic datasets across model types. This is a novel form of progressive sharpening, of a different nature from the ones previously discussed in the literature. 

There are various limitations, each of which is worthwhile to be addressed by future work. First of all, our theory is effective in nature and does not directly reveal the dynamical aspects of training. A valuable open problem is thus to study the dynamics of reaching sharper and flatter solutions. Secondly, our theory only studies the leading order effect in $\eta$ during training, which tends to suppress fluctuation. This essentially corresponds to the small learning rate regime of training (but not as small as what is needed for gradient flow), and it is unclear what role higher-order terms in the entropic loss could play. Thus, understanding the flattening and sharpening effects with larger learning rates remains open. For example, it could well be the case that at a very large learning rate, the sharpness-seeking behavior becomes strongly suppressed again. For a similar reason, within our framework, only progressive sharpening can be explained, but not EoS. This is because to derive the entropic loss \eqref{eq: free energy}, one needs to essentially assume that the learning is somewhat stable \cite{smith2021origin}. Solving these problems will certainly advance our understanding of deep learning optimization.

\section*{Acknowledgment}

The authors thank Prof. Tomaso Poggio for discussion. ILC acknowledges support in part from the Institute for Artificial Intelligence and Fundamental Interactions (IAIFI) through NSF Grant No. PHY-2019786.

\bibliographystyle{alpha}
\bibliography{main}

@article{roberts_sgd_2021,
	title = {{SGD} {Implicitly} {Regularizes} {Generalization} {Error}},
	url = {http://arxiv.org/abs/2104.04874},
	urldate = {2021-08-05},
	journal = {arXiv:2104.04874 [cs, stat]},
	author = {Roberts, Daniel A.},
	month = apr,
	year = {2021},
	note = {arXiv:2104.04874},
}

@misc{mulayoff_exact_2024,
      title={Exact Mean Square Linear Stability Analysis for SGD}, 
      author={Rotem Mulayoff and Tomer Michaeli},
      year={2024},
      eprint={2306.07850},
      archivePrefix={arXiv},
      primaryClass={cs.LG},
      url={https://arxiv.org/abs/2306.07850}, 
}

@misc{xing_walk_2018,
	title = {A {Walk} with {SGD}},
	url = {http://arxiv.org/abs/1802.08770},
	language = {en},
	urldate = {2024-10-01},
	publisher = {arXiv},
	author = {Xing, Chen and Arpit, Devansh and Tsirigotis, Christos and Bengio, Yoshua},
	month = may,
	year = {2018},
	note = {arXiv:1802.08770 [cs, stat]},
}

@misc{jastrzebski_relation_2019,
	title = {On the {Relation} {Between} the {Sharpest} {Directions} of {DNN} {Loss} and the {SGD} {Step} {Length}},
	url = {http://arxiv.org/abs/1807.05031},
	doi = {10.48550/arXiv.1807.05031},
	urldate = {2025-02-04},
	publisher = {arXiv},
	author = {Jastrz\k{e}bski, Stanis\l{}aw and Kenton, Zachary and Ballas, Nicolas and Fischer, Asja and Bengio, Yoshua and Storkey, Amos},
	month = dec,
	year = {2019},
	note = {arXiv:1807.05031 [stat]},
}

@article{jastrzebski_break-even_2020,
	title = {The {Break}-{Even} {Point} on {Optimization} {Trajectories} of {Deep} {Neural} {Networks}},
	url = {http://arxiv.org/abs/2002.09572},
	language = {en},
	urldate = {2021-10-06},
	journal = {arXiv:2002.09572 [cs, stat]},
	author = {Jastrz\k{e}bski, Stanis\l{}aw and Szymczak, Maciej and Fort, Stanislav and Arpit, Devansh and Tabor, Jacek and Cho, Kyunghyun and Geras, Krzysztof},
	month = feb,
	year = {2020},
	note = {arXiv:2002.09572},
}

@article{ziyin2025neural,
  title={Neural Thermodynamics: Entropic Forces in Deep and Universal Representation Learning},
  author={Ziyin, Liu and Xu, Yizhou and Chuang, Isaac},
  journal={Advances in Neural Information Processing Systems},
  year={2025}
}

@inproceedings{li2021happens,
  title={What Happens after SGD Reaches Zero Loss?--A Mathematical Framework},
  author={Li, Zhiyuan and Wang, Tianhao and Arora, Sanjeev},
  booktitle={International Conference on Learning Representations},
  year={2021}
}

@article{wu2022alignment,
  title={The alignment property of {SGD} noise and how it helps select flat minima: A stability analysis},
  author={Wu, Lei and Wang, Mingze and Su, Weijie},
  journal={Advances in Neural Information Processing Systems},
  volume={35},
  pages={4680--4693},
  year={2022}
}

@ARTICLE{Dinh_SharpMinima,
   author = {{Dinh}, L. and {Pascanu}, R. and {Bengio}, S. and {Bengio}, Y.
	},
    title = "{Sharp Minima Can Generalize For Deep Nets}",
  journal = {ArXiv e-prints},
archivePrefix = "arXiv",
   eprint = {1703.04933},
 primaryClass = "cs.LG",
     year = 2017,
    month = mar,
   adsurl = {http://adsabs.harvard.edu/abs/2017arXiv170304933D}
}

@inproceedings{ziyin2024parameter,
  title={Parameter symmetry and noise equilibrium of stochastic gradient descent},
  author={Ziyin, Liu and Wang, Mingze and Li, Hongchao and Wu, Lei},
  booktitle={The Thirty-eighth Annual Conference on Neural Information Processing Systems},
  year={2024}
}

@article{keskar2016large,
  title={On large-batch training for deep learning: Generalization gap and sharp minima},
  author={Keskar, Nitish Shirish and Mudigere, Dheevatsa and Nocedal, Jorge and Smelyanskiy, Mikhail and Tang, Ping Tak Peter},
  journal={arXiv preprint arXiv:1609.04836},
  year={2016}
}

@article{wu2018sgd,
  title={How sgd selects the global minima in over-parameterized learning: A dynamical stability perspective},
  author={Wu, Lei and Ma, Chao and others},
  journal={Advances in Neural Information Processing Systems},
  volume={31},
  year={2018}
}

@article{xie2020diffusion,
  title={A diffusion theory for deep learning dynamics: Stochastic gradient descent exponentially favors flat minima},
  author={Xie, Zeke and Sato, Issei and Sugiyama, Masashi},
  journal={arXiv preprint arXiv:2002.03495},
  year={2020}
}

@inproceedings{wu2023implicit,
  title={The implicit regularization of dynamical stability in stochastic gradient descent},
  author={Wu, Lei and Su, Weijie J},
  booktitle={International Conference on Machine Learning},
  pages={37656--37684},
  year={2023},
  organization={PMLR}
}

@article{cohen2021gradient,
  title={Gradient descent on neural networks typically occurs at the edge of stability},
  author={Cohen, Jeremy M and Kaur, Simran and Li, Yuanzhi and Kolter, J Zico and Talwalkar, Ameet},
  journal={arXiv preprint arXiv:2103.00065},
  year={2021}
}

@article{agarwala2022second,
  title={Second-order regression models exhibit progressive sharpening to the edge of stability},
  author={Agarwala, Atish and Pedregosa, Fabian and Pennington, Jeffrey},
  journal={arXiv preprint arXiv:2210.04860},
  year={2022}
}

@article{hochreiter1997flat,
  title={Flat minima},
  author={Hochreiter, Sepp and Schmidhuber, J{\"u}rgen},
  journal={Neural computation},
  volume={9},
  number={1},
  pages={1--42},
  year={1997},
  publisher={MIT Press One Rogers Street, Cambridge, MA 02142-1209, USA journals-info~…}
}

@article{foret2020sharpness,
  title={Sharpness-aware minimization for efficiently improving generalization},
  author={Foret, Pierre and Kleiner, Ariel and Mobahi, Hossein and Neyshabur, Behnam},
  journal={arXiv preprint arXiv:2010.01412},
  year={2020}
}

@article{zhu2018anisotropic,
  title={The anisotropic noise in stochastic gradient descent: Its behavior of escaping from sharp minima and regularization effects},
  author={Zhu, Zhanxing and Wu, Jingfeng and Yu, Bing and Wu, Lei and Ma, Jinwen},
  journal={arXiv preprint arXiv:1803.00195},
  year={2018}
}

@article{smith2021origin,
  title={On the origin of implicit regularization in stochastic gradient descent},
  author={Smith, Samuel L and Dherin, Benoit and Barrett, David GT and De, Soham},
  journal={arXiv preprint arXiv:2101.12176},
  year={2021}
}

@article{macdonald2023progressive,
  title={On progressive sharpening, flat minima and generalisation},
  author={MacDonald, Lachlan Ewen and Valmadre, Jack and Lucey, Simon},
  journal={arXiv preprint arXiv:2305.14683},
  year={2023}
}

@article{cohen2022adaptive,
  title={Adaptive gradient methods at the edge of stability},
  author={Cohen, Jeremy M and Ghorbani, Behrooz and Krishnan, Shankar and Agarwal, Naman and Medapati, Sourabh and Badura, Michal and Suo, Daniel and Cardoze, David and Nado, Zachary and Dahl, George E and others},
  journal={arXiv preprint arXiv:2207.14484},
  year={2022}
}

@article{damian2022self,
  title={Self-stabilization: The implicit bias of gradient descent at the edge of stability},
  author={Damian, Alex and Nichani, Eshaan and Lee, Jason D},
  journal={arXiv preprint arXiv:2209.15594},
  year={2022}
}

@article{beneventano2025gradient,
  title={Gradient Descent Converges Linearly to Flatter Minima than Gradient Flow in Shallow Linear Networks},
  author={Beneventano, Pierfrancesco and Woodworth, Blake},
  journal={arXiv preprint arXiv:2501.09137},
  year={2025}
}

@article{ahn2023learning,
  title={Learning threshold neurons via edge of stability},
  author={Ahn, Kwangjun and Bubeck, S{\'e}bastien and Chewi, Sinho and Lee, Yin Tat and Suarez, Felipe and Zhang, Yi},
  journal={Advances in Neural Information Processing Systems},
  volume={36},
  pages={19540--19569},
  year={2023}
}

@article{andreyev2024edge,
  title={Edge of stochastic stability: Revisiting the edge of stability for sgd},
  author={Andreyev, Arseniy and Beneventano, Pierfrancesco},
  journal={arXiv preprint arXiv:2412.20553},
  year={2024}
}

@article{saxe2013exact,
  title={Exact solutions to the nonlinear dynamics of learning in deep linear neural networks},
  author={Saxe, Andrew M and McClelland, James L and Ganguli, Surya},
  journal={arXiv preprint arXiv:1312.6120},
  year={2013}
}

@article{arora2018convergence,
  title={A convergence analysis of gradient descent for deep linear neural networks},
  author={Arora, Sanjeev and Cohen, Nadav and Golowich, Noah and Hu, Wei},
  journal={arXiv preprint arXiv:1810.02281},
  year={2018}
}

@inproceedings{xu2023linear,
  title={Linear convergence of gradient descent for finite width over-parametrized linear networks with general initialization},
  author={Xu, Ziqing and Min, Hancheng and Tarmoun, Salma and Mallada, Enrique and Vidal, Ren{\'e}},
  booktitle={International Conference on Artificial Intelligence and Statistics},
  pages={2262--2284},
  year={2023},
  organization={PMLR}
}

@article{nguegnang2024convergence,
  title={Convergence of gradient descent for learning linear neural networks},
  author={Nguegnang, Gabin Maxime and Rauhut, Holger and Terstiege, Ulrich},
  journal={Advances in Continuous and Discrete Models},
  volume={2024},
  number={1},
  pages={23},
  year={2024},
  publisher={Springer}
}

@article{ziyin2022exact,
  title={Exact solutions of a deep linear network},
  author={Ziyin, Liu and Li, Botao and Meng, Xiangming},
  journal={Advances in Neural Information Processing Systems},
  volume={35},
  pages={24446--24458},
  year={2022}
}

@article{marion2024deep,
  title={Deep linear networks for regression are implicitly regularized towards flat minima},
  author={Marion, Pierre and Chizat, L{\'e}na{\"\i}c},
  journal={Advances in Neural Information Processing Systems},
  volume={37},
  pages={76848--76900},
  year={2024}
}

@article{kawaguchi2016deep,
  title={Deep learning without poor local minima},
  author={Kawaguchi, Kenji},
  journal={Advances in neural information processing systems},
  volume={29},
  year={2016}
}

@inproceedings{laurent2018deep,
  title={Deep linear networks with arbitrary loss: All local minima are global},
  author={Laurent, Thomas and Brecht, James},
  booktitle={International conference on machine learning},
  pages={2902--2907},
  year={2018},
  organization={PMLR}
}

@article{pesme2021implicit,
  title={Implicit bias of sgd for diagonal linear networks: a provable benefit of stochasticity},
  author={Pesme, Scott and Pillaud-Vivien, Loucas and Flammarion, Nicolas},
  journal={Advances in Neural Information Processing Systems},
  volume={34},
  pages={29218--29230},
  year={2021}
}

@article{damian2021label,
  title={Label noise sgd provably prefers flat global minimizers},
  author={Damian, Alex and Ma, Tengyu and Lee, Jason D},
  journal={Advances in Neural Information Processing Systems},
  volume={34},
  pages={27449--27461},
  year={2021}
}

@inproceedings{haochen2021shape,
  title={Shape matters: Understanding the implicit bias of the noise covariance},
  author={HaoChen, Jeff Z and Wei, Colin and Lee, Jason and Ma, Tengyu},
  booktitle={Conference on Learning Theory},
  pages={2315--2357},
  year={2021},
  organization={PMLR}
}

@inproceedings{blanc2020implicit,
  title={Implicit regularization for deep neural networks driven by an ornstein-uhlenbeck like process},
  author={Blanc, Guy and Gupta, Neha and Valiant, Gregory and Valiant, Paul},
  booktitle={Conference on learning theory},
  pages={483--513},
  year={2020},
  organization={PMLR}
}

@inproceedings{vivien2022label,
  title={Label noise (stochastic) gradient descent implicitly solves the lasso for quadratic parametrisation},
  author={Pillaud-Vivien, Loucas and Reygner, Julien and Flammarion, Nicolas},
  booktitle={Conference on Learning Theory},
  pages={2127--2159},
  year={2022},
  organization={PMLR}
}

@article{huang2025does,
  title={How Does Label Noise Gradient Descent Improve Generalization in the Low SNR Regime?},
  author={Huang, Wei and Han, Andi and Song, Yujin and Chen, Yilan and Wu, Denny and Zou, Difan and Suzuki, Taiji},
  journal={arXiv preprint arXiv:2510.17526},
  year={2025}
}

@article{beneventano2023trajectories,
  title={On the trajectories of sgd without replacement},
  author={Beneventano, Pierfrancesco},
  journal={arXiv preprint arXiv:2312.16143},
  year={2023}
}

@article{beneventano_how_2024,
	title = {How {Neural} {Networks} {Learn} the {Support} is an {Implicit} {Regularization} {Effect} of {SGD}},
	url = {http://arxiv.org/abs/2406.11110},
        doi = {10.48550/arXiv.2406.11110},
        publisher = {arXiv},
	language = {en},
	journal = {arXiv:2406.11110 [cs, math, stat]},
	author = {Beneventano, Pierfrancesco and Pinto, Andrea and Poggio, Tomaso},
	month = jun,
	year = {2024},
        note = {arXiv:2406.11110 [cs, math, stat]},
}

@article{kamber2025sharpness,
  title={Sharpness of Minima in Deep Matrix Factorization: Exact Expressions},
  author={Kamber, Anil and Parhi, Rahul},
  journal={arXiv preprint arXiv:2509.25783},
  year={2025}
}

@article{even2023s,
  title={(S) GD over Diagonal Linear Networks: Implicit bias, Large Stepsizes and Edge of Stability},
  author={Even, Mathieu and Pesme, Scott and Gunasekar, Suriya and Flammarion, Nicolas},
  journal={Advances in Neural Information Processing Systems},
  volume={36},
  pages={29406--29448},
  year={2023}
}

@inproceedings{gidel_implicit_2019,
	title = {Implicit {Regularization} of {Discrete} {Gradient} {Dynamics} in {Linear} {Neural} {Networks}},
	volume = {32},
	url = {https://proceedings.neurips.cc/paper_files/paper/2019/hash/f39ae9ff3a81f499230c4126e01f421b-Abstract.html},
	urldate = {2024-04-22},
	booktitle = {Advances in {Neural} {Information} {Processing} {Systems}},
	publisher = {Curran Associates, Inc.},
	author = {Gidel, Gauthier and Bach, Francis and Lacoste-Julien, Simon},
	year = {2019},
}

@inproceedings{tarmoun_understanding_2021,
	title = {Understanding the dynamics of gradient flow in overparameterized linear models},
	booktitle = {International {Conference} on {Machine} {Learning}},
	publisher = {PMLR},
	author = {Tarmoun, Salma and Franca, Guilherme and Haeffele, Benjamin D and Vidal, Rene},
	year = {2021},
	pages = {10153--10161},
}

@misc{ghosh_learning_2025,
      title={Learning Dynamics of Deep Linear Networks Beyond the Edge of Stability}, 
      author={Avrajit Ghosh and Soo Min Kwon and Rongrong Wang and Saiprasad Ravishankar and Qing Qu},
      year={2025},
      eprint={2502.20531},
      archivePrefix={arXiv},
      primaryClass={stat.ML},
      url={https://arxiv.org/abs/2502.20531}, 
}

@misc{cohen_understanding_2024,
	title = {Understanding {Optimization} in {Deep} {Learning} with {Central} {Flows}},
	url = {http://arxiv.org/abs/2410.24206},
	doi = {10.48550/arXiv.2410.24206},
	publisher = {arXiv},
	author = {Cohen, Jeremy M. and Damian, Alex and Talwalkar, Ameet and Kolter, Zico and Lee, Jason D.},
	month = oct,
	year = {2024},
	note = {arXiv:2410.24206},
}

@misc{arora_understanding_2022,
	title = {Understanding {Gradient} {Descent} on {Edge} of {Stability} in {Deep} {Learning}},
	url = {http://arxiv.org/abs/2205.09745},
	doi = {10.48550/arXiv.2205.09745},
	urldate = {2025-01-26},
	publisher = {arXiv},
	author = {Arora, Sanjeev and Li, Zhiyuan and Panigrahi, Abhishek},
	month = oct,
	year = {2022},
	note = {arXiv:2205.09745 [cs]},
}

@inproceedings{wang_large_2022,
title={Large Learning Rate Tames Homogeneity: Convergence and Balancing Effect},
author={Wang, Yuqing and Chen, Minshuo and Zhao, Tuo and Tao, Molei},
booktitle={International Conference on Learning Representations},
year={2022},
url={https://openreview.net/forum?id=3tbDrs77LJ5}
}

@inproceedings{ahn_understanding_2022,
	title = {Understanding the unstable convergence of gradient descent},
	url = {https://proceedings.mlr.press/v162/ahn22a.html},
	language = {en},
	urldate = {2024-10-02},
	booktitle = {Proceedings of the 39th {International} {Conference} on {Machine} {Learning}},
	author = {Ahn, Kwangjun and Zhang, Jingzhao and Sra, Suvrit},
	month = jun,
	year = {2022},
}

@misc{lyu_understanding_2023,
	title = {Understanding the {Generalization} {Benefit} of {Normalization} {Layers}: {Sharpness} {Reduction}},
	shorttitle = {Understanding the {Generalization} {Benefit} of {Normalization} {Layers}},
	url = {http://arxiv.org/abs/2206.07085},
	doi = {10.48550/arXiv.2206.07085},
	urldate = {2024-10-02},
	publisher = {arXiv},
	author = {Lyu, Kaifeng and Li, Zhiyuan and Arora, Sanjeev},
	month = jan,
	year = {2023},
	note = {arXiv:2206.07085 [cs]},
}

@misc{li_analyzing_2022,
	title = {Analyzing {Sharpness} along {GD} {Trajectory}: {Progressive} {Sharpening} and {Edge} of {Stability}},
	shorttitle = {Analyzing {Sharpness} along {GD} {Trajectory}},
	url = {http://arxiv.org/abs/2207.12678},
	doi = {10.48550/arXiv.2207.12678},
	author = {Li, Zhouzi and Wang, Zixuan and Li, Jian},
	month = nov,
	year = {2022},
	note = {arXiv:2207.12678},
}

@misc{zhu_understanding_2023,
	title = {{UNDERSTANDING} {EDGE}-{OF}-{STABILITY} {TRAINING} {DYNAMICS} {WITH} {A} {MINIMALIST} {EXAMPLE}},
	language = {en},
	author = {Zhu, Xingyu and Wang, Zixuan and Wang, Xiang and Zhou, Mo and Ge, Rong},
	year = {2023},
}

@article{baldi_neural_1989,
	title = {Neural networks and principal component analysis: {Learning} from examples without local minima},
	volume = {2},
	issn = {08936080},
	shorttitle = {Neural networks and principal component analysis},
	url = {https://linkinghub.elsevier.com/retrieve/pii/0893608089900142},
	doi = {10.1016/0893-6080(89)90014-2},
	language = {en},
	number = {1},
	urldate = {2023-10-25},
	journal = {Neural Networks},
	author = {Baldi, Pierre and Hornik, Kurt},
	month = jan,
	year = {1989},
	pages = {53--58},
}

@inproceedings{gunasekar_implicit_2017,
	title = {Implicit regularization in matrix factorization},
	booktitle = {Advances in {Neural} {Information} {Processing} {Systems}},
	author = {Gunasekar, Suriya and Woodworth, Blake E and Bhojanapalli, Srinadh and Neyshabur, Behnam and Srebro, Nati},
	year = {2017},
	pages = {6151--6159},
}

@inproceedings{du2019width,
  title={Width provably matters in optimization for deep linear neural networks},
  author={Du, Simon and Hu, Wei},
  booktitle={International Conference on Machine Learning},
  pages={1655--1664},
  year={2019},
  organization={PMLR}
}

@inproceedings{eftekhari2020training,
  title={Training linear neural networks: Non-local convergence and complexity results},
  author={Eftekhari, Armin},
  booktitle={International Conference on Machine Learning},
  pages={2836--2847},
  year={2020},
  organization={PMLR}
}

@article{bah2022learning,
  title={Learning deep linear neural networks: Riemannian gradient flows and convergence to global minimizers},
  author={Bah, Bubacarr and Rauhut, Holger and Terstiege, Ulrich and Westdickenberg, Michael},
  journal={Information and Inference: A Journal of the IMA},
  volume={11},
  number={1},
  pages={307--353},
  year={2022},
  publisher={Oxford University Press}
}

\appendix

\clearpage

\section{Theory}
\subsection{Proof of Lemma \ref{lemma:sharpness}}
\begin{proof}
By the definition of $A$-symmetry, the loss function satisfies the following invariance property for any $\lambda$:
\begin{equation}
\ell(x,y,e^{\lambda A}\theta)=\ell(x,y,\theta).
\end{equation}
To analyze the sharpness, we take the second-order derivative with respect to $\theta$ on both sides. Applying the chain rule, we obtain:
\begin{equation}
   e^{\lambda A} \nabla_{e^{\lambda A}\theta }^2 \ell (x,y, e^{\lambda A}\theta ) e^{\lambda A}  = \nabla^2 \ell(x,y, \theta),
\end{equation}
where we use the fact that $A$ is symmetric.

Using the linearity and cyclic property of the trace, we have:
\begin{equation}
T(e^{\lambda A}\theta):=\Tr[\mathbb{E}_{x,y}\nabla_{e^{\lambda A}\theta }^2 \ell (x,y, e^{\lambda A}\theta )]=\Tr[e^{-2\lambda A}\E_{x,y}\nabla^2\ell(x,y,\theta)].
\end{equation}
Let $A:=\sum_i\mu_in_in_i^T$. The sharpness can then be expressed as a weighted sum:
\begin{equation}
T(e^{\lambda A}\theta)=\sum_{i}e^{-2\lambda\mu_i}(n_i^T\E_{x,y}\nabla^2\ell(x,y,\theta)n_i).
\end{equation}
By the assumption $A \mathbb{E}{x,y} \nabla^2 \ell(x, y, \theta) \neq 0$, there must exist at least one index $i$ such that $\mu_i\neq0$ and  $n_i^T\E\nabla^2\ell(x,\theta)n_i\neq0$.  Consequently, the term $e^{-2\lambda \mu_i}$ will dominate as $\lambda$ tends to infinity with the appropriate sign. Specifically, we have $\lim_{\lambda\to+\infty}|T(e^{\lambda A}\theta)|=+\infty$ if there exists $\mu_i<0$, and $\lim_{\lambda\to-\infty}|T(e^{\lambda A}\theta)|=+\infty$ if there exists $\mu_i>0$.
\end{proof}

\subsection{Proof of Lemma \ref{lemma: minimal fluctuation}}
We first provide a formal statement of Lemma \ref{lemma: minimal fluctuation}.
\begin{lemma}
Let $f, S$ be continuous functions, and let $f$ be coercive (i.e., $\lim_{|\theta| \to \infty} f(\theta) = +\infty$). Define the perturbed objective function
\begin{equation}
F_\eta(\theta)=f(\theta)+\eta S(\theta).
\end{equation}
Let $\{\eta_k\}_{k=1}^\infty$ be a sequence of positive numbers such that $\eta_k \to 0$ as $k \to \infty$. Suppose
\begin{equation}
\theta_k\in\arg\min_\theta f(\theta)+\eta_k S(\theta).
\end{equation}
and $\theta_k\to\bar{\theta}$\footnote{As $f$ is coercive, $M$ is compact and thus there exists a subsequence of $\{\theta_k\}$ which converges. Here for simplicity we assume $\{\theta_k\}$ converges.} as $k \to \infty$. Then $\bar{\theta}$ minimizes $S$ over the set of minimizers of $f$, i.e., \begin{equation} \bar{\theta} \in \arg\min_{\theta \in M} S(\theta), \quad \text{where } M := \arg\min_{\theta} f(\theta). \end{equation} 
\end{lemma}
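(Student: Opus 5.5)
The argument is the standard penalty-method comparison: test each $\theta_k$ against an arbitrary minimizer of $f$, and let $k\to\infty$.

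\textbf{Step 1 ($\bar\theta\in M$).} Since $f$ is continuous and coercive, $M$ is nonempty and compact. Fix any $\theta^*\in M$ and set $f^*:=f(\theta^*)=\min f$. Optimality of $\theta_k$ gives
\begin{equation}
f(\theta_k)+\eta_k S(\theta_k)\le f^*+\eta_k S(\theta^*).
\end{equation}
The sequence $\theta_k$ converges, so it lies in a compact set, and continuity of $S$ bounds $S(\theta_k)$ from below uniformly in $k$. The display then yields $f^*\le f(\theta_k)\le f^*+\eta_k\,(S(\theta^*)-S(\theta_k))$, and the right-hand side tends to $f^*$ as $\eta_k\to0$. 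By continuity, $f(\bar\theta)=\lim_k f(\theta_k)=f^*$, hence $\bar\theta\in M$.

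\textbf{Step 2 ($\bar\theta$ minimizes $S$ on $M$).} Since $f(\theta_k)\ge f^*$, the same display gives $\eta_k S(\theta_k)\le \eta_k S(\theta^*)$. Dividing by $\eta_k>0$ yields $S(\theta_k)\le S(\theta^*)$ for every $k$. Letting $k\to\infty$ and using continuity of $S$ gives $S(\bar\theta)\le S(\theta^*)$. Because $\theta^*\in M$ was arbitrary, $\bar\theta\in\arg\min_{\theta\in M}S(\theta)$.

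\textbf{Remarks.} No step is genuinely hard. The only point needing care is Step 1, which requires a uniform lower bound on $S(\theta_k)$; this comes from convergence of $\theta_k$ (boundedness) together with continuity of $S$. If one does not assume the whole sequence converges, coercivity of $f$ combined with $f(\theta_k)\le f^*+\eta_k(S(\theta^*)-S(\theta_k))$ still keeps $\theta_k$ bounded once $S$ is bounded below, for example $S\ge0$ as in \eqref{eq: free energy}. One then applies the same argument to a convergent subsequence.
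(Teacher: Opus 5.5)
Your proof is correct and matches the paper's argument: you compare $\theta_k$ with an arbitrary $\theta^*\in M$, use convergence of $\theta_k$ and continuity of $S$ to make $\eta_k\bigl(S(\theta^*)-S(\theta_k)\bigr)\to 0$ (giving $\bar\theta\in M$), and pass to the limit in $S(\theta_k)\le S(\theta^*)$. Your closing remark on boundedness of $\theta_k$ (coercivity of $f$ together with $S\ge 0$) also justifies the existence of a convergent subsequence more carefully than the paper's footnote, which appeals only to compactness of $M$, and that alone does not bound the $\theta_k$.
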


\begin{proof}
Since $f$ is coercive and continuous, the set of minimizers $M$ is non-empty and compact. Let $y \in M$ be any global minimizer of $f$. By the optimality of $\theta_k$ for $F_{\eta_k}$, we have:
\begin{equation}
f(\theta_k) + \eta_k S(\theta_k) \leq f(y) + \eta_k S(y).
\end{equation}
Rearranging the terms, we obtain:
\begin{equation}
f(\theta_k) - f(y) \leq \eta_k (S(y) - S(\theta_k))
\label{ineq:f,S}
\end{equation}
Since $y \in M$, we have $f(\theta_k) \geq f(y)$, which implies $f(\theta_k) - f(y) \geq 0$. Consequently, from \eqref{ineq:f,S} and the fact that $\eta_k > 0$, it follows that:
\begin{equation}
S(\theta_k) \leq S(y).
\label{eq:S_ineq}
\end{equation}
Taking the limit as $k \to \infty$ in \eqref{ineq:f,S}, and noting that $S$ is continuous (thus $S(\theta_k)$ remains bounded as $\theta_k \to \bar{\theta}$), the right-hand side vanishes:
\begin{equation}
0 \leq \lim_{k \to \infty} (f(\theta_k) - f(y)) \leq \lim_{k \to \infty} \eta_k (S(y) - S(\theta_k)) = 0.
\end{equation}
By the continuity of $f$, we have $f(\bar{\theta}) = f(y)$, which implies $\bar{\theta} \in M$.

Finally, we consider the inequality \eqref{eq:S_ineq}. Taking the limit $k \to \infty$ on both sides and utilizing the continuity of $S$, we obtain:
\begin{equation}
S(\bar{\theta}) \leq S(y) \quad \text{for all } y \in M.
\end{equation}
This inequality demonstrates that $\bar{\theta}$ minimizes $S$ over $M$, i.e., $\bar{\theta} \in \arg\min_{\theta \in M} S(\theta)$, which completes the proof.
\end{proof}

\subsection{Proof of Theorem \ref{theo:main}}
\begin{proof}
Step 1: Computing the Hessian Trace.

We begin by evaluating the trace of the Hessian with respect to the weight matrices $\{W_k\}_{k=1}^D$. For each layer $k$, let $A_k := W_D \cdots W_{k+1}$ and $B_k := W_{k-1} \cdots W_1$, with the conventions $A_D = I_{d_y}$ and $B_1 = I_{d_x}$.  The loss function for a single sample can be written as $\ell(x, y,\theta) = \frac{1}{2} \|y - A_k W_k B_k x\|^2$. The second-order partial derivative with respect to $W_k$ yields the following identity:
\begin{equation}
\Tr\frac{\partial^2\ell}{\partial W_k^2}=\Tr A_k^TA_k\Tr B_kxx^TB_k^T.
\end{equation}
By taking the expectation over the data distribution and summing over all layers $k=1, \dots, D$, the total sharpness $T(\theta)$ is given by:
\begin{equation}
T(\theta)=\sum_{k=1}^D\mathbb{E}_{x,y}\Tr\frac{\partial^2\ell}{\partial W_k^2}=\sum_{k=1}^D\Tr [A_k^TA_k]\Tr [B_k\Sigma_xB_k^T].
\label{eq:T-A-B}
\end{equation}
This establishes the structural form of the sharpness. The minimization of $T(\theta)$ as per \eqref{eq:minT} then follows directly from applying Lemma \ref{lemma:min_sharpness} to the summation in \eqref{eq:T-A-B}.

Step 2: Analysis of Sharpness at the Minimum of \eqref{eq: free energy}.

For the solutions from Lemma \ref{lemma:deep_linear}, the product matrices are decomposed as
\begin{equation}
A_k=\sqrt{\Sigma_{\epsilon}^{-1}}L\Sigma_D\cdots\Sigma_{k+1}U_{k}^T,\ B_k=U_{k-1}\Sigma_{k-1}\cdots\Sigma_1R\Sigma_x^{-1/2}.
\end{equation}
For the intermediate layers $k \in \{2, \dots, D-1\}$, the trace product simplifies to:
\begin{equation}
\begin{aligned}
\Tr [A_k^TA_k]\Tr [B_k\Sigma_xB_k^T]&=\Tr[\Sigma_\epsilon^{-1}L\Sigma^2_D\cdots\Sigma_{k+1}^2L^T]\Tr[\Sigma_1^2\cdots\Sigma_{k-1}^2]\\
&=\left(\Tr S\right)^{-2/D}(\Tr[\Sigma_\epsilon]\Tr[\Sigma_x])^{1/D}\Tr[\Sigma_\epsilon^{-1}LSL^T]\Tr[S]
\end{aligned}
\end{equation}
The boundary layers require separate treatment. For $k=1$ and $k=D$, we have:
\begin{equation}
\Tr [A_1^TA_1]\Tr [B_1\Sigma_xB_1^T]=\Tr[\Sigma_\epsilon^{-1}L\Sigma^2_D\cdots\Sigma_{2}^2L^T]\Tr[\Sigma_x]=\left(\Tr S\right)^{(D-2)/D}\frac{\Tr[\Sigma_\epsilon]^{1/D}}{\Tr[\Sigma_x]^{(D-1)/D}}\Tr[\Sigma_\epsilon^{-1}LSL^T]\Tr[\Sigma_x]
\end{equation}
and
\begin{equation}
\Tr [A_D^TA_D]\Tr [B_D\Sigma_xB_D^T]=\Tr[I_{d_y}]\Tr[\Sigma_1^2\cdots\Sigma_{D-1}^2]=d_y\left(\Tr S\right)^{(D-2)/D}\frac{\Tr[\Sigma_x]^{1/D}}{\Tr[\Sigma_\epsilon]^{(D-1)/D}}\Tr[S].
\end{equation}
Summing these contributions over $k=1, \dots, D$ yields the final expression for $T(\theta)$:
\begin{equation}
\begin{aligned}
T(\theta)&=\sum_{k=1}^D\Tr [A_k^TA_k]\Tr [B_k\Sigma_xB_k^T]\\&=(D-1)\left(\Tr S\right)^{(D-2)/D}(\Tr[\Sigma_\epsilon]\Tr[\Sigma_x])^{1/D}\Tr[\Sigma_\epsilon^{-1}LSL^T]+d_y\left(\Tr S\right)^{2(D-1)/D}\frac{\Tr[\Sigma_x]^{1/D}}{\Tr[\Sigma_\epsilon]^{(D-1)/D}},
\end{aligned}
\end{equation}
which proves \eqref{eq:sharpness}.

Step 3: Invariance of the Hessian Spectrum. 

Finally, we verify that the spectrum of the Hessian $H(\theta)$ is constant across the manifold of solutions defined in Lemma \ref{lemma:deep_linear}. Consider two sets of weight matrices $\theta = \{W_k\}$ and $\theta' = \{W_k'\}$. According to the lemma, any two such solutions are related by a sequence of orthogonal transformations $R_k \in O(d_k)$, such that $W_k' = R_k W_k R_{k-1}^T$. This transformation can be represented as $\theta' = \mathcal{M} \theta$, where $\mathcal{M}$ is a block-diagonal orthogonal matrix. Since the loss function is invariant under these internal rotations, the Hessian transforms as $H(\theta') = \mathcal{M} H(\theta) \mathcal{M}^T$. Because $\mathcal{M}$ is orthogonal, this is a similarity transformation, which implies that the eigenvalues (and thus the spectrum) of the Hessian remain invariant.\
\end{proof}

The following are technical lemmas for Theorem \ref{theo:main}.
\begin{lemma}
\label{lemma:min_sharpness}
Consider 
\begin{equation}
\begin{aligned}
T(W_1,\cdots,W_D)&=d_y\Tr[W_{D-1}\cdots W_1\Sigma W_1^T\cdots W_{D-1}^T]+\Tr[W_D^TW_D]\Tr[W_{D-2}\cdots W_1\Sigma W_1^T\cdots W_{D-2}^T]\\&\qquad+\cdots+\Tr[W_2^T\cdots W_D^TW_D\cdots W_2]\Tr[\Sigma],
\end{aligned}
\label{eq:sharpness_expression}
\end{equation}
where $d_y$ is the output dimension. Then, under the constraint $W_D\cdots W_1=V$, we have
\begin{equation}
\min T(W_1,\cdots,W_D)=D(\Tr\hat{S})^{2(D-1)/D}(d_y\Tr\Sigma)^{1/D},
\end{equation}
where $V\sqrt{\Sigma}=\hat{L}\hat{S}\hat{R}$ is the singular value decomposition. The minimum is achieved when
\begin{equation}
\begin{aligned}
&W_D=(\Tr S)^{-\frac{D-2}{2D}}d_y^{\frac{D-1}{2D}}(\Tr\Sigma)^{-\frac{1}{2D}}\hat{L}\hat{S}^{\frac{1}{2}}U_{D-1}^T,\\ &W_k=(\Tr S)^{\frac{1}{D}}(d_y\Tr\Sigma)^{-\frac{1}{2D}}U_{k}^TU_{k-1},\\ &W_1\sqrt{\Sigma}=(\Tr S)^{-\frac{D-2}{2D}}d_y^{-\frac{1}{2D}}(\Tr\Sigma)^{\frac{D-1}{2D}}U_1\hat{S}^{\frac{1}{2}}\hat{R},
\end{aligned}
\label{eq:min_sharpness_solution}
\end{equation}
for $k=2,\cdots,D-1$, where $\{U_i\}_{i=1}^{D-1}$ are arbitrary matrices satisfying $U_i^TU_i=I_{d\times d}$.
\end{lemma}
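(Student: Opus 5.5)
The plan is to lower-bound $T$ by AM--GM and a trace inequality, and then check that the proposed solution \eqref{eq:min_sharpness_solution} attains the bound.

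Write the $D$ terms of \eqref{eq:sharpness_expression} as $t_k := \Tr[A_k^TA_k]\,\Tr[B_k\Sigma B_k^T]$, with $A_k=W_D\cdots W_{k+1}$ and $B_k=W_{k-1}\cdots W_1$, exactly as in \eqref{eq:T-A-B}. By AM--GM,
\begin{equation}
T=\sum_{k=1}^D t_k\geq D\Big(\prod_{k=1}^D t_k\Big)^{1/D}.
\end{equation}
Now regroup the factors of the product. Write $M_k:=W_k\cdots W_1\sqrt{\Sigma}$ for the partial products, so that $M_0=\sqrt{\Sigma}$ and $M_D=V\sqrt{\Sigma}$. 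Write $N_k:=W_D\cdots W_{k+1}$, so that $N_D=I_{d_y}$. Then $t_k=\|N_k\|_F^2\|M_{k-1}\|_F^2$. The product $\prod_k t_k$ contains the factor $\|N_D\|_F^2\|M_0\|_F^2=d_y\Tr\Sigma$ exactly once. The remaining factors pair up as $\|N_{k}\|_F^2\|M_{k}\|_F^2$ for $k=1,\dots,D-1$.

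For each such pair I use the inequality $\|A\|_F\|B\|_F\geq \|AB\|_*$, which follows from Cauchy--Schwarz together with the variational characterization of the nuclear norm. Since $N_kM_k=V\sqrt{\Sigma}$, each pair is at least $(\Tr\hat S)^2$. This gives
\begin{equation}
\prod_k t_k\geq d_y\Tr\Sigma\,(\Tr\hat S)^{2(D-1)},
\end{equation}
and hence the claimed lower bound $D(\Tr\hat S)^{2(D-1)/D}(d_y\Tr\Sigma)^{1/D}$.

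For attainment, I substitute \eqref{eq:min_sharpness_solution} and check two things. First, the product telescopes to $V\sqrt\Sigma$. Using $U_k^TU_k=I$, the scalar prefactors multiply to $1$ and the matrices collapse to $\hat L\hat S\hat R$. Second, every $t_k$ equals the common value $(\Tr\hat S)^{2(D-1)/D}(d_y\Tr\Sigma)^{1/D}$, which is the AM--GM equality case. Each Frobenius-norm pair is then balanced, with $N_k$ and $M_k$ proportional to $\hat L\hat S^{1/2}$ and $\hat S^{1/2}\hat R$ up to isometries, which is the equality case of the nuclear-norm inequality. I will read the $\Tr S$ appearing in \eqref{eq:min_sharpness_solution} as $\Tr\hat S$. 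I will also read $W_1$ as defined through $W_1\sqrt\Sigma$, assuming $\Sigma$ is invertible, or otherwise restrict to its range.

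The main obstacle is the bookkeeping in the regrouping step. One must verify that each intermediate norm appears exactly once on each side, so that the pairing really covers all factors. One must also handle the boundary terms $k=1$ and $k=D$ correctly, and track the exponents so that the scalar constants in the equality case make all $t_k$ equal at once. If the pairing is off by an index, I would instead bound the products of consecutive terms directly and apply AM--GM in the weighted form.
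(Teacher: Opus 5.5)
Your proof is correct. The lower bound is the paper's argument essentially verbatim: AM--GM on the $D$ terms, the same regrouping of the product into $d_y\Tr\Sigma$ times the $D-1$ pairs $\|W_D\cdots W_{k+1}\|_F^2\,\|W_k\cdots W_1\sqrt{\Sigma}\|_F^2$, and $\|A\|_F\|B\|_F\geq\|AB\|_*$ on each pair. Your indexing of that regrouping is right, so the off-by-one worry does not arise.

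The difference is in attainment. The paper runs the equality cases backwards: all terms equal, plus $A^TA\propto BB^T$ for each pair. It then uses the decomposition lemma of \cite{ziyin2025neural} and solves for the layer singular values. That derives the minimizer and aims at describing all minimizers, which is the direction needed for the ``only if'' half of Corollary~\ref{cor2}. You instead substitute the candidate and check it. This is self-contained and avoids the external lemma, and it is all the lemma as stated asserts, but it says nothing about uniqueness.

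When you substitute, read the middle layers as $W_k\propto U_kU_{k-1}^T$, as in Lemma~\ref{lemma:deep_linear}. The printed $U_k^TU_{k-1}$ is a typo and does not telescope in general. With that reading the check closes:
\begin{itemize}
\item The prefactor exponents of $\Tr\hat S$, $d_y$ and $\Tr\Sigma$ cancel in the full product.
\item $W_D\cdots W_{k+1}$ and $W_k\cdots W_1\sqrt{\Sigma}$ are scalar multiples of $\hat L\hat S^{1/2}U_k^T$ and $U_k\hat S^{1/2}\hat R$, respectively.
\item Each of the $D$ terms equals $(\Tr\hat S)^{2(D-1)/D}(d_y\Tr\Sigma)^{1/D}$, including the boundary terms $k=1$ and $k=D$.
\end{itemize}
The sum therefore equals the lower bound, and the nuclear-norm equality case need not be checked separately.
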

\begin{proof}
Step 1: Lower Bound.

We seek to minimize the sum of $D$ terms in \eqref{eq:sharpness_expression}. For brevity, let $W_1' = W_1 \Sigma^{1/2}$. By the AM-GM inequality, the sharpness $T$ is bounded below by the geometric mean of its components:
\begin{equation}
\begin{aligned}
T(W_1,\cdots,W_D)^D/D^D&\geq d_y||W_{D-1}\cdots W_1'||_F^2||W_D||_F^2||W_{D-2}\cdots W_1’||_F^2\cdots||W_D\cdots W_2||_F^2\Tr[\Sigma]\\
&=d_y\Tr[\Sigma]\Pi_{k=1}^{D-1}||W_D\cdots W_{k+1}||_F^2||W_k\cdots W_{1}'||_F^2
\end{aligned}
\end{equation}
Applying the matrix inequality
\begin{equation}
||A||_F||B||_F\geq||AB||_*
\label{eq:F-norm-ineq}
\end{equation}
where $||\cdot||_*$ is the nuclear norm (sum of singular values), to each pair in the product, we have:
\begin{equation} ||W_D \dots W_{k+1}||_F ||W_k \dots W_1'||_F \geq ||W_D \dots W_1'|| = ||V \Sigma^{1/2}||_* = \Tr\hat{S}. \end{equation}
Substituting this into the AM-GM result, we obtain the minimal sharpness:
\begin{equation}
T(W_1,\cdots,W_D)^D/D^D\geq d_y\Tr[\Sigma](\Tr\hat{S})^{2(D-1)}.
\end{equation}
Step 2: Conditions for Equality.

The minimum is achieved if and only if $A_k^TA_k=cB_kB_k^T$ for some $c\geq0$ and every term in \eqref{eq:sharpness_expression} is the same, which requires the following relations:
\begin{equation}
W_{k+1}^{\top} \frac{W_{k+2}^{\top} \cdots W_D^{\top} W_D \cdots W_{k+2}}{\Tr \left[W_{k+2}^{\top} \cdots W_D^{\top} W_D \cdots W_{k+2}\right]} W_{k+1}=W_k \frac{W_{k-1} \cdots W_1\Sigma W_1^{\top} \cdots W_{k-1}^{\top}}{\Tr \left[W_{k-1} \cdots W_1\Sigma W_1^{\top} \cdots W_{k-1}^{\top}\right]} W_k^{\top}
\label{eq:eqi}
\end{equation}
for $k=2,3,\cdots,D-2$ and
\begin{equation}
\begin{aligned}
&W_{2}^{\top} \frac{W_{3}^{\top} \cdots W_D^{\top} W_D\cdots W_{3}}{\Tr \left[W_{3}^{\top} \cdots W_D^{\top} W_D\cdots W_{3}\right]} W_{2}= \frac{W_1\Sigma W_1^{\top} }{\Tr \left[\Sigma\right]}, \\
&\frac{W_D^{\top} W_D}{d_y}=W_{D-1} \frac{W_{D-2} \cdots W_1\Sigma W_1^{\top} \cdots W_{D-2}^{\top}}{\Tr \left[W_{D-2} \cdots W_1\Sigma W_1^{\top} \cdots W_{D-2}^{\top}\right]} W_{D-1}^{\top}.
\label{eq:eq1-D}
\end{aligned}
\end{equation}

Step 3: Determining the Weights.

By Lemma 2 of \cite{ziyin2025neural}, we can decompose the matrices $W_1',W_2,\cdots,W_{D-1},W_D$ as
\begin{equation}
W_D=U_D\Sigma_DU_{D-1}^T,\ W_{D-1}=U_{D-1}\Sigma_{D-1}U_{D-2}^T,\cdots,\ W_1'=U_1\Sigma_1U_0,
\end{equation}
where $W_1':=W_1\sqrt{\Sigma}$. From $W_D\cdots W_1=V$ we have $U_D=\hat{L}$, $U_0=\hat{R}$ and
\begin{equation}
\Sigma_D\Sigma_{D-1}\cdots\Sigma_1=\hat{S}.
\label{eq:S'}
\end{equation}
We can assume $\Sigma_D,\cdots,\Sigma_1\in\mathbb{R}^{d\times d}$ without loss of generality because their ranks are the same by \eqref{eq:eqi}, \eqref{eq:eq1-D}. Then, \eqref{eq:eqi} gives
\begin{equation}
\frac{\Sigma_{i+1}^2\Sigma_{i+2}^2\cdots\Sigma_D^2}{\Tr [\Sigma_{i+2}^2\cdots\Sigma_D^2]}=\frac{\Sigma_1^2\cdots\Sigma_{i-1}^2\Sigma_i^2}{\Tr[\Sigma_1^2\cdots\Sigma_{i-1}^2]},
\end{equation}
and thus $\Sigma_i=cI_d$ for $i=2,3,\cdots,D-2$ and
\begin{equation}
\frac{\Sigma_1^2}{\Tr[\Sigma_1^2]}=\frac{\Sigma_D^2}{\Tr[\Sigma_D^2]}.
\label{eq:Sigma1D_1}
\end{equation}
\eqref{eq:eq1-D} gives
\begin{equation}
\frac{\Sigma_{2}^2\Sigma_{3}^2\cdots\Sigma_D^2}{\Tr [\Sigma_{3}^2\cdots\Sigma_D^2]}=\frac{\Sigma_1^2}{\Tr[\Sigma]},\ \frac{\Sigma_D^2}{d_y}=\frac{\Sigma_1^2\cdots\Sigma_{D-1}^2}{\Tr[\Sigma_1^2\cdots\Sigma_{D-2}^2]},
\end{equation}
and thus
\begin{equation}
c^2\frac{\Sigma_D^2}{\Tr[\Sigma_D^2]}=\frac{\Sigma_1^2}{\Tr[\Sigma]},\ \frac{\Sigma_D^2}{d_y}=c^2\frac{\Sigma_1^2}{\Tr[\Sigma_1^2]}.
\label{eq:Sigma1D_2}
\end{equation}
Combining \eqref{eq:S'}, \eqref{eq:Sigma1D_1} and \eqref{eq:Sigma1D_2}, we finish the proof.
\end{proof}

The following lemma is a restatement of Theorem 12 of \cite{ziyin2025neural}.
\begin{lemma}
\label{lemma:deep_linear} Consider a $D-$layer linear network which has more than $d$ hidden units in every layer. Then at the global minimum of \eqref{eq: free energy}, we have
\begin{equation}
\sqrt{\Sigma_\epsilon}W_{D}=L\Sigma_DU_{D-1}^T,\ W_i=U_i\Sigma_iU_{i-1}^T,\ W_1\sqrt{\Sigma_x}=U_1\Sigma_1R,
\end{equation}
for $i=2,\cdots,D-1$, where $\{U_i\}_{i=1}^{D-1}$ are arbitrary matrices satisfying $U_i^TU_i=I_{d\times d}$, and $\Sigma_x=\mathbb{E}[xx^T]$, $\Sigma_\epsilon=\mathbb{E}[\epsilon\epsilon^T]$. Moreover,
\begin{equation}
\begin{aligned}
&\Sigma_1=(\Tr S')^{-\frac{D-2}{2D}}\frac{\Tr[\Sigma_x]^{\frac{D-1}{2D}}}{\Tr[\Sigma_\epsilon]^{\frac{1}{2D}}}\sqrt{S'},\\
&\Sigma_D=(\Tr S')^{-\frac{D-2}{2D}}\frac{\Tr[\Sigma_\epsilon]^{\frac{D-1}{2D}}}{\Tr[\Sigma_x]^{\frac{1}{2D}}}\sqrt{S'},\\
&\Sigma_i=(\Tr S')^{1/D}(\Tr[\Sigma_\epsilon]\Tr[\Sigma_x])^{-\frac{1}{2D}}I_d.
\end{aligned}
\label{eq:solution}
\end{equation}
\end{lemma}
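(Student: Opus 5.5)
The plan is to reduce the statement to Lemma~\ref{lemma:min_sharpness}, applied to a reweighted problem. First, Lemma~\ref{lemma: minimal fluctuation} says that in the limit $\eta\to0^+$ we must minimize $S(\theta)$ over the global-minimum manifold of $\mathbb{E}\ell$. I will assume $\Sigma_\epsilon$ is invertible and that $\Sigma_x$ is full rank on the data span. Then that manifold is $\{W_D\cdots W_1=V\}$. On it the full-batch gradient vanishes, so the minibatch term in \eqref{eq: free energy} reduces, up to a batch-size constant, to the per-sample second moment $\mathbb{E}_{x,\epsilon}\|\nabla\ell\|^2$.

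At such a point the residual is exactly $\epsilon$. With $A_k,B_k$ as in the proof of Theorem~\ref{theo:main}, the per-sample gradient is $\partial\ell/\partial W_k=-A_k^T\epsilon x^TB_k^T$. Using independence of $\epsilon$ and $x$, I expect
\begin{equation*}
S(\theta)\propto\sum_{k=1}^D \Tr[A_k^T\Sigma_\epsilon A_k]\,\Tr[B_k\Sigma_xB_k^T].
\end{equation*}

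Next, I will change variables to $\tilde W_D=\sqrt{\Sigma_\epsilon}W_D$ and $\tilde W_1=W_1\sqrt{\Sigma_x}$, keeping the middle layers unchanged. Then $\Tr[A_k^T\Sigma_\epsilon A_k]=\|\tilde W_D W_{D-1}\cdots W_{k+1}\|_F^2$ for $k<D$, while the $k=D$ term becomes $\Tr[\Sigma_\epsilon]\,\|W_{D-1}\cdots \tilde W_1\|_F^2$. Likewise the $k=1$ term carries the factor $\Tr[\Sigma_x]$. The constraint becomes $\tilde W_D W_{D-1}\cdots\tilde W_1=V'=LSR$.

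This is exactly the functional \eqref{eq:sharpness_expression} of Lemma~\ref{lemma:min_sharpness}, with $\Sigma$ replaced by the identity in the inner layers, $d_y$ replaced by $\Tr\Sigma_\epsilon$, and $\Tr\Sigma$ replaced by $\Tr\Sigma_x$. I will rerun that argument with these substitutions. AM--GM over the $D$ summands, followed by $\|A\|_F\|B\|_F\ge\|AB\|_*$ for each of the $D-1$ adjacent pairs, gives the lower bound $D(\Tr S)^{2(D-1)/D}(\Tr\Sigma_\epsilon\Tr\Sigma_x)^{1/D}$.

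For equality, all $D$ summands must be equal, and each pair must achieve $\|A\|_F\|B\|_F=\|AB\|_*$, which forces $A_k^TA_k\propto B_kB_k^T$. Invoking Lemma 2 of \cite{ziyin2025neural}, as in Step 3 of Lemma~\ref{lemma:min_sharpness}, these alignment conditions force a chained SVD $\tilde W_D=L\Sigma_DU_{D-1}^T$, $W_i=U_i\Sigma_iU_{i-1}^T$, $\tilde W_1=U_1\Sigma_1R$. Here $U_D=L$ and $U_0=R$ are fixed by the constraint, and $\prod\Sigma_i=S$.

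The balancing conditions then give $\Sigma_i=cI_d$ for the middle layers and $\Sigma_1^2/\Tr\Sigma_1^2=\Sigma_D^2/\Tr\Sigma_D^2$, so both $\Sigma_1$ and $\Sigma_D$ are proportional to $\sqrt{S}$. The constants follow by solving the scalar system
\begin{equation*}
c^2\frac{\Sigma_D^2}{\Tr\Sigma_D^2}=\frac{\Sigma_1^2}{\Tr\Sigma_x},\qquad \frac{\Sigma_D^2}{\Tr\Sigma_\epsilon}=c^2\frac{\Sigma_1^2}{\Tr\Sigma_1^2},
\end{equation*}
together with $c^{D-2}\Sigma_D\Sigma_1=S$. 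This should reproduce \eqref{eq:solution} with $S'=S$.

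The main obstacle is the equality case. I need to show rigorously that the equality conditions force the aligned chained-SVD structure with orthonormal $U_i$, and not merely something consistent with it. I also need to handle possible rank deficiency and width larger than $\mathrm{rank}(V')$, arguing that all $\Sigma_i$ can be taken $d\times d$ with equal ranks. For this I will rely directly on the cited Lemma 2 of \cite{ziyin2025neural} rather than reprove it. A secondary point is justifying that the minibatch expectation reduces exactly to the per-sample covariance at the minimum, which holds because the mean gradient is zero there.
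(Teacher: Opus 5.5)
Your reduction is correct, and it takes a different route from the paper. The paper gives no proof of this lemma; it only declares it a restatement of Theorem 12 of \cite{ziyin2025neural}. You derive it from the paper's own machinery instead. On the minimizer manifold the fluctuation is proportional to $\sum_k\Tr[A_k^T\Sigma_\epsilon A_k]\Tr[B_k\Sigma_xB_k^T]$. The whitening $\tilde W_D=\sqrt{\Sigma_\epsilon}W_D$, $\tilde W_1=W_1\sqrt{\Sigma_x}$ turns this into exactly the functional of Lemma~\ref{lemma:min_sharpness}, with $d_y\to\Tr\Sigma_\epsilon$, $\Tr\Sigma\to\Tr\Sigma_x$ and $V\sqrt{\Sigma}\to V'$. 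Your balancing system does give $c=(\Tr S)^{1/D}(\Tr\Sigma_\epsilon\Tr\Sigma_x)^{-1/(2D)}$ and $\Sigma_1,\Sigma_D$ exactly as in \eqref{eq:solution}.

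What this buys over the citation is transparency. It shows the SGD-selected solution and the minimal-sharpness solution solve the same variational problem in different coordinates. That makes Corollary~\ref{cor2} immediate: at $\Sigma_\epsilon=I$ one has $\tilde W_D=W_D$ and $\Tr\Sigma_\epsilon=d_y$, so the two functionals coincide. It also gives the minimal value $D(\Tr S)^{2(D-1)/D}(\Tr\Sigma_\epsilon\Tr\Sigma_x)^{1/D}$ of the fluctuation functional for free. The price is that you must state assumptions the citation hides: independence of $x$ and $\epsilon$ (for the fourth-moment factorization), invertible $\Sigma_\epsilon$, and full-rank $\Sigma_x$.

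On the step you flagged, I have three remarks.

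First, the equality case pairs $\tilde A_k=\tilde W_DW_{D-1}\cdots W_{k+1}$ with $\tilde B_{k+1}=W_k\cdots W_2\tilde W_1$. So the condition is $\tilde A_k^T\tilde A_k\propto\tilde B_{k+1}\tilde B_{k+1}^T$; your $A_k^TA_k\propto B_kB_k^T$ is off by one in the index.

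Second, you do not need Lemma 2 of \cite{ziyin2025neural}. Combine this condition with $\tilde A_k\tilde B_{k+1}=LSR$ and uniqueness of the SVD; rotations inside equal-singular-value blocks get absorbed into $Q_k$. This directly yields $\tilde A_k=\lambda_k^{1/4}L\sqrt{S}Q_k^T$ and $\tilde B_{k+1}=\lambda_k^{-1/4}Q_k\sqrt{S}R$, with $Q_k^TQ_k=I_r$ and $r=\mathrm{rank}(V')$.
\begin{itemize}
\item This fixes $\tilde W_D=\tilde A_{D-1}$ and $\tilde W_1=\tilde B_2$.
\item For the middle layers it gives $W_kQ_{k-1}=c_kQ_k$ and $Q_k^TW_k=c_kQ_{k-1}^T$.
\item Equality of the AM--GM terms then forces $c_k\equiv c$.
\end{itemize}

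Third, that is all the equality conditions give. So the sub-step ``all $\Sigma_i$ can be taken $d\times d$ with equal ranks'' cannot be carried out; the paper asserts the same thing without justification in the proof of Lemma~\ref{lemma:min_sharpness}. Take $D\ge3$ with hidden widths exceeding $r$, as the hypothesis allows. Then $W_k=cQ_kQ_{k-1}^T+N_k$ is still a minimizer for any $N_k$ with $N_kQ_{k-1}=0$ and $Q_k^TN_k=0$. Everything feeding into layer $k$ lives in $\mathrm{range}(Q_{k-1})$, and everything above it only reads $\mathrm{range}(Q_k)$. So $N_k$ changes no $A_j$ or $B_j$, and hence changes neither the loss nor the fluctuation $S(\theta)$. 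A concrete case is $D=3$, $d_x=d_y=1$, hidden width $2$, with $N_2=n\,e_2e_2^T$ for arbitrary $n$.

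The provable statement is therefore the stated form exactly for $W_1$ and $W_D$, and for the middle layers modulo such dead blocks. This is harmless downstream, because $T$ and the Hessian depend on $\theta$ only through the $A_j$ and $B_j$.
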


\subsection{Initial sharpness}
To understand the optimization landscape at the start of training, we analyze the expected sharpness under standard random initialization schemes.
\begin{proposition}
Consider a $D$-layer linear network where each weight matrix $W_k \in \mathbb{R}^{d_{k+1} \times d_k}$ is initialized with i.i.d. entries $w_{i,j}^{(k)} \sim \mathcal{N}(0, \sigma_k^2)$, where $d_1 = d_x$ and $d_{D+1} = d_y$. At initialization, the expected sharpness is given by:
\begin{equation}
\mathbb{E}[T(\theta)]=d_y\Pi_{k=2}^Dd_k\Tr[\Sigma_x]\sum_{k=1}^D\Pi_{j\neq k}\sigma_j^2.
\label{eq:initial_sharpness}
\end{equation}
If we further assume $\sigma_k^2 = \sigma^2$ and $d_k = d$ for all $k$, this simplifies to:
\begin{equation}
\mathbb{E}[T(\theta)] = D \cdot d_y \Tr[\Sigma_x] (d \sigma^2)^{D-1}.
\end{equation}
\label{prop:initial_sharpness}
\end{proposition}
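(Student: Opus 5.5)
Since every $W_k$ has independent entries, we start from the structural formula \eqref{eq:T-A-B} established in Step 1 of the proof of Theorem~\ref{theo:main}:
\begin{equation*}
T(\theta)=\sum_{k=1}^D\Tr [A_k^TA_k]\,\Tr [B_k\Sigma_xB_k^T],\qquad A_k=W_D\cdots W_{k+1},\ B_k=W_{k-1}\cdots W_1 .
\end{equation*}
This formula holds at any parameter value, since the residual term does not enter $\partial^2\ell/\partial W_k^2$ for a model linear in each $W_k$. For a fixed $k$, the factor $A_k$ depends only on $W_{k+1},\dots,W_D$ and $B_k$ only on $W_1,\dots,W_{k-1}$, so the two disjoint sets of layers are independent. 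Hence
\begin{equation*}
\E[T(\theta)]=\sum_{k=1}^D \E\,\Tr[A_k^TA_k]\cdot \E\,\Tr[B_k\Sigma_xB_k^T].
\end{equation*}
Note that $W_k$ itself appears in neither factor, which is why $\sigma_k^2$ is absent from the $k$-th summand.

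The main step is a peeling lemma. For a fixed matrix $M$ and $W\in\mathbb{R}^{p\times q}$ with i.i.d.\ $\mathcal{N}(0,\sigma^2)$ entries independent of $M$,
\begin{equation*}
\E\,\Tr[W M W^T]=p\,\sigma^2\Tr M,\qquad \E\,\Tr[W^T M W]=q\,\sigma^2\Tr M .
\end{equation*}
Both identities follow from $\E[W_{ij}W_{i'j'}]=\sigma^2\delta_{ii'}\delta_{jj'}$.

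We apply this inductively by conditioning on inner layers and peeling off one outer layer at a time.
\begin{itemize}
\item For $B_k$: $\E\,\Tr[B_k\Sigma_xB_k^T]=\Tr\Sigma_x\prod_{j=1}^{k-1} d_{j+1}\sigma_j^2$, where each $W_j$ contributes its output dimension $d_{j+1}$.
\item For $A_k$: $\E\,\Tr[A_k^TA_k]=\Tr I_{d_y}\cdot\prod_{j=k+1}^{D} d_j\sigma_j^2 = d_y\prod_{j=k+1}^D d_j\sigma_j^2$, peeling from the output side inward so that each $W_j$ contributes its input dimension $d_j$.
\end{itemize}

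Multiplying the two factors, the dimensions collected are $d_2\cdots d_k$ from $B_k$ and $d_{k+1}\cdots d_D$ from $A_k$. Together they give exactly $\prod_{k=2}^D d_k$, independent of $k$. The variances collected are $\prod_{j\neq k}\sigma_j^2$. Summing over $k$ yields \eqref{eq:initial_sharpness}.

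The equal-width, equal-variance case is immediate substitution: $d^{D-1}\cdot D\,\sigma^{2(D-1)}$.

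The only real care needed is the index bookkeeping at the boundary layers: $B_1=I$ contributes $\Tr\Sigma_x$, and $A_D=I$ contributes $d_y$. We must also check that the input/output dimension convention $W_k\in\mathbb{R}^{d_{k+1}\times d_k}$ makes the dimension product telescope to $\prod_{k=2}^D d_k$ for every $k$. Everything else is routine.
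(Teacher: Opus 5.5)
Your proposal is correct and matches the paper's proof. Both factor $\E[T(\theta)]$ in \eqref{eq:T-A-B} using the independence of $\{W_{k+1},\dots,W_D\}$ and $\{W_1,\dots,W_{k-1}\}$, then evaluate $\E\,\Tr[A_k^TA_k]=d_y\prod_{j=k+1}^D d_j\sigma_j^2$ and $\E\,\Tr[B_k\Sigma_xB_k^T]=\Tr[\Sigma_x]\prod_{j=1}^{k-1}d_{j+1}\sigma_j^2$; your explicit peeling lemma, and your note that \eqref{eq:T-A-B} holds at any parameter value because $f$ is linear in each $W_k$, spell out details the paper leaves implicit.
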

\begin{proof}
Recall that $T(\theta) = \sum_{k=1}^D \Tr[A_k^T A_k] \Tr[B_k \Sigma_x B_k^T]$. Since the weight matrices $\{W_k\}$ are initialized independently, the expectation of the trace product decomposes into the product of expectations. For $A_k$, we have
\begin{equation}
\mathbb{E}[\Tr[A_k^TA_k]]=\mathbb{E}[\Tr[W_{k+1}^T\cdots W_DW_D^T\cdots W_{k+1}]]=d_y\Pi_{j=k+1}^Dd_j\sigma_j^2.
\end{equation}
Similarly, for $B_k$:
\begin{equation}
\mathbb{E}[\Tr[B_k\Sigma_xB_k^T]]=\mathbb{E}[\Tr[W_{k-1}\cdots W_1\Sigma_xW_1^T\cdots W_{k-1}]]=\Tr[\Sigma_x]\Pi_{j=1}^{k-1}d_{j+1}\sigma_j^2.
\end{equation}
Substituting these into the total sharpness expression \eqref{eq:T-A-B} yields \eqref{eq:initial_sharpness}.
\end{proof}
For the Xavier initialization, we have $\sigma_k^2=\frac{2}{d_k+d_{k+1}}$. Taking $d_k=d$ for all $k$, we have $\mathbb{E}[T(\theta)]=Dd_y\Tr[\Sigma_x]$. For the Kaiming initialization with linear activation, we have $\sigma^2=\frac{1}{d_k}$. Then the sharpness is the same if $d_k=d$ for any $k$.

\subsection{Largest eigenvalue}
The following theorem characterizes the spectral radius of the Hessian at the optimal solutions.
\begin{theorem}
Suppose that $\Sigma_x=\hat{R}^{-1}\Lambda_x\hat{R}$ for some diagonal matrix $\Lambda_x$, where $\hat{R}$ is the right singular matrix of $V\sqrt{\Sigma_x}$. Then, the largest eigenvalue of the Hessian matrix for the solution \eqref{eq:min_sharpness_solution} is
\begin{equation}
\begin{aligned}
\lambda_{\max}(\mathbb{E}[H(\theta)])&=(D-2)(\Tr \hat{S})^{-\frac2D}(d_y\Tr\Sigma_x)^{\frac{1}{D}}\lambda_{\max}(\hat{S})^2+(\Tr \hat{S})^{\frac{D-2}{D}}d_y^{-\frac{D-1}{D}}(\Tr\Sigma_x)^{\frac1D}\lambda_{\max}(\hat{S})\\&\qquad+(\Tr \hat{S})^{\frac{D-2}{D}}d_y^{\frac1D}(\Tr\Sigma_x)^{-\frac{D-1}{D}}\lambda_{\max}(\hat{S}\Lambda_x^{-1})\lambda_{\max}(\Sigma_x)
\end{aligned}
\label{eq:max_eig1}
\end{equation}
Moreover, suppose that $\Sigma_x=R^{-1}\Lambda_xR$ and $\Sigma_\epsilon=L\Lambda_\epsilon L^{-1}$ for some diagonal matrices $\Lambda_x,\Lambda_\epsilon$, where $L,R$ are the left and right singular matrices of $\sqrt{\Sigma_\epsilon}V\sqrt{\Sigma_x}$. Then the largest eigenvalue of the Hessian matrix for the solution \eqref{eq:solution} is
\begin{equation}
\begin{aligned}
\lambda_{\max}(\mathbb{E}[H(\theta)])&=(D-2)(\Tr S)^{-\frac2D}(\Tr\Sigma_\epsilon\Tr\Sigma_x)^{\frac{1}{D}}\lambda_{\max}(\Lambda_\epsilon^{-1}S)\lambda_{\max}(S)+(\Tr S)^{\frac{D-2}{D}}(\Tr\Sigma_\epsilon)^{-\frac{D-1}{D}}(\Tr\Sigma_x)^{\frac1D}\lambda_{\max}(S)\\&\qquad+(\Tr S)^{\frac{D-2}{D}}(\Tr\Sigma_\epsilon)^{\frac1D}(\Tr\Sigma_x)^{-\frac{D-1}{D}}\lambda_{\max}(\Lambda_\epsilon^{-1}S\Lambda_x^{-1})\lambda_{\max}(\Sigma_x)
\end{aligned}
\label{eq:max_eig2}
\end{equation}
\label{theo:max_eig}
\end{theorem}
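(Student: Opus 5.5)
Since the loss is quadratic and all residual terms vanish in expectation at a global minimum, we have $\mathbb{E}[H(\theta)]=\mathbb{E}_x[J(x)^TJ(x)]$, where $J(x)$ is the Jacobian of $f_\theta(x)$ with respect to $\theta=(W_1,\dots,W_D)$. The plan is to write this Gram operator in block form. The $(k,j)$ block acts on perturbations $\delta W_j$ and returns
\[
A_k^T A_j\, \delta W_j\, B_j \Sigma_x B_k^T .
\]
Step 3 of the proof of Theorem~\ref{theo:main} shows the spectrum is invariant under the orthogonal gauge $W_k\mapsto R_kW_kR_{k-1}^T$. We may therefore take every $U_i$ to be a coordinate embedding. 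Under the alignment hypotheses ($\Sigma_x$ diagonalized by $\hat R$, or by $R$; and $\Sigma_\epsilon$ by $L$), all $A_k$, $B_k$ and $\Sigma_x$ become simultaneously diagonal after rotating the input and output spaces. The solutions \eqref{eq:min_sharpness_solution} and \eqref{eq:solution} then make $A_k^TA_k$ and $B_k\Sigma_xB_k^T$ diagonal with explicit entries.

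The key step is to bound the operator from above by the sum of its diagonal blocks and then show this bound is attained. Since $\mathbb{E}[H]=\mathbb{E}[J^TJ]$ with $J=\sum_k J_k$, we could use $\lambda_{\max}\le\sum_k\lambda_{\max}(\mathbb{E}J_k^TJ_k)$. The $k$-th diagonal block is the Kronecker product $(B_k\Sigma_xB_k^T)\otimes(A_k^TA_k)$, so its top eigenvalue is $\lambda_{\max}(A_k^TA_k)\,\lambda_{\max}(B_k\Sigma_xB_k^T)$. Substituting the singular-value forms from Lemma~\ref{lemma:deep_linear} or Lemma~\ref{lemma:min_sharpness} gives three types of terms:
\begin{itemize}
\item each of the $D-2$ middle layers contributes a factor of order $(\Tr S)^{-2/D}(\cdots)^{1/D}$ times $\lambda_{\max}(\Lambda_\epsilon^{-1}S)\lambda_{\max}(S)$;
\item the last layer has $A_D=I$, giving the $\lambda_{\max}(S)$ term with the $(\Tr\Sigma_\epsilon)^{-(D-1)/D}$ prefactor;
\item the first layer has $B_1=I$, giving $\lambda_{\max}(\Sigma_x)$ times $\lambda_{\max}(\Lambda_\epsilon^{-1}S\Lambda_x^{-1})$.
\end{itemize}
These match the three terms in \eqref{eq:max_eig1} and \eqref{eq:max_eig2}. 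The proportionality constants come directly from the powers of $\Tr S$, $\Tr\Sigma_\epsilon$ and $\Tr\Sigma_x$ in \eqref{eq:solution}.

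For the matching lower bound, the idea is to exhibit one coherent test direction $\delta\theta=(\delta W_1,\dots,\delta W_D)$ for which the Rayleigh quotient equals the sum. Each $\delta W_k$ is taken as a rank-one matrix $e_{a_k}e_{b_k}^T$ aligned with the top eigenvectors of its block, with the index pairs chained so that the Jacobian images $J_k\delta W_k$ coincide in function space and hence add constructively. By the diagonal structure, $J_k\delta W_k\propto A_ke_{a_k}e_{b_k}^TB_kx$. Choosing all indices as the single coordinate $i^*$ at which the maxima are attained makes all these images parallel to $e_{i^*}x_{i^*}$. Cauchy–Schwarz equality then holds once the amplitudes are weighted by the square roots of the block norms. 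This gives $\lambda_{\max}\ge\sum_k(\text{block top eigenvalue})$.

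I expect the main obstacle to be that the maximizing coordinate $i^*$ differs between blocks. For example, $\lambda_{\max}(\Lambda_\epsilon^{-1}S)$ and $\lambda_{\max}(S)$ may be attained at different indices, so the simple chaining fails and the diagonal-block sum is only an upper bound. I would first check whether the statement intends each $\lambda_{\max}$ factor to be evaluated at a common index, as in the $2\times2$ example. If not, I would handle it by diagonalizing the $D\times D$ coupling matrix within each coordinate pair $(a,b)$. After the alignment reduction, $\mathbb{E}[H]$ decomposes into such small blocks. Each is a rank-one Gram matrix $vv^T$ with $\|v\|^2$ equal to a sum of products, and its top eigenvalue is that sum. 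The true $\lambda_{\max}$ is then the maximum over $(a,b)$ of these sums, and I would verify that this maximum reproduces the stated expression.
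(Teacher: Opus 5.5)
You follow the paper's own route. Your bound $\lambda_{\max}(\mathbb{E}[H])\le\sum_k\lambda_{\max}(A_k^TA_k)\,\lambda_{\max}(B_k\Sigma_xB_k^T)$ is its triangle-inequality/Cauchy--Schwarz step, and your chained rank-one direction is its Step~3.

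\textbf{The first-layer term does not match.} The gap is in the step you treat as bookkeeping: the claim that the three block maxima ``match'' the stated terms. Write $K_1,K_2,K_3$ for the three prefactors in \eqref{eq:max_eig2}. Because $\partial f/\partial W_1$ does not involve $W_1$, the $(1,1)$ block is $\Delta_1\mapsto A_1^TA_1\Delta_1\Sigma_x$, with $A_1=W_D\cdots W_2\propto L\Lambda_\epsilon^{-1/2}S^{1/2}U_1^T$. Its top eigenvalue is therefore $K_3\,\lambda_{\max}(\Lambda_\epsilon^{-1}S)\,\lambda_{\max}(\Sigma_x)$; for \eqref{eq:max_eig1} it is $K_3\lambda_{\max}(\hat S)\lambda_{\max}(\Sigma_x)$. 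No $\Lambda_x^{-1}$ can enter. Compare the $k=1$ term $\Tr[\Sigma_\epsilon^{-1}LSL^T]\Tr[\Sigma_x]$ in the proof of Theorem~\ref{theo:main}.

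The stated formula is in fact false. Take $D=2$, $d_x=d_y=1$, $\Sigma_x=\sigma^2$, $V=v$. The solution has $w_1^2=w_2^2=|v|$ and $\mathbb{E}[H]=\sigma^2\begin{pmatrix}w_1^2&w_1w_2\\ w_1w_2&w_2^2\end{pmatrix}$, so $\lambda_{\max}=2|v|\sigma^2$. Both \eqref{eq:max_eig1} and \eqref{eq:max_eig2} give $|v|\sigma^2+|v|$ instead.

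\textbf{The tightness obstacle sits on different indices than you think.} After the alignment reduction, $\mathbb{E}[H]$ splits into rank-one blocks indexed by an output index $a$ and an input index $b$, with eigenvalue
\[
\|v_{ab}\|^2=(D-2)K_1\mu_a s_b+K_2 s_b+K_3\mu_a\lambda_{x,b},\qquad \mu_a:=s_a/\lambda_{\epsilon,a},
\]
where $s_b,\lambda_{\epsilon,a},\lambda_{x,b}$ are the diagonal entries of $S,\Lambda_\epsilon,\Lambda_x$. For \eqref{eq:max_eig1}, replace $\mu_a,s_b$ by $\hat s_a,\hat s_b$ and use its own prefactors.

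Middle-layer perturbations $U_ke_ae_b^TU_{k-1}^T$ with $a\neq b$ are admissible. So maximizing $\mu_a$ and maximizing $s_b$ never conflict; forcing $a=b=i^*$ is what created your apparent obstacle. The real conflict is on the input index $b$, which all layers share: layers $2,\dots,D$ want $s_b$ maximal, while layer $1$ wants $\lambda_{x,b}$ maximal. Hence
\[
\lambda_{\max}(\mathbb{E}[H])=\max_b\Big[\big((D-2)K_1\mu^*+K_2\big)s_b+K_3\mu^*\lambda_{x,b}\Big],\qquad \mu^*=\lambda_{\max}(\Lambda_\epsilon^{-1}S).
\]
This equals the sum of block maxima only when $s_b$ and $\lambda_{x,b}$ share a maximizer.

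For example, take $D=2$, $V=\mathrm{diag}(10,1)$, $\Sigma_x=\mathrm{diag}(1,4)$, so $\hat S=\mathrm{diag}(10,2)$ and $\hat L=\hat R=I$. The exact value is $2\sqrt{5/2}+40\sqrt{2/5}\approx 28.5$, attained at $(a,b)=(1,2)$. The sum of block maxima, which here coincides with \eqref{eq:max_eig1}, is $10\sqrt{5/2}+40\sqrt{2/5}\approx 41.1$.

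\textbf{Consequences.} Your final step, ``verify that this maximum reproduces the stated expression'', cannot succeed. Your block decomposition is the right tool, but what it proves is the formula displayed above. That formula reduces to the theorem only if you replace $\Lambda_\epsilon^{-1}S\Lambda_x^{-1}$ by $\Lambda_\epsilon^{-1}S$ in the third term and assume a common maximizer. Under those conditions your upper bound plus a test direction at the decoupled pair $(a^*,b^*)$ finishes the proof.

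The paper's Step~3 has the same hole. It draws $u_0$ from $\hat S\Lambda_x^{-1/2}$ but the other $u_k$ from $\hat S$. It then asserts every image equals $\alpha_k\|A_k\|_2\|B_k\sqrt{\Sigma_x}\|_2\,u_Du_0^T$, which requires exactly this coincidence of maximizers.
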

\begin{proof}
Step 1: Quadratic Form of the Hessian.

By \eqref{eq:Hessian}, for a deep linear network $f(\theta, x) = W_D \cdots W_1 x$, the Hessian is
\begin{equation}
H(\theta)=\mathbb{E}_x\nabla f(\theta,x)\nabla f(\theta,x)^T.
\end{equation}
Let $\Delta=\text{vec}(\Delta_D,\cdots,\Delta_1)$ be the perturbation, where $\Delta_i$ has the same dimension as $W_i$, and the quadratic form of the Hessian is
\begin{equation}
\Delta^TH(\theta)\Delta=\mathbb{E}_x \left[ \left\| \sum_{k=1}^D A_k \Delta_k B_k x \right\|_2^2 \right]=\left\| \sum_{k=1}^D A_k \Delta_k B_k \sqrt{\Sigma_x} \right\|_F^2,
\label{eq:Delta-H-Delta}
\end{equation}
where $A_k:=W_D\cdots W_{k+1}$ and $B_k:=W_{k-1}\cdots W_1$, and specially $A_D=I_{d_y}$, $B_1=I$. 

Step 2: Upper Bound.

Using the triangle inequality and the sub-multiplicative property of the spectral norm, we bound the magnitude of the summed perturbations:
\begin{equation}
\sqrt{\Delta^TH(\theta)\Delta}\leq\sum_{k=1}^D||A_k||_2||B_k\sqrt{\Sigma_x}||_2||\Delta_k||_F.
\end{equation}
Applying the Cauchy-Schwarz inequality to the sum, we obtain:
\begin{equation}
\frac{\Delta^TH(\theta)\Delta}{\Delta^T\Delta}\leq\frac{\left(\sum_{k=1}^D||A_k||_2||B_k\sqrt{\Sigma_x}||_2||\Delta_k||_F\right)^2}{\sum_{k=1}^D||\Delta_k||^2_F}\leq\sum_{k=1}^D||A_k||^2_2||B_k\sqrt{\Sigma_x}||^2_2.
\end{equation}
This gives
\begin{equation}
\begin{aligned}
\lambda_{\max}(\mathbb{E}[H(\theta)]) &\le \sum_{k=1}^{D} \| W_D \dots W_{k+1} \|^2_2 \cdot \| W_{k-1} \dots W_1\sqrt{\Sigma_z} \|_2^2\\
&=\sum_{k=1}^{D}\|\hat{L}\Sigma_D\cdots\Sigma_{k+1}U_{k+1}\|_2^2\|U_{k-1}\Sigma_{k-1}\cdots\Sigma_{1}\hat{R}\|_2^2\\
&=(D-2)(\Tr \hat{S})^{-2/D}(d_y\Tr\Sigma_x)^{\frac{1}{D}}\lambda_{\max}(\hat{S})^2+(\Tr \hat{S})^{\frac{D-2}{D}}d_y^{-\frac{D-1}{D}}(\Tr\Sigma_x)^{\frac1D}\lambda_{\max}(\hat{S})\\&\qquad+(\Tr \hat{S})^{\frac{D-2}{D}}d_y^{\frac1D}(\Tr\Sigma_x)^{-\frac{D-1}{D}}\lambda_{\max}(\hat{S}\Lambda_x^{-1})\lambda_{\max}(\Sigma_x).
\end{aligned}
\end{equation}

Step 3: Tightness and Construction.

To show that the upper bound is achievable, we construct a specific perturbation $\Delta$. Let $u_D$ and $u_0$ be the top left and right singular vectors of the full network product, respectively. For the solution \eqref{eq:min_sharpness_solution}, we choose rank-1 perturbations: $\Delta_k=\alpha_ku_iu_{i-1}^T$, where $u_k$ is the column of $U_k$ corresponding to the largest singular vector of $\hat{S}$ (or $\hat{S}\Lambda_x^{-1/2}$ for $k=0$), and specially $U_0:=\hat{R},U_D:=\hat{L}$. Thus, we have
\begin{equation}
A_k\Delta_kB_k\sqrt{\Sigma_x}=\alpha_k||A_k||_2||B_k\sqrt{\Sigma_x}||_2u_Du_0^T,
\end{equation}
where we use the fact that 
\begin{equation}
W_1=U_1\Sigma_1\hat{R}\Sigma_x^{-1/2}=U_1\Sigma_1\Lambda_x^{-1/2}\hat{R}
\end{equation}
under the assumption $\Sigma_x=\hat{R}^{-1}\Lambda_x\hat{R}$. Then, we have
\begin{equation}
\Delta^TH(\theta)\Delta=\left\|\sum_{k=1}^D\alpha_k||A_k||_2||B_k\sqrt{\Sigma_x}||_2u_Du_0^T\right\|_F^2=\left(\sum_{k=1}^D\alpha_k||A_k||_2||B_k\sqrt{\Sigma_x}||_2\right)^2,
\end{equation}
which, by choosing $\alpha_k=||A_k||_2||B_k\sqrt{\Sigma_x}||_2$, gives
\begin{equation}
\frac{\Delta^TH(\theta)\Delta}{\Delta^T\Delta}=\sum_{k=1}^D||A_k||^2_2||B_k\sqrt{\Sigma_x}||^2_2.
\end{equation} This proves \eqref{eq:max_eig1}.

For the solution \eqref{eq:solution}, we have
\begin{equation}
W_D=\Sigma_{\epsilon}^{-1/2}L\Sigma_DU_{D-1}^T=L\Lambda_\epsilon^{-1/2}\Sigma_DU_{D-1}^T
\end{equation}
under the assumption that $\Sigma_\epsilon=L\Lambda_\epsilon L^{-1}$. Then we can prove \eqref{eq:max_eig2} in the same manner.
\end{proof}
\begin{remark}
From \eqref{eq:Delta-H-Delta}, one obtains
\begin{equation}
\Delta^TH(\theta)\Delta=\left\|\left[\sum_{k=1}^D(B_k\sqrt{\Sigma_x})^T\otimes A_k\right]\text{vec}(\Delta_k)\right\|_2^2,
\end{equation}
which implies
\begin{equation}
\lambda_{\max}(H(\theta))=\sigma_{\max}\left(\sum_{k=1}^D(B_k\sqrt{\Sigma_x})^TB_k\sqrt{\Sigma_x}\otimes A_kA_k^T\right),
\label{eq:lambda_max}
\end{equation}
which recovers Theorem 3.1 of a concurrent work \citep{kamber2025sharpness}. However, unless $B_k\sqrt{\Sigma_x}$ and $\Sigma_x$ share the same singular spaces, \eqref{eq:lambda_max} cannot be further simplified under the solution \eqref{eq:min_sharpness_solution}. Thus, we introduce the commutation assumption in Theorem \ref{theo:max_eig}.
\end{remark}

\subsection{Proof of Corollary \ref{cor1}}
\begin{proof}
We begin by invoking Von Neumann's trace inequality to derive three useful bounds:
\begin{equation}
\lambda_{\min}(\sqrt{\Sigma_\epsilon})||V\sqrt{\Sigma_x}||_*\leq\Tr S\leq \lambda_{\max}(\sqrt{\Sigma_\epsilon})||V\sqrt{\Sigma_x}||_*,
\label{ineq:trace1}
\end{equation}
\begin{equation}
\Tr[\Sigma_\epsilon^{-1}LSL^T]\geq\frac{\sigma_{\min}(S)}{\lambda_{\min}(\Sigma_\epsilon)}\geq\frac{\sigma_{\min}(V\sqrt{\Sigma_x})}{\sqrt{\lambda_{\min}(\Sigma_\epsilon)}},
\label{ineq:trace2}
\end{equation}
and
\begin{equation}
\Tr[\Sigma_\epsilon^{-1}LSL^T]\leq\frac{||S||_*}{\lambda_{\min}(\Sigma_\epsilon)}\leq\frac{\lambda_{\max}(\sqrt{\Sigma_\epsilon})||V\sqrt{\Sigma_x}||_*}{\lambda_{\min}(\Sigma_\epsilon)},
\label{ineq:trace3}
\end{equation}
where $||\cdot||_*$ denotes the kernel norm. 

To find the lower bound for $T(\theta)$, we utilize the first term of the sharpness expression. Applying \eqref{ineq:trace1} and \eqref{ineq:trace2}, we obtain:
\begin{equation}
\begin{aligned}
\left(\Tr S\right)^{(D-2)/D}(\Tr\Sigma_\epsilon)^{1/D}\Tr[\Sigma_\epsilon^{-1}LSL^T]&\geq\left(\lambda_{\min}(\sqrt{\Sigma_\epsilon})||V\sqrt{\Sigma_x}||_*\right)^{(D-2)/D}(\Tr\Sigma_\epsilon)^{1/D}\frac{\sigma_{\min}(V\sqrt{\Sigma_x})}{\sqrt{\lambda_{\min}(\Sigma_\epsilon)}}\\&\geq \max(\sigma_{\min}(V\sqrt{\Sigma_x})^2,1)\frac{\lambda_{\max}(\Sigma_\epsilon)^{\frac{D-2}{2D}+\frac1D}}{\sqrt{\lambda_{\min}(\Sigma_\epsilon)}}\\
&:=2c_1\sqrt{\kappa(\Sigma_\epsilon)},
\end{aligned}
\end{equation}
where $c_1$ is a constant depending solely on the data and model through $V\sqrt{\Sigma_x}$. It follows from \eqref{eq:sharpness} that the total sharpness is lower-bounded by the square root of the condition number:
\begin{equation}
T(\theta)\geq 2(D-1)c_1\sqrt{\kappa(\Sigma_\epsilon)}\geq Dc_1\sqrt{\kappa(\Sigma_\epsilon)}.
\end{equation}

For the upper bound, we consider both terms of the sharpness. First, using \eqref{ineq:trace1} and \eqref{ineq:trace3}, we bound the primary term:
\begin{equation}
\begin{aligned}
&\left(\Tr S\right)^{(D-2)/D}(\Tr\Sigma_\epsilon)^{1/D}\Tr[\Sigma_\epsilon^{-1}LSL^T]\\&\leq(\lambda_{\max}(\sqrt{\Sigma_\epsilon})||V\sqrt{\Sigma_x}||_*)^{(D-2)/D}(d_y\lambda_{\max}(\sqrt{\Sigma_\epsilon}))^{1/D}\frac{\lambda_{\max}(\sqrt{\Sigma_\epsilon})||V\sqrt{\Sigma_x}||_*}{\lambda_{\min}(\Sigma_\epsilon)}\\&\leq \max(d_y^{1/D}||V\sqrt{\Sigma_x}||^2_*,1)\frac{\lambda_{\max}(\Sigma_\epsilon)}{\lambda_{\min}(\Sigma_\epsilon)}.
\end{aligned}
\end{equation}
Next, we bound the second term in the expression of $T(\theta)$ using \eqref{ineq:trace1}:
\begin{equation}
\begin{aligned}
d_y\left(\Tr S\right)^{2(D-1)/D}\frac{(\Tr\Sigma_x)^{1/D}}{(\Tr\Sigma_\epsilon)^{(D-1)/D}}&\leq d_y(\Tr\Sigma_x)^{1/D}\frac{(\lambda_{\max}(\sqrt{\Sigma_\epsilon})||V\sqrt{\Sigma_x}||_*)^{2(D-1)/D}}{(d_y\lambda_{\min}(\Sigma_\epsilon))^{(D-1)/D}}\\
&\leq d_yd_x\max(||V\sqrt{\Sigma_x}||_*^2,1)\frac{\lambda_{\max}(\Sigma_\epsilon)}{\lambda_{\min}(\Sigma_\epsilon)}.
\end{aligned}
\end{equation}
By plugging these two upper bounds into \eqref{eq:sharpness}, we arrive at the final linear relation:
\begin{equation}
T(\theta)\leq (D-1)\max(d_y||V\sqrt{\Sigma_x}||^2_*,1)\frac{\lambda_{\max}(\Sigma_\epsilon)}{\lambda_{\min}(\Sigma_\epsilon)}+d_yd_x\max(||V\sqrt{\Sigma_x}||_*^2,1)\frac{\lambda_{\max}(\Sigma_\epsilon)}{\lambda_{\min}(\Sigma_\epsilon)}\leq Dc_2\kappa(\Sigma_\epsilon),
\end{equation}
where $c_2$ is a constant determined by $V\sqrt{\Sigma_x}$ and the dimensions $d_x, d_y$. This concludes the proof.
\end{proof}

\subsection{Proof of Corollary \ref{cor2}}
\begin{proof}
The proof follows from a direct comparison of the minimal sharpness conditions derived in Lemma \ref{lemma:min_sharpness} and the SGD solutions characterized in Lemma \ref{lemma:deep_linear}. By setting $\Sigma_\epsilon = I$ in \eqref{eq:solution}, the matrix balancing and the singular value distributions across layers align perfectly with the requirements for minimal sharpness \eqref{eq:min_sharpness_solution}. 
\end{proof}
As a sanity check, when we substitute $\Sigma_\epsilon=I$ into \eqref{eq:sharpness}, we have $V'=V\sqrt{\Sigma_x}=LSL^T$, $S=\hat{S}$, and thus
\begin{equation}
T(\theta)=D(\Tr S)^{2(D-1)/D}(d_y\Tr\Sigma_x)^{1/D},
\end{equation}
which is the same as the minimal sharpness.

\subsection{Proof of Corollary \ref{cor4}}
\begin{proof}
nder the commutation assumption, the eigenvectors of the covariance matrices $\Sigma_\epsilon, \Sigma_x$ and the singular vectors of the model $V$ align. This allows for a simplified spectral analysis:
\begin{equation}
\Sigma_\epsilon=L\Lambda_\epsilon L^{-1},\ \Sigma_x=R^{-1}\Lambda_xR,\ \sqrt{\Sigma_\epsilon}V\sqrt{\Sigma_x}=LSR.
\end{equation}
Consequently, we can bound the spectral radius of the product matrix $S$ and its scaled variants:
\begin{equation}
\lambda_{\max}(\sqrt{\Sigma_\epsilon})\sigma_{\min}(V\sqrt{\Sigma_x})\leq\lambda_{\max}(S)\leq\lambda_{\max}(\sqrt{\Sigma_\epsilon})\sigma_{\max}(V\sqrt{\Sigma_x})
\end{equation}
and
\begin{equation}
\frac{\lambda_{\min}(V\sqrt{\Sigma_x})}{\lambda_{\min}(\sqrt{\Sigma_\epsilon})}\leq\lambda_{\max}(\Lambda_\epsilon^{-1}S)\leq\frac{\lambda_{\max}(V\sqrt{\Sigma_x})}{\lambda_{\min}(\sqrt{\Sigma_\epsilon})}.
\end{equation}

For the upper bound, we evaluate the three terms in \eqref{eq:max_eig2} individually using the trace inequality \eqref{ineq:trace1}. For the first term:
\begin{equation}
\begin{aligned}
&(\Tr S)^{-\frac2D}(\Tr\Sigma_\epsilon\Tr\Sigma_x)^{\frac{1}{D}}\lambda_{\max}(\Lambda_\epsilon^{-1}S)\lambda_{\max}(S)\\&\leq(\lambda_{\min}(\sqrt{\Sigma_\epsilon})||V\sqrt{\Sigma_x}||_*)^{-\frac2D}(d_y\lambda_{\max}(\Sigma_\epsilon)\Tr\Sigma_x)^{\frac1D}\frac{\sigma_{\min}(V\sqrt{\Sigma_x})}{\lambda_{\min}(\sqrt{\Sigma_\epsilon})}\lambda_{\max}(\sqrt{\Sigma_\epsilon})\sigma_{\min}(V\sqrt{\Sigma_x})\\
&\leq\max(\sigma_{\min}(V\sqrt{\Sigma_x})^{2},1)\max(d_y\Tr\Sigma_x,1)\kappa(\Sigma_\epsilon).
\end{aligned}
\end{equation}
For the second term:
\begin{equation}
\begin{aligned}
&(\Tr S)^{\frac{D-2}{D}}(\Tr\Sigma_\epsilon)^{-\frac{D-1}{D}}(\Tr\Sigma_x)^{\frac1D}\lambda_{\max}(S)\\&\leq(\lambda_{\max}(\sqrt{\Sigma_\epsilon})||V\sqrt{\Sigma_x}||_*)^{\frac{D-2}{D}}\lambda_{\min}(\Sigma_\epsilon)^{-\frac{D-1}{D}}(\Tr\Sigma_x)^{\frac1D}\lambda_{\max}(\sqrt{\Sigma_\epsilon})\sigma_{\max}(V\sqrt{\Sigma_x})\\
&\leq\max(||V\sqrt{\Sigma_x}||_*^2,1)\max(\Tr\Sigma_x,1)\kappa(\Sigma_\epsilon).
\end{aligned}
\end{equation}
And for the third term:
\begin{equation}
\begin{aligned}
&(\Tr S)^{\frac{D-2}{D}}(\Tr\Sigma_\epsilon)^{\frac1D}(\Tr\Sigma_x)^{-\frac{D-1}{D}}\lambda_{\max}(\Lambda_\epsilon^{-1}S\Lambda_x^{-1})\lambda_{\max}(\Sigma_x)\\
&\leq (\lambda_{\max}(\sqrt{\Sigma_\epsilon})||V\sqrt{\Sigma_x}||_*)^{\frac{D-2}{D}}\lambda_{\max}(\Sigma_\epsilon)^{\frac1D}(\Tr\Sigma_x)^{-\frac{D-1}{D}}\frac{\sigma_{\max}(V\sqrt{\Sigma_x})}{\lambda_{\min}(\sqrt{\Sigma_\epsilon})\lambda_{\min}(\Sigma_x)}\lambda_{\max}(\Sigma_x)\\
&\leq\max(||V\Sigma_x||_*^2,1)\max(\lambda_{\max}(\Sigma_x)^{-1},1)\kappa(\Sigma_x)\kappa(\Sigma_\epsilon).
\end{aligned}
\end{equation}
Summing these contributions, we conclude that the spectral radius of the Hessian is bounded by
\begin{equation}
\lambda_{\max}(\mathbb{E}[H(\theta)])\leq c_2D\kappa(\Sigma_\epsilon)
\end{equation}
for a constant $c_2$ determined by the problem geometry ($V, \Sigma_x, d_y$).

We derive the lower bound in the same way. For the first term:
\begin{equation}
\begin{aligned}
&(\Tr S)^{-\frac2D}(\Tr\Sigma_\epsilon\Tr\Sigma_x)^{\frac{1}{D}}\lambda_{\max}(\Lambda_\epsilon^{-1}S)\lambda_{\max}(S)\\&\geq(\lambda_{\max}(\sqrt{\Sigma_\epsilon})||V\sqrt{\Sigma_x}||_*)^{-\frac2D}(\lambda_{\max}(\Sigma_\epsilon)\Tr\Sigma_x)^{\frac1D}\frac{\sigma_{\min}(V\sqrt{\Sigma_x})}{\lambda_{\min}(\sqrt{\Sigma_\epsilon})}\lambda_{\max}(\sqrt{\Sigma_\epsilon})\sigma_{\min}(V\sqrt{\Sigma_x})\\
&\geq\min(\sigma_{\min}(V\sqrt{\Sigma_x})^{2},1)\min(\Tr\Sigma_x,1)\sqrt{\kappa(\Sigma_\epsilon)}.
\end{aligned}
\end{equation}
For the second term:
\begin{equation}
\begin{aligned}
&(\Tr S)^{\frac{D-2}{D}}(\Tr\Sigma_\epsilon)^{-\frac{D-1}{D}}(\Tr\Sigma_x)^{\frac1D}\lambda_{\max}(S)\\&\geq(\lambda_{\max}(\sqrt{\Sigma_\epsilon})\sigma_{\min}(V\sqrt{\Sigma_x}))^{\frac{D-2}{D}}\lambda_{\max}(\Sigma_\epsilon)^{-\frac{D-1}{D}}(\Tr\Sigma_x)^{\frac1D}\lambda_{\max}(\sqrt{\Sigma_\epsilon})\sigma_{\min}(V\sqrt{\Sigma_x})\\
&\geq\min(\sigma_{\min}(V\sqrt{\Sigma_x})^2,1)\min(\Tr\Sigma_x,1)\sqrt{\kappa(\Sigma_\epsilon)}.
\end{aligned}
\end{equation}
Finally, for the third term:
\begin{equation}
\begin{aligned}
&(\Tr S)^{\frac{D-2}{D}}(\Tr\Sigma_\epsilon)^{\frac1D}(\Tr\Sigma_x)^{-\frac{D-1}{D}}\lambda_{\max}(\Lambda_\epsilon^{-1}S\Lambda_x^{-1})\lambda_{\max}(\Sigma_x)\\
&\geq (\lambda_{\max}(\sqrt{\Sigma_\epsilon})\sigma_{\min}(V\sqrt{\Sigma_x}))^{\frac{D-2}{D}}\lambda_{\max}(\Sigma_\epsilon)^{\frac1D}(\Tr\Sigma_x)^{-\frac{D-1}{D}}\frac{\sigma_{\min}(V\sqrt{\Sigma_x})}{\lambda_{\min}(\sqrt{\Sigma_\epsilon})\lambda_{\min}(\Sigma_x)}\lambda_{\max}(\Sigma_x)\\
&\geq\min(\sigma_{\min}(V\Sigma_x)^2,1)\min(d_y^{-1}\lambda_{\max}(\Sigma_x)^{-1},1)\kappa(\Sigma_x)\sqrt{\kappa(\Sigma_\epsilon)}.
\end{aligned}
\end{equation}
Assembling these lower bounds in \eqref{eq:max_eig2} yields:
\begin{equation}
\lambda_{\max}(\mathbb{E}[H(\theta)])\geq c_1D\sqrt{\kappa(\Sigma_\epsilon)}
\end{equation}
for some constant $c_1$ depending on $V$, $\Sigma_x$ and $d_y$. This completes the proof.
\end{proof}

\subsection{Proof of Corollary \ref{cor5}}
\begin{proof}
It is proven by plugging $\Tr\Sigma_\epsilon,\Tr\Sigma_x,\Tr S=\Theta(d^{\max(0,1-a)}),\Theta(d^{\max(0,1-b)}),\Theta(d^{\max(0,1-a/2-b/2-c)})$ into \eqref{eq:sharpness}.
\end{proof}

\subsection{An Example: Power-law Spectrum} 
\label{app:power-law}
\begin{corollary}
Suppose that $\Sigma_\epsilon,\Sigma_x,V$ have power-law spectrum $\{i^{-a}\}_{i=1}^d,\{i^{-b}\}_{i=1}^d,\{i^{-c}\}_{i=1}^d$ for $a,b,c>0$ and they commute. Then for large $d$ we have
\begin{equation}
\begin{aligned}
T(\theta)&=\Theta\left((D-1)d^{\kappa_1}+d^{\kappa_2}\right)
\end{aligned}
\end{equation}
at the global minimum of \eqref{eq: free energy},
where
\begin{equation}
\begin{aligned}
\kappa_1:&=\frac{D-2}{D}\max\left(0,1-\frac a2-\frac b2-c\right)+\frac{1}{D}\max(0,1-a)\\&+\frac{1}{D}\max(0,1-b)+\max\left(0,1+\frac{a}{2}-\frac{b}{2}-c\right)
\end{aligned}
\end{equation}
and
\begin{equation}
\begin{aligned}
\kappa_2:&=1+\frac{2(D-1)}{D}\max(0,1-\frac a2-\frac b2-c)\\&+\frac{1}{D}\max(0,1-b)-\frac{D-1}{D}\max(0,1-b).
\end{aligned}
\end{equation}
\label{cor5}
\end{corollary}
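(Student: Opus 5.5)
The plan is to substitute the power-law spectra directly into the closed-form sharpness \eqref{eq:sharpness} of Theorem \ref{theo:main}. Each trace appearing there is then a power sum, and each power sum has a known growth rate in $d$. Since $\Sigma_\epsilon,\Sigma_x,V$ commute, they are simultaneously diagonalizable; assume the bases are matched so that the $i$-th eigen/singular directions pair up. Then $V'=\sqrt{\Sigma_\epsilon}V\sqrt{\Sigma_x}$ is diagonal in the same basis with entries $i^{-a/2-b/2-c}$. This gives $S=\mathrm{diag}(i^{-(a+b)/2-c})$ and $L$ equal to the common eigenbasis, so that $\Sigma_\epsilon^{-1}LSL^T$ has eigenvalues $i^{a-(a+b)/2-c}=i^{a/2-b/2-c}$.

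The key elementary estimate is that for $p\in\mathbb{R}$,
\begin{equation}
\sum_{i=1}^d i^{-p}=\Theta\big(d^{\max(0,1-p)}\big),
\end{equation}
up to a logarithmic factor at $p=1$. That case must either be excluded or absorbed into $\Theta$ loosely, and I would state this caveat explicitly. Applying the estimate gives four rates:
\begin{itemize}
\item $\Tr\Sigma_\epsilon=\Theta(d^{\max(0,1-a)})$,
\item $\Tr\Sigma_x=\Theta(d^{\max(0,1-b)})$,
\item $\Tr S=\Theta(d^{\max(0,1-a/2-b/2-c)})$,
\item $\Tr[\Sigma_\epsilon^{-1}LSL^T]=\Theta(d^{\max(0,1+a/2-b/2-c)})$.
\end{itemize}
Finally, $d_y=d$.

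Plugging these rates into the first term of \eqref{eq:sharpness} gives the exponent
\begin{equation}
\frac{D-2}{D}\max\Big(0,1-\frac a2-\frac b2-c\Big)+\frac1D\max(0,1-a)+\frac1D\max(0,1-b)+\max\Big(0,1+\frac a2-\frac b2-c\Big),
\end{equation}
which is $\kappa_1$, and the prefactor is $D-1$. The second term, $d_y(\Tr S)^{2(D-1)/D}(\Tr\Sigma_x)^{1/D}(\Tr\Sigma_\epsilon)^{-(D-1)/D}$, gives exponent $1+\frac{2(D-1)}{D}\max(0,1-\frac a2-\frac b2-c)+\frac1D\max(0,1-b)-\frac{D-1}{D}\max(0,1-a)$. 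This should be $\kappa_2$; the last term in the stated $\kappa_2$ reads $\max(0,1-b)$, which I suspect is a typo for $\max(0,1-a)$, and I would flag it. Since both terms are positive, the $\Theta$ of their sum is the sum of their $\Theta$'s. The constants are uniform in $D$ because exponents like $1/D$ act on $\Theta(1)$ constants that lie in a fixed compact range, which yields $T=\Theta((D-1)d^{\kappa_1}+d^{\kappa_2})$.

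The main obstacle is the pairing of eigen-orderings. Commutation gives a common eigenbasis but not that the largest eigenvalue of $\Sigma_\epsilon$ sits on the same direction as those of $\Sigma_x$ and $V$. I would first assume the aligned ordering, which is implicit in the statement, and note that any other pairing changes $S$ and the trace $\Tr[\Sigma_\epsilon^{-1}LSL^T]$. I would also check that $V$ being square with $d_x=d_y=d$, and the width assumption $d\ge\mathrm{rank}(V')$, are satisfied so that Theorem \ref{theo:main} applies.
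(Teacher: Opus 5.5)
Your proposal is correct and is the same argument as the paper's one-line proof: substitute the power-sum rates for $\Tr\Sigma_\epsilon$, $\Tr\Sigma_x$, $\Tr S$ into \eqref{eq:sharpness}, together with the rate $\Tr[\Sigma_\epsilon^{-1}LSL^T]=\Theta(d^{\max(0,1+a/2-b/2-c)})$, which the paper uses implicitly in $\kappa_1$ and which you make explicit. Your flag on $\kappa_2$ is also right: the factor $(\Tr\Sigma_\epsilon)^{-(D-1)/D}$ contributes $-\frac{D-1}{D}\max(0,1-a)$, so the stated $\max(0,1-b)$ is a typo, and the paper likewise leaves the logarithmic factor at boundary exponents and the aligned eigen-ordering as unstated assumptions.
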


Corollary \ref{cor5} further verifies that the noise imbalance is the main cause of sharpening, because the sharpness only increases when $a$ increases, but not when $b,c$ increase.

\clearpage
\section{Experiments}
\label{app:exp}

\begin{figure}[t!]
    \centering
    \includegraphics[width=0.3\linewidth]{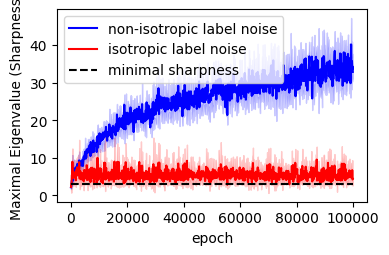}
    \includegraphics[width=0.3\linewidth]{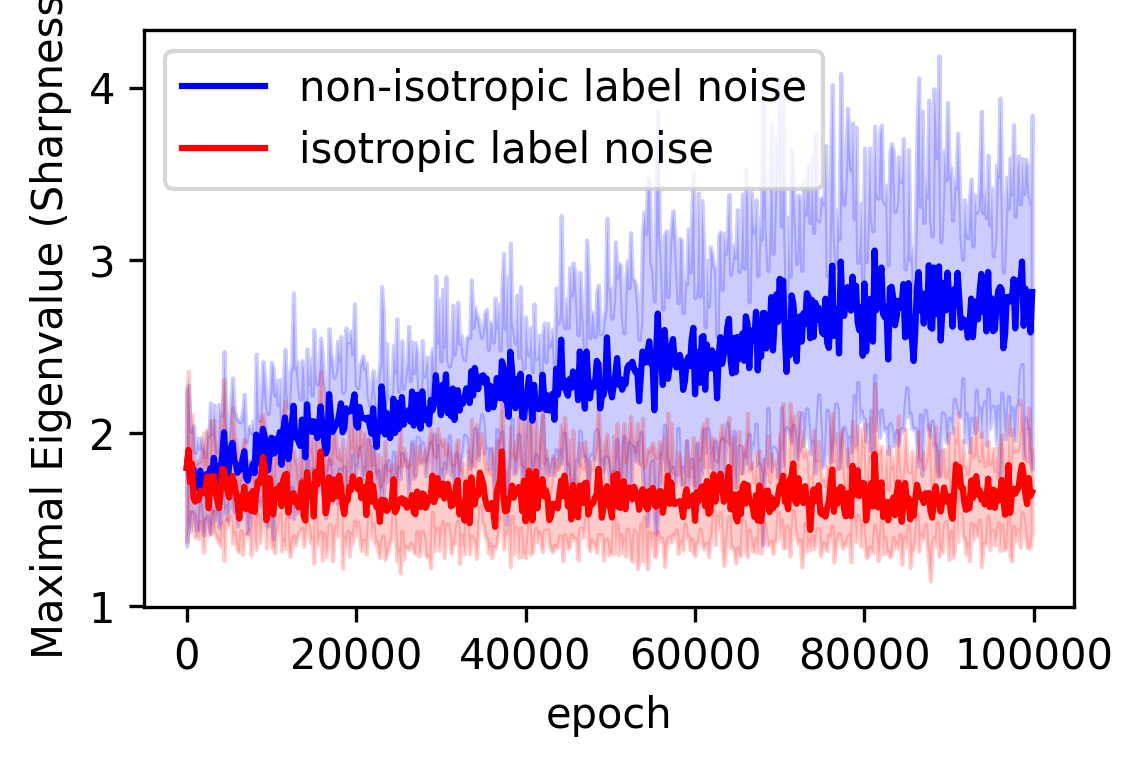}
    \includegraphics[width=0.3\linewidth]{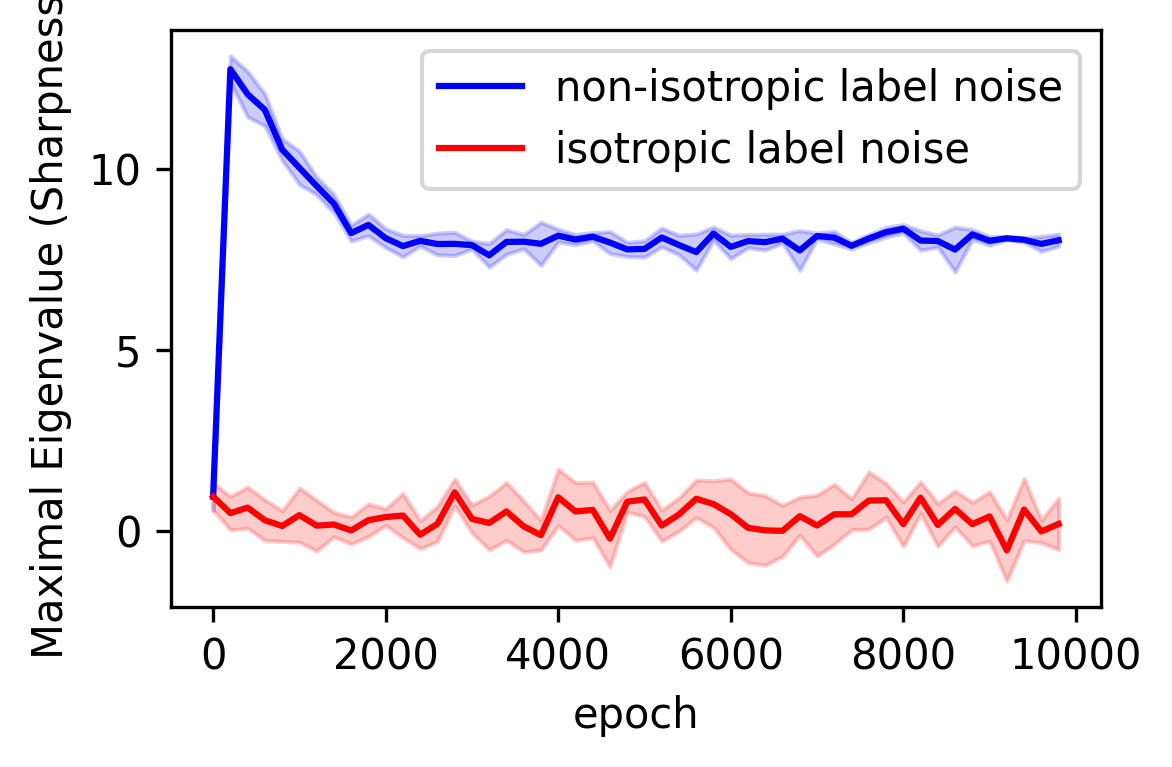}
    \caption{\small The maximal eigenvalue of the Hessian evolves similarly to the trace. \textbf{Left}: linear networks, with the minimal sharpness predicted by \eqref{eq:max_eig1}. \textbf{Middle}: Two layer ReLU networks trained under a teacher-student setting. \textbf{Right}: Two layer ReLU networks trained on MNIST. Each line is averaged over five trials shown with the standard error.}
    \label{fig:label_noise_eig}
\end{figure}

\begin{figure}[t!]
    \centering
    \includegraphics[width=0.3\linewidth]{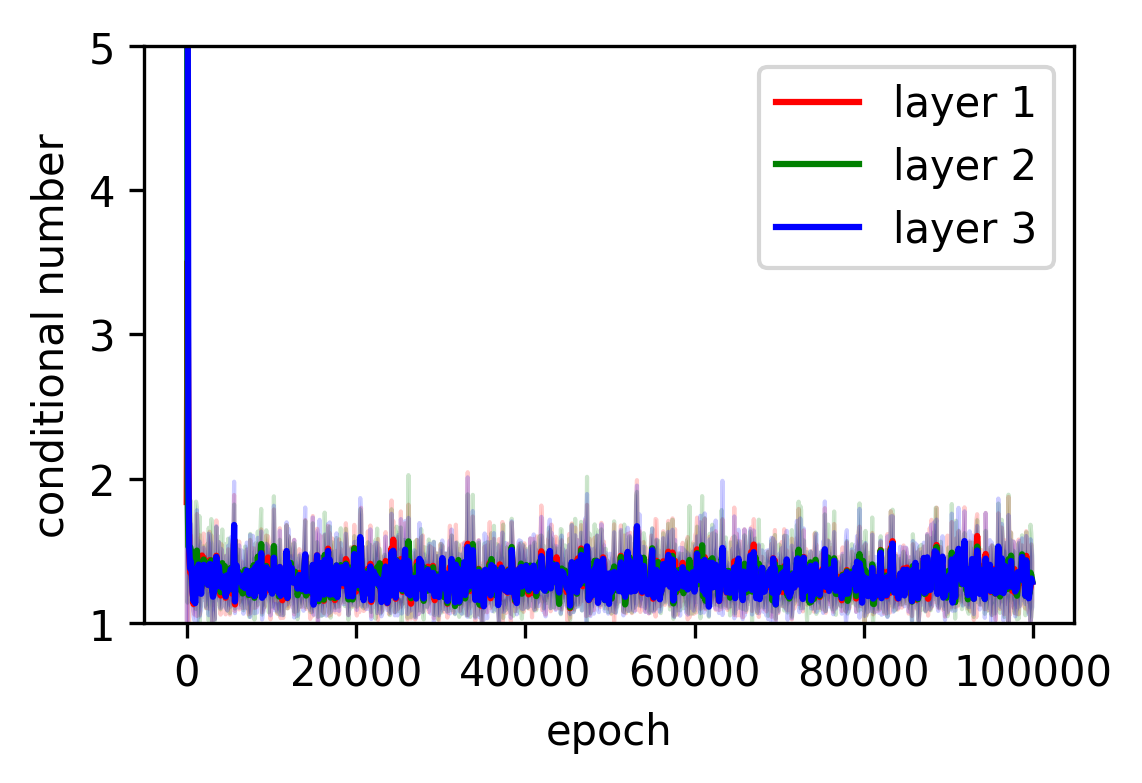}
    \includegraphics[width=0.3\linewidth]{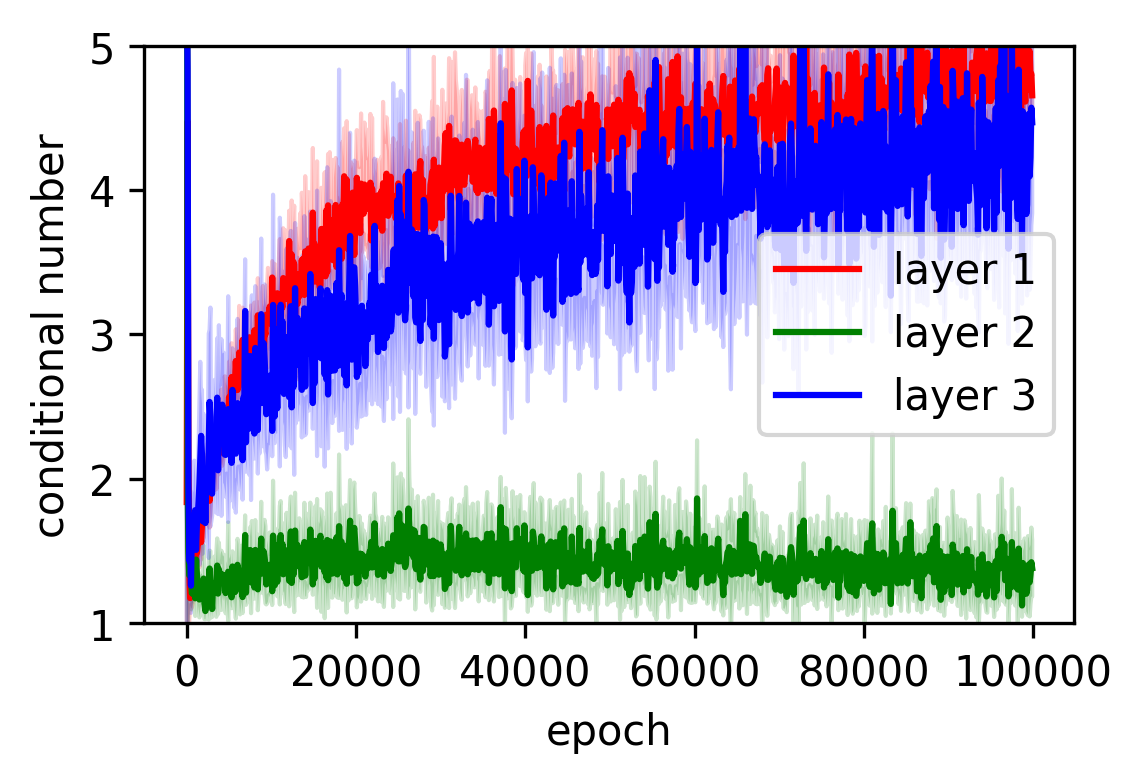}
    \caption{\small The evolution of the condition numbers for three-layer linear networks. \textbf{Left}: Balanced noise in the labels. \textbf{Right}: Imbalanced noise in the labels. }
    \label{fig:condition_number}
\end{figure}

\begin{figure}[t!]
    \centering
    \includegraphics[width=0.3\linewidth]{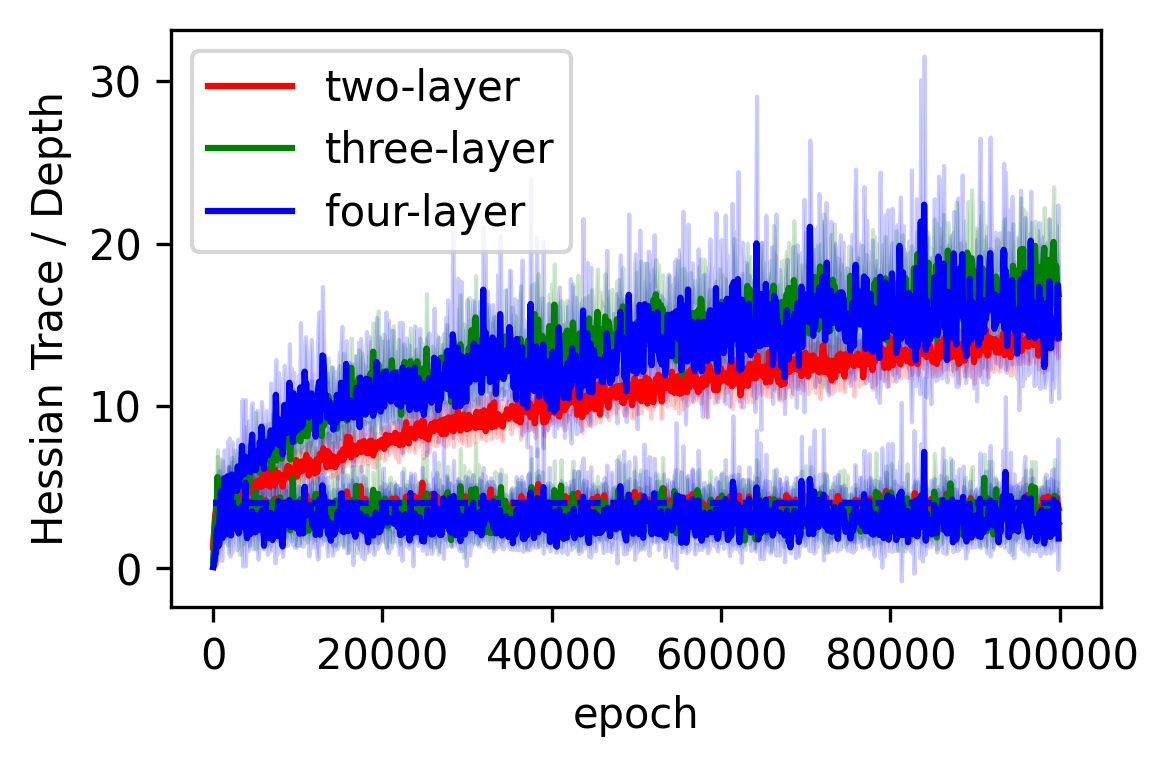}
    \includegraphics[width=0.3\linewidth]{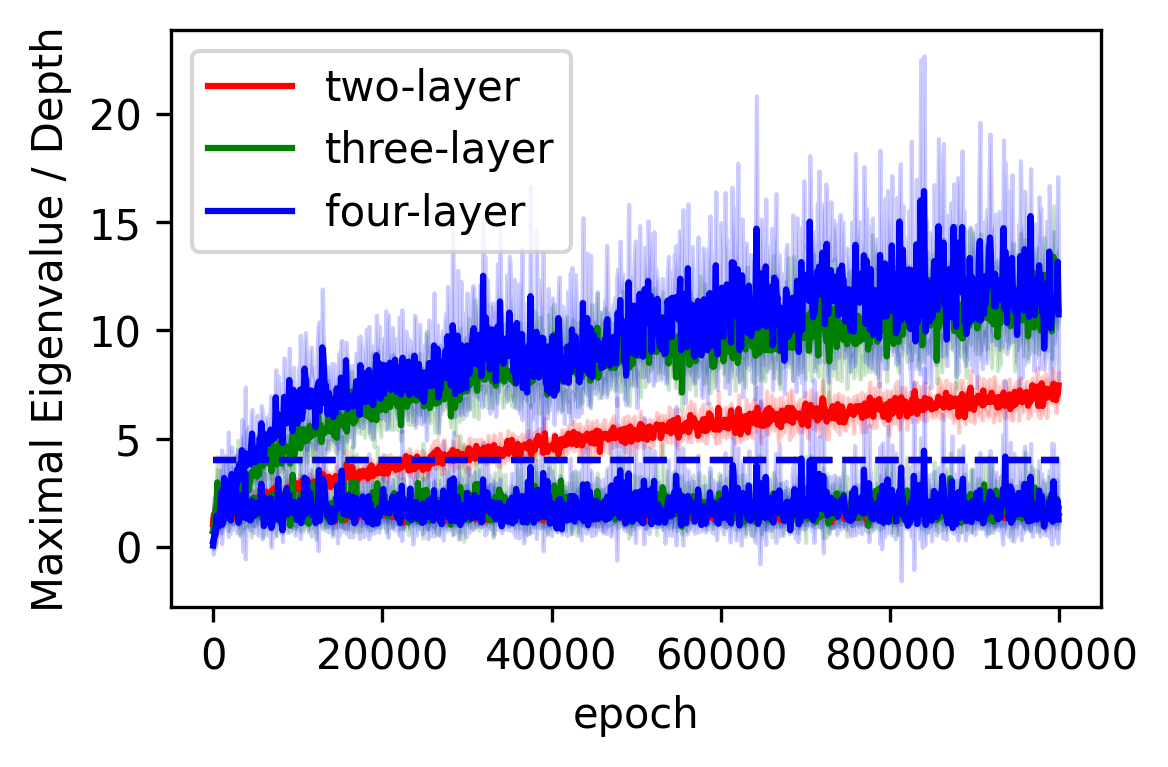}
    \caption{\small The evolution of sharpness for linear networks of various depths. \textbf{Left}: Traces of Hessian. \textbf{Right}: Maximal eigenvalues of Hessian.}
    \label{fig:linear_network_additional}
\end{figure}

\paragraph{Experiments on linear networks}

For the linear network settings, we set $D=3$, $d_x=d_y=2$, and $V=I$, with data drawn from a standard Gaussian distribution ($\Sigma_x=I$). We configure the noise scales as $\sqrt{\Sigma_\epsilon}=\text{diag}(10, 0.1)$ for the non-isotropic case and $\sqrt{\Sigma_\epsilon}=\text{diag}(10, 10)$ for the isotropic case. Training is performed using online SGD for $10^5$ epochs (learning rate $0.01$, batch size $10$). The Hessian is estimated via $200$ random samples, where the trace is calculated exactly and the maximal eigenvalue is computed using the power method. For experiments in Figure \ref{fig:initialization} and Figure \ref{fig:GD}, we maintain these hyperparameters but adjust the initialization strength and learning rate (to $0.001$), or transition to a single-batch gradient descent (GD) setting (batch size $1000$).

To complement the observations in Figure \ref{fig:label_noise}, we evaluate the maximal eigenvalue of the Hessian matrix, as shown in Figure \ref{fig:label_noise_eig}. The resulting curves exhibit consistent trends, further corroborating Corollary \ref{cor4}.

Figure \ref{fig:condition_number} illustrates the condition numbers across layers for a three-layer linear network. Aligning with our analytical solution \eqref{eq:solution}, we observe that balanced noise in the labels leads to small condition numbers across all layers. Conversely, imbalanced noise primarily inflates the condition numbers of the first and last layers, while intermediate layers remain stable. Furthermore, Figure \ref{fig:linear_network_additional} demonstrates that depth acts only as a multiplicative factor for sharpness, as suggested by Corollary \ref{cor1}.

\clearpage

\begin{figure}[t!]
    \centering
    \includegraphics[width=0.31\linewidth]{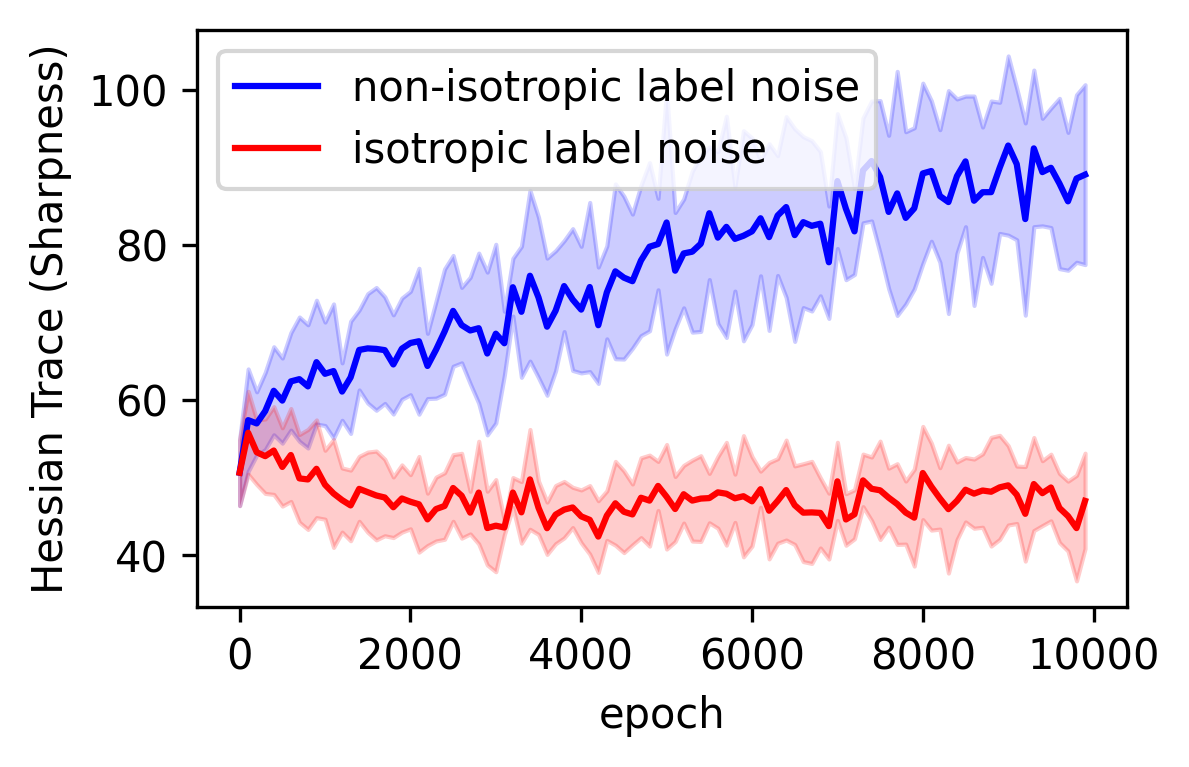}
    \includegraphics[width=0.3\linewidth]{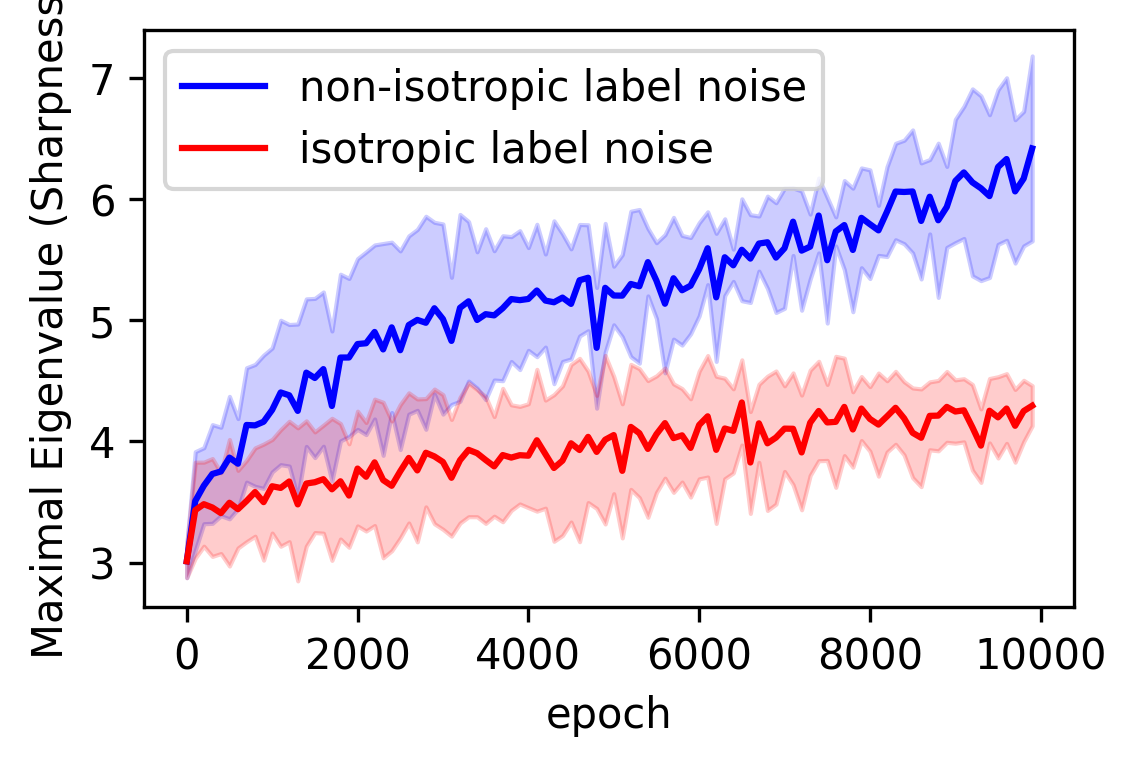}
    \caption{\small Non-isotropic noise leads to progressive sharpening also for transformers. \textbf{Left}: The evolution of the Hessian trace. \textbf{Right}: The evolution of the maximal eigenvalue.}
    \label{fig:transformer}
\end{figure}
\paragraph{Experiments on non-linear settings}
In the ReLU teacher-student setting (in Figures \ref{fig:label_noise} and \ref{fig:label_noise_eig}), we employ a single-hidden-layer teacher-student framework with $d_x=d_y=2$. The teacher and student networks consist of $100$ and $4$ hidden units, respectively, both initialized using the Kaiming method. We use MSE loss and online SGD (learning rate $0.1$, batch size $100$) for $10^5$ epochs. Noise scales are set to $\text{diag}(3, 0.03)$ and $\text{diag}(3, 3)$. Hessian statistics are computed using $200$ samples, with the trace approximated by the Hutchinson estimator.

For the MNIST dataset (in Figures \ref{fig:label_noise} and \ref{fig:label_noise_eig}), a two-layer ReLU network ($128$ hidden units) is trained using cross-entropy loss. noise in the labels is introduced by adding Gaussian noise to one-hot labels, followed by a softmax operation. The non-isotropic noise covariance $\sqrt{\Sigma_\epsilon}$ is set to $\text{diag}(10, \dots, 10, 1, \dots, 1)$ (five entries each), while the isotropic case uses $\sqrt{\Sigma_\epsilon}=10I$. We run SGD for $10^4$ epochs (learning rate $0.01$, batch size $128$) and evaluate the Hessian on a test batch of size $1024$.

To verify the universality of our findings beyond linear networks and MLPs, we extend our experiments to transformers and RNNs in a teacher-student framework. In Figures \ref{fig:sharpness_condition_number} and \ref{fig:transformer}, we employ a simplified Transformer comprising a two-head self-attention layer followed by an MLP with residual connections. The model takes inputs of dimension $8$ and sequence length $4$, with the output dimension matching the input. The hidden dimension of the MLP is set to $100$ for the teacher and $4$ for the student, both initialized via the Kaiming method.

Training is conducted using online SGD (learning rate $0.03$, batch size $300$) for $10^4$ epochs on standard Gaussian data. The Hessian is evaluated on a test set of size $1000$. We configure the noise scales as $\sqrt{\Sigma_\epsilon}=\text{diag}(4, 0.04, \dots, 0.04)$ for the non-isotropic case and $\sqrt{\Sigma_\epsilon}=\text{diag}(4, \dots, 4)$ for the isotropic case. The results in Figure \ref{fig:transformer} confirm that non-isotropic noise in the labels induces the same progressive sharpening phenomenon in transformers as observed in simpler models.

Furthermore, we investigate the specific impact of the noise covariance condition number on sharpness in Figure \ref{fig:sharpness_condition_number}. Maintaining the same experimental setup, we systematically vary the smaller eigenvalues of $\Sigma_\epsilon$ while keeping other hyperparameters fixed. This analysis is also extended to RNNs, where the teacher and student hidden units are set to $100$ and $4$, respectively. For experiments involving cross-entropy loss, an additional softmax layer is applied to the network output. Across all these diverse settings—including different architectures and loss functions—the observed trends remain highly consistent, further validating the robustness of our theoretical predictions.

\end{document}